\journal{Autonomous Robots and Systems}
\newcounter{secctr}
\xpretocmd{\section}{\refstepcounter{secctr}}{}{}
\def\z{ \ensuremath{z} }
\newcommand\W{ \ensuremath{\mathcal{W}} }
\newcommand\SE{ \ensuremath{SE}\xspace}
\DeclareMathOperator{\atan}{atan}
\newcommand\dtau{\ensuremath{\dot{\tau}}\xspace}
\def\Fqq{\ensuremath{\mathcal{F}_{q_I,q_G}}\xspace}
\def\Ips{\ensuremath{\mathcal{\hat{I}}}\xspace}
\newcommand\env{\ensuremath{\mathbf{E}}}
\newcommand\I{\ensuremath{\mathcal{I}}}
\newcommand\SV{\ensuremath{V}}
\DeclareMathOperator{\asin}{asin}
\DeclareMathOperator{\acos}{acos}
\def\x{ \ensuremath{x} }
\def\C{ \ensuremath{\mathcal{C}} }
\def\R{\ensuremath{\mathbb{R}}\xspace}
\def\Robot{\ensuremath{\mathcal{R}}\xspace}
\def\Dt{\ensuremath{\mathcal{\tilde{D}}}\xspace}
\def\F{\ensuremath{\mathcal{F}}\xspace}
\def\F{\ensuremath{\mathcal{F}}\xspace}
\def\K{\ensuremath{\mathcal{K}}\xspace}
\newcounter{todoCtr}
\tikzset{
    -|/.style={
        to path={
            (perpendicular cs: horizontal line through={(\tikztostart)},
                                 vertical line through={(\tikztotarget)})
            % is the same as (\tikztostart -| \tikztotarget)
            % but just to be safe: http://tex.stackexchange.com/a/29781/16595
            -- (\tikztotarget) \tikztonodes
        }
    }
}
\tikzstyle{taskRect}=[draw, color=black!70, fill=black!7, rectangle, rounded
\tikzstyle{borderRect}=[draw, color=red!70, fill=gray!7, rectangle, rounded
\tikzstyle{testBRect}=[draw, color=darkgreen!100, fill=gray!7, rectangle, rounded
\tikzstyle{stdBR}=[draw, color=black, fill=gray!7, rectangle, rounded
\tikzstyle{boundingBox}=[draw, color=red!100, fill=gray!7, rectangle, rounded
\tikzstyle{arrow}=[->, draw, thick]
\tikzstyle{darrow}=[->, draw, ultra thick]
\tikzstyle{uarrow}=[-, draw, ultra thick]
\tikzstyle{linkArrow} = [thick, decoration={markings,mark=at position
\tikzstyle{hlBox}=[draw, color=red!70, fill=black!7, rectangle, rounded
\tikzstyle{nUnit}=[draw, color=black!70, fill=black!7, circle, rounded
\tikzstyle{subspace}=[draw, circle, color=black, fill=black!7, thick, pattern=north east lines, pattern color=black!20, minimum width=1cm, minimum height=0.2cm]
\tikzstyle{endeffectorArrow} = [thick,
\newlength{\hatchspread}
\newlength{\hatchthickness}
\newlength{\hatchshift}
\newcommand{\hatchcolor}{}
\tikzset{hatchspread/.code={\setlength{\hatchspread}{#1}},
         hatchthickness/.code={\setlength{\hatchthickness}{#1}},
         hatchshift/.code={\setlength{\hatchshift}{#1}},% must be >= 0
         hatchcolor/.code={\renewcommand{\hatchcolor}{#1}}}
\tikzset{hatchspread=3pt,
         hatchthickness=0.4pt,
         hatchshift=0pt,% must be >= 0
         hatchcolor=black}
\tikzset{
 ->-/.style={
  decoration={
   markings,
   mark=at position 0.5 with {\arrow{>}{angle 90};}
  },
  postaction={decorate}
 }
}
\tikzset{
 ->>-/.style={
  decoration={
   markings,
   mark=at position 0.55 with {\arrow{>>}{angle 90};}
  },
  postaction={decorate}
 }
}
\tikzset{
   shift left/.style={decorate,decoration={simple line,raise=#1}},
   shift right/.style={decorate,decoration={simple line,raise=-1*#1}},
   node left/.style 2 args={
        decorate,decoration={raise=#1,markings,mark = at
        position 0.5 with {\node[circle, color=black,fill = white] {#2};}}
        },
}
\def\LeftShiftLine[#1][#2][#3][#4]{
    \begin{scope}[>=latex] % redef arrow for dimension lines
        \draw[dashed,|-|] #1 edge[shift left=#3] #2;
        \draw[] #1 edge[node left={#3}{#4}] #2;

    \end{scope}
}
\theoremstyle{definition}
\theoremstyle{plain}
\newtheorem{definition}{Definition}
\newtheorem{theorem}{Theorem}
\newtheorem{lemma}{Lemma}
\newtheorem{example}{Example}
\def\rrt{\texttt{RRT}\xspace}
\def\pdst{\texttt{PDST}\xspace}
\def\kpiece{\texttt{KPIECE}\xspace}
\def\sst{\texttt{SST}\xspace}
\def\irreducible{\texttt{[Irreducible]}\xspace}
\def\RobotLL{\Robot_{L}^N}
\def\rrt{\texttt{RRT}\xspace}
\def\pdst{\texttt{PDST}\xspace}
\def\kpiece{\texttt{KPIECE}\xspace}
\def\sst{\texttt{SST}\xspace}
\def\irreducible{\texttt{[Irreducible]}\xspace}
\def\RobotLL{\Robot_{L}^N}
\def\Pp{P_0}
\def\dP{Q_0}
\def\tl{\bar{\mathbf{\theta}}}
\def\Kcone{\K_{\tl}(p_0)}
\def\Fk{\F_{\kappa_0}}
\def\Pn{P_N}
\def\dPn{Q_N}
\def\Fkn{\F_{\kappa_N}}
\def\sumL{Nl_0}
\def\ti{\theta_{i}}
\def\tii{\theta_{i-1}}
\def\dkn{d_{\kappa_N}}
\def\stl{\sin^2{\tl}}
\def\sstl{\sin{\tl}}
\def\stheta{\theta}
\def\btheta{\boldsymbol{\theta}}
\def\bgamma{\boldsymbol{\gamma}}
\def\x{\mathbf{e}_x}
\def\y{\mathbf{e}_y}
\def\z{\mathbf{e}_z}
\def\xe{\x}
\def\ye{\y}
\def\ze{\z}
\def\ex{\mathbf{e}_1}
\def\ey{\mathbf{e}_2}
\def\ez{\mathbf{e}_3}
\def\tc{s_{last}}
\def\tn{s_{next}}
\def\Rab{\mathbf{R}}
\def\pb{p_I}
\def\pa{p_W}
\def\xl{x_L}
\def\pxy{p_{xy}}
\def\pzx{p_{zx}}
\def\pa{p_W}
\def\Dt{\Delta t}
\def\Ry{\mathbf{R}_Y}
\def\Rz{\mathbf{R}_Z}
\begin{document}
\begin{frontmatter}
%\title{Dimensionality Reduction for Motion Planning using Irreducible Path Spaces}

\title{\LARGE Motion Planning in Irreducible Path Spaces}
\author[aist]{Andreas Orthey\corref{cor1}}
\ead{andreas.orthey@aist.go.jp}
\author[laas]{Olivier Roussel}
\ead{olivier.roussel@laas.fr}
\author[laas]{Olivier Stasse}
\ead{ostasse@laas.fr}
\author[laas]{Michel Ta\"ix}
\ead{michel.taix@laas.fr}

\cortext[cor1]{Corresponding author}
\address[aist]{CNRS-AIST JRL (Joint Robotics Laboratory)\\
National Institute of Advanced Industrial Science and Technology (AIST)\\
Tsukuba Central 2, 1-1-1 Umezono, Ibaraki 305-8568 Japan\\
}
\address[laas]{ CNRS, LAAS,7 av. du Colonel Roche, F-31400,
Toulouse, France, Univ de Toulouse, LAAS, F-31400, Toulouse,
France}

%\thanks{This research has received funding from the European Union Seventh
%Framework Programme (FP7/2007 - 2013) under grant agreement n\textordmasculine\ 611909,
%KoroiBot.}}
\begin{abstract}

  The motion of a mechanical system can be defined as a path through its
  configuration space. Computing such a path has a computational complexity
  scaling exponentially with the dimensionality of the configuration space. We propose to reduce the dimensionality of the configuration
  space by introducing the irreducible path --- a path having a minimal swept
  volume. The paper consists of three parts: In part I, we define the space of
  all irreducible paths and show that planning a path in the irreducible path
  space preserves completeness of any motion planning algorithm. In part II, we
  construct an approximation to the irreducible path space of a serial kinematic
  chain under certain assumptions.  In part III, we conduct motion planning
  using the irreducible path space for a mechanical snake in a turbine
  environment, for a mechanical octopus with eight arms in a pipe system and for
  the sideways motion of a humanoid robot moving through a room with doors and
  through a hole in a wall. We demonstrate that the concept of an irreducible
  path can be applied to any motion planning algorithm taking curvature
  constraints into account. 

\end{abstract}
%%%%%%%%%%%%%%%%%%%%%%%%%%%%%%%%%%%%%%%%%%%%%%%%%%%%%%%%%%%%%%%%%%%%%%%%%%%%%%%%

\begin{keyword}
Motion Planning \sep Irreducible Paths \sep Serial Kinematic Chain \sep Swept Volume
\end{keyword}

\end{frontmatter}

\def\AF{A_{\F}}
\def\AI{A_{\I}}
\section{Introduction\label{sec:introduction}}

Motion planning \citep{lavalle_2006} has been succesfully applied to many
mechanical systems with applications in computer graphics, humanoid robotics or
protein folding. The key idea of motion planning is to define the motion of a
mechanical system as a path through its configuration space. Given two
configurations, the goal of motion planning is to construct a motion planning
algorithm computing a path connecting the two configurations.

Real-world systems like mechanical snakes or humanoid robots have many degrees
of freedom (DoF) and therefore a high-dimensional configuration space. The
higher the dimensionality of the configuration space, the more time a motion
planning algorithm needs to find a solution. In fact, any motion planning
algorithm has a computational complexity scaling exponentially with the
dimensionality of the configuration space \citep{reif_1979}.

A key challenge in motion planning is therefore to reduce the dimensionality of
the configuration space. Dimensionality reduction of configuration spaces has
been addressed by several researchers
\citep{dalibard_2011b,allen_2014,zhang_2009}, but the results only
apply in special cases. In fact, there is no general approach to reduce the
dimensionality of a configuration space in a principled way. 

Our work contributes to this effort by introducing the irreducible path
\citep{orthey_2015a}, a configuration space path having a minimal swept volume
\footnote{The swept volume is the volume occupied by the body of a mechanical system while
moving along a path \citep{himmelstein_2010}}. The space of all those minimal
swept volume paths creates the irreducible path space. 
%In
%Theorem \ref{fundamental-irreducible-theorem} we show that if there exists a
%feasible path, then there exists an irreducible feasible path. 
Our main result is Theorem \ref{thm:complete}, which shows that replacing the
full space of continuous paths with the space of irreducible paths preserves
completeness of any motion planning algorithm. This is advantageous because
computing an irreducible path can often be done in a lower dimensional
configuration space, thereby reducing the computational complexity.

The paper consists of three parts. In Part I we define the irreducible path and
the irreducible path space. We discuss the partitioning of the irreducible path
space under equivalent swept volumes.  We then prove the completeness of any
motion planning algorithm using the irreducible path space in Theorem
\ref{thm:complete}.  We note that those concepts apply to any functional space:
the space of all dynamical feasible paths, all statically stable paths, all
torque constraint paths or all collision-free paths. For sake of simplicity, we
focus here exclusively on collision-free paths.

In Part II, we approximate the irreducible path space of a serial kinematic chain.
The main idea is the following: if the root link moves on a curvature-constraint
path, then all the sublinks can be projected into the swept volume of the
root link. Therefore, we can ignore the sublinks and we can thereby reduce the
dimensionality of the configuration space. This has been visualized in the case
of a serial kinematic chain in the plane in Fig.  \ref{fig:irrconcept}.

In Part III, we apply the reduction of the serial kinematic chain to four
different mechanical systems: an idealized serial kinematic chain on
$\SE(2)\times \R^3$, $\SE(2)\times \R^6$ and $\SE(3)\times \R^{12}$, a
mechanical snake in a turbine environment on $\SE(3) \times \R^{16}$, a
mechanical octopus in a tunnel system on $\SE(3) \times \R^{80}$ and a humanoid
robot moving sideways on $\SE(2) \times \R^{19}$ through a room with doors and
through a hole in a wall.

This work extends previous results in \cite{orthey_2015a}, where we introduced
the irreducible path and applied it to the sideway motion of a humanoid
robot. Section \ref{sec:irrpath} is based on \citep{orthey_2015a}, it has been
revised and the proofs have been simplified. 

\begin{figure*}
  \centering
  \def\ww{0.24} \def\hh{0.25}
  \includegraphics[width=\ww\linewidth,height=\hh\linewidth]
  {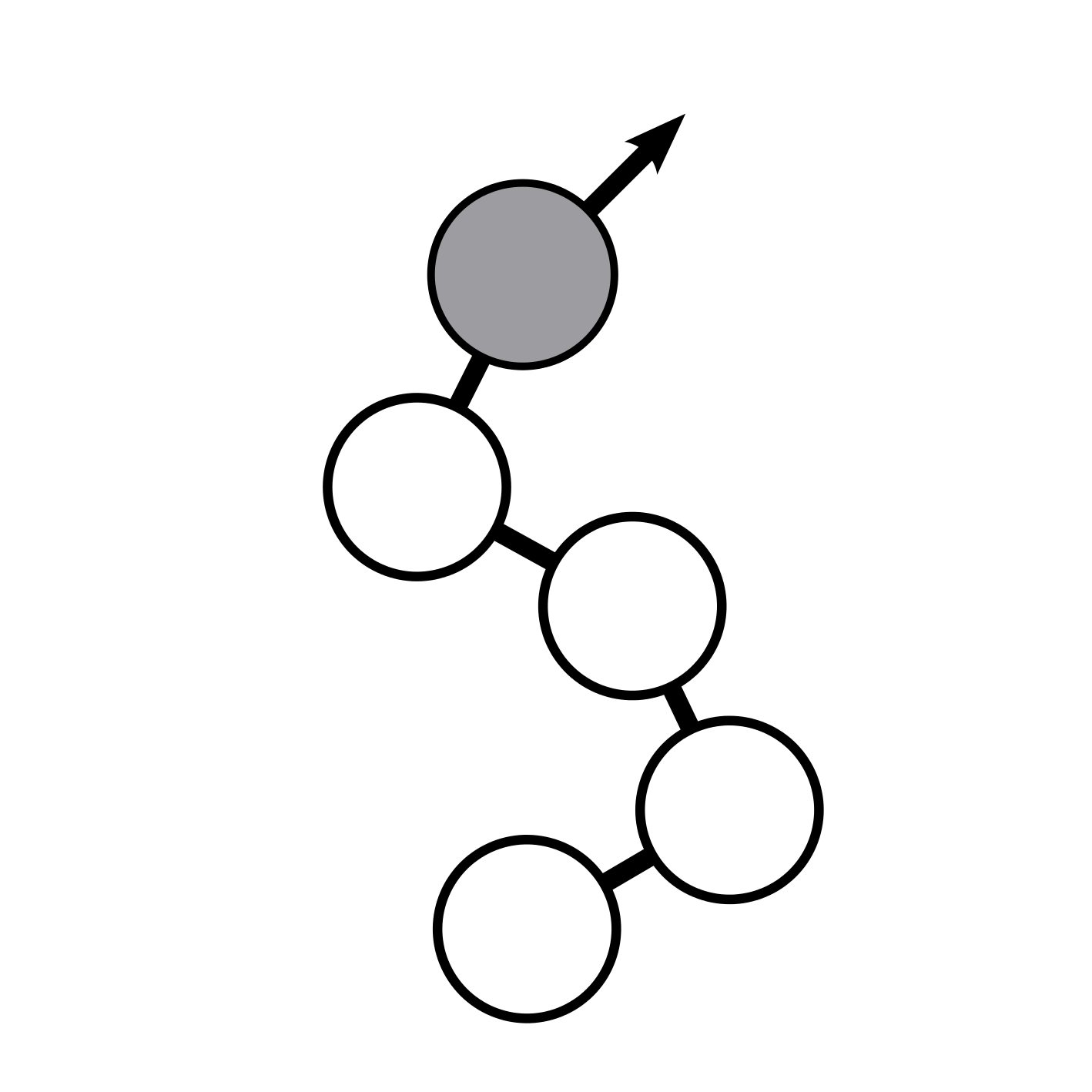}
  \includegraphics[width=\ww\linewidth,height=\hh\linewidth]
  {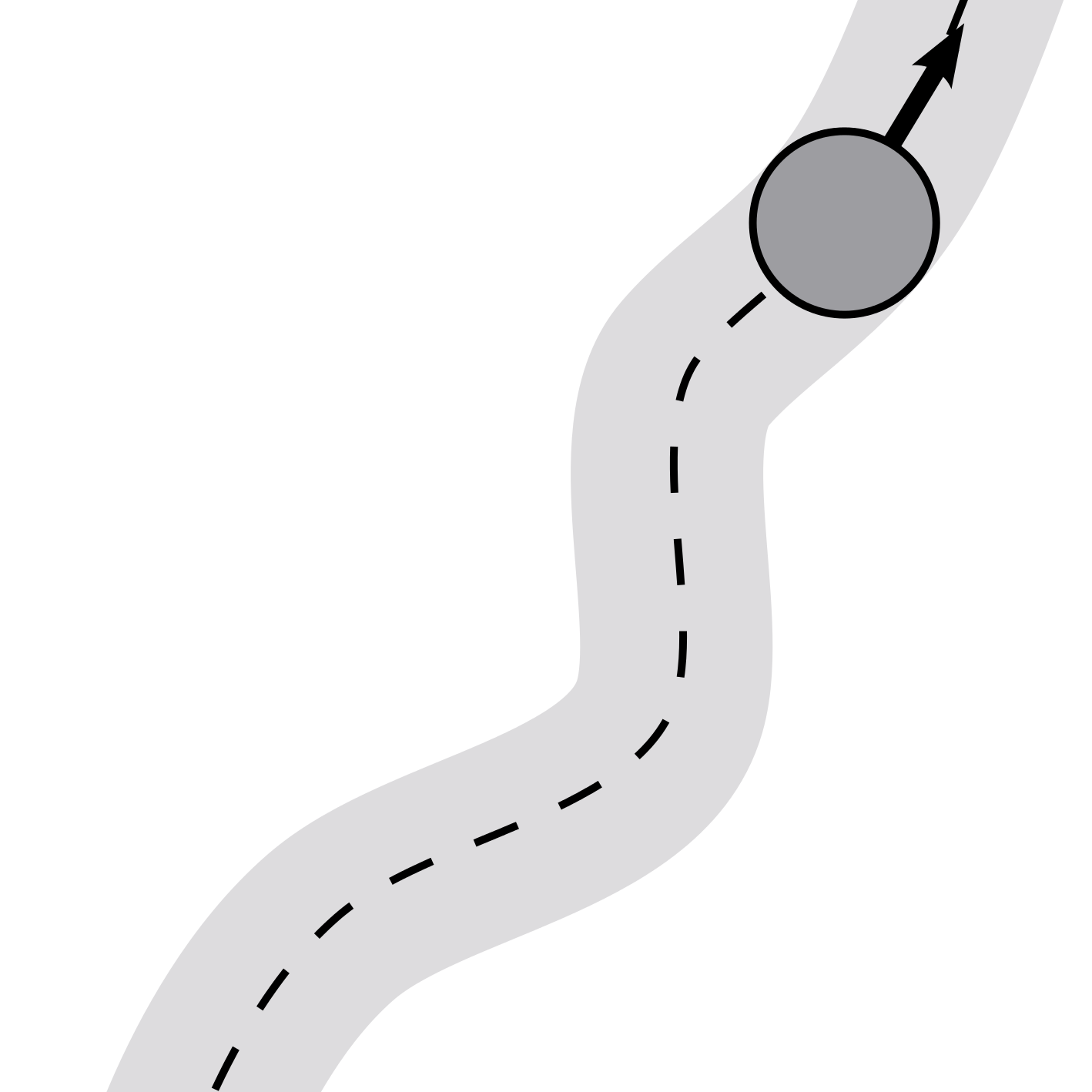}
  \includegraphics[width=\ww\linewidth,height=\hh\linewidth]
  {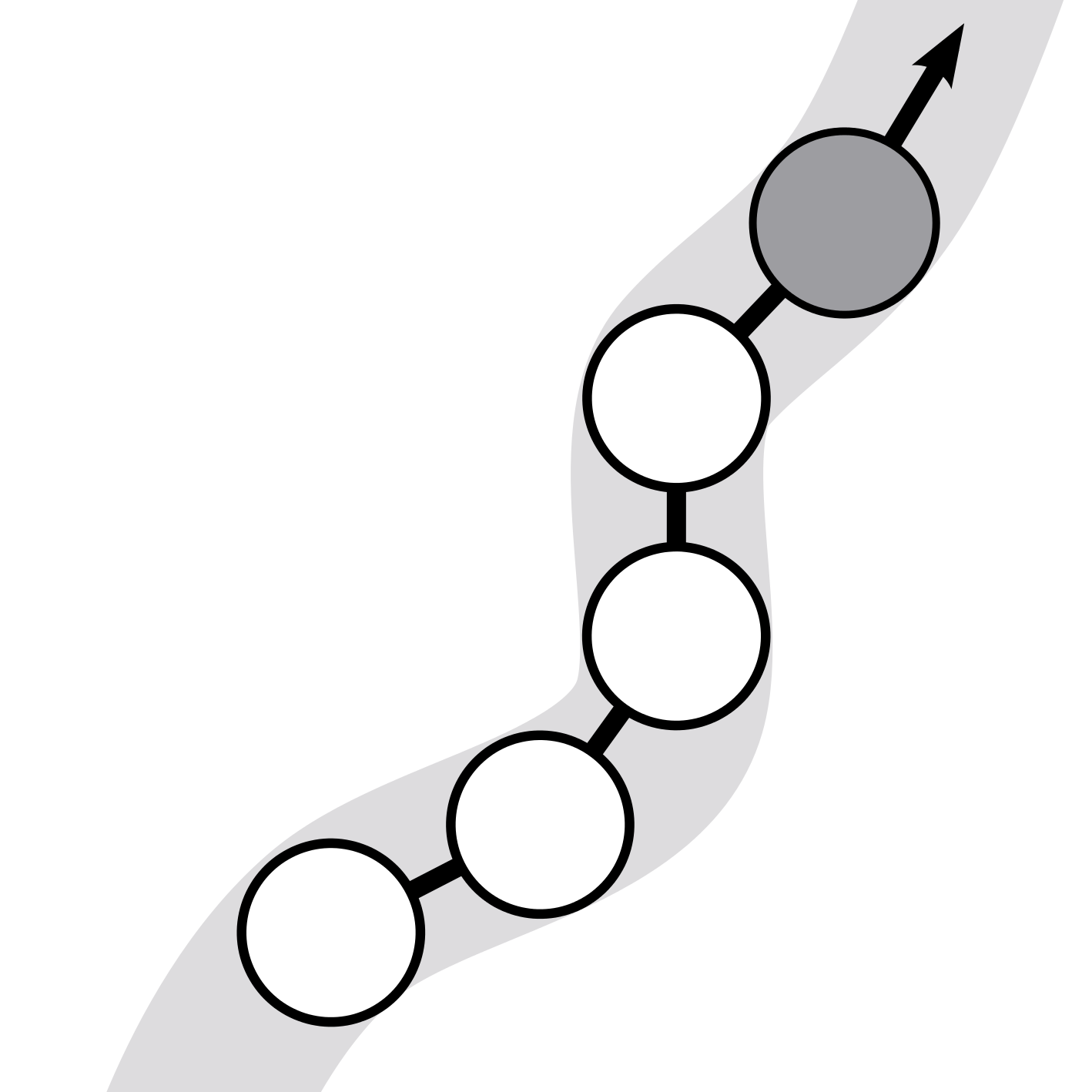}
  \includegraphics[width=\ww\linewidth,height=\hh\linewidth]
  {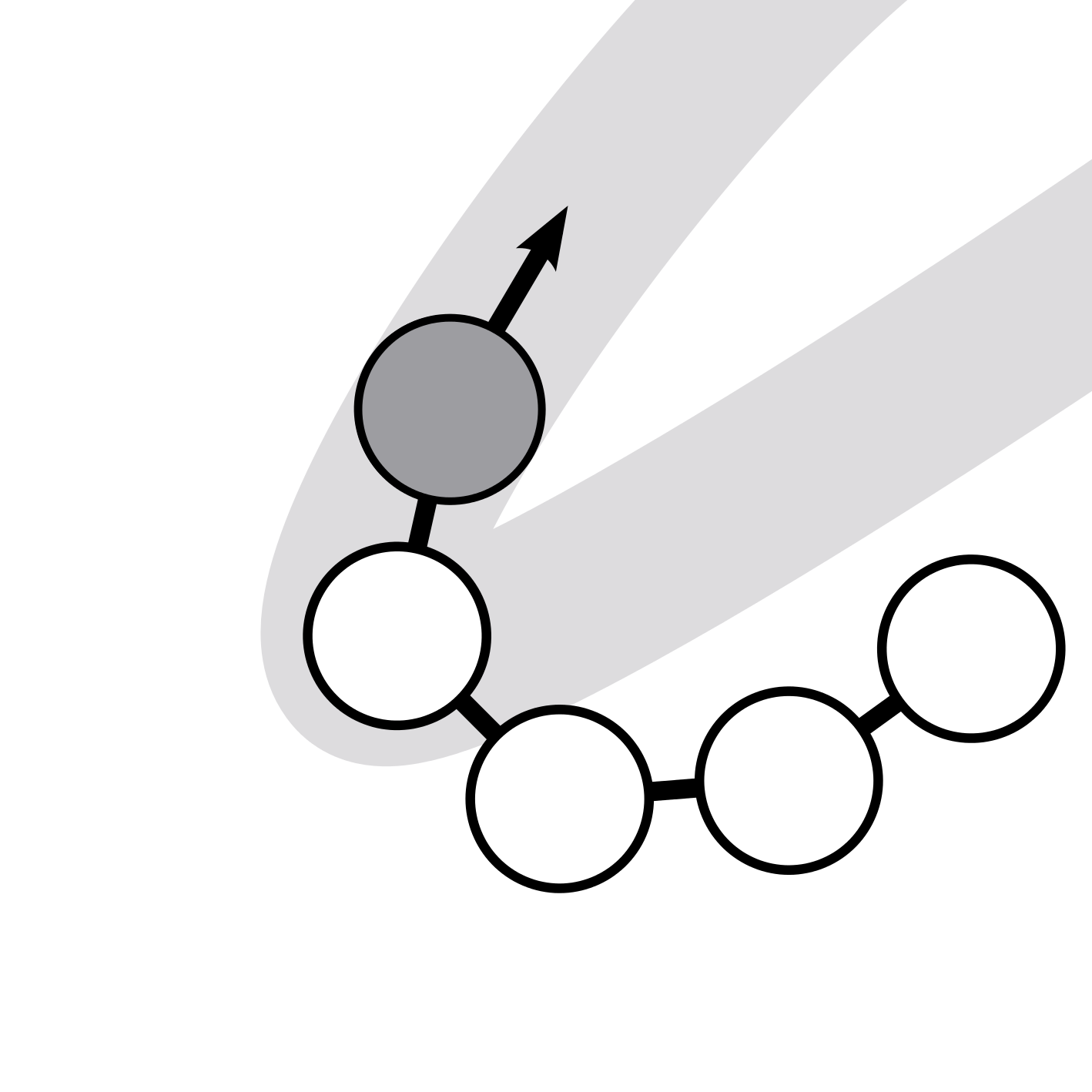}

  \caption{{\bf Left:} A serial kinematic chain with a root link (grey) and four
    sublinks (white). {\bf Middle left:} Root link moves on a curvature
    constraint path and sweeps a volume (lightgrey). {\bf Middle Right:} Our algorithm projects the sublinks
    into the swept volume of the root link. {\bf Right:} If the curvature of the
    root link path is too high, the algorithm will fail to project the sublinks.
\label{fig:irrconcept}}

\end{figure*}

\section{Related Work\label{sec:relatedwork}}

Dimensionality reduction of configuration spaces has been extensively studied in
the motion planning literature. \cite{dalibard_2011b} have used a principal
component analysis (PCA) to locally reduce dimensions of small volume and
thereby bias random sampling. In the context of manipulation planning,
\cite{ciocarlie_2007} and \cite{allen_2014} have introduced the eigengrasp to
identify a low-dimensional representation of grasping movements.
\cite{mahoney_2010} perform a PCA for a high-dimensional cable robot by sampling
deformations. The idea being that many configurations occupy similar volumes in
workspace. \cite{kabul_2007} plan the motion of a cable by first planning a
motion for the head. \cite{salzman_2015} approximate the manifold of
self-collision free configurations of the robot, thereby projecting the planning
problem onto a lower dimensional sub-manifold in the configuration space. 

The approach closest to ours is a subspace decomposition scheme by
\cite{zhang_2009}: an initial path is planned by using only one large subpart of
the robot. This initial path is then deformed to account for the remaining
links. However, there is no justification or guarantee for being complete. 

%Paper "Motion Planning of Human-Like Robots using Constrained Coordination": The
%approach use a decomposition in sub-problem, after computing a initial solution
%for a subpart of the robot, a second step (refine technique) is used to modified
%the initial solution, by consequence the initial solution for the subpart can be
%(after modification) in collision and you don't have garantiee of convergence
%even if a solution exist.

\cite{bereg_2005} seem to be the first to introduce the term
reducibility of motions. They consider sweeping of disks along a planar curve,
whereby the volume of a disk swept along a path is reduced if it is a subset of
the swept volume of the same disk swept along another path. We generalize this
concept to arbitrary configuration spaces.

In Sec. \ref{mpskc} we establish that sublinks of a serial kinematic chain can
be projected into the swept volume if the root link moves on a curvature
constrained path. Curvature constrained paths are one of the central objects of
study in differential geometry \citep{banchoff_2015}. Our work builds upon
work by \cite{ahn_2012} who compute the reachable regions for
curvature-constraint motions inside convex polygons. A generalization of these
ideas to 3D has been investigated by \cite{guha_2005} who discuss curvature and
torsion constraints on space curves in the context of data point approximation. 

Our applications consider motion planning for a mechanical snake, a mechanical
octopus and a humanoid robot.

The mechanism and locomotion system for snake robots have been studied by
\cite{HiroseY09}. Path planning for snake robots has been investigated in
relatively few papers, some of whom are classical approaches using numerical
potential fields \citep{Conkur-2008}, genetic algorithms \citep{LIU-2004} or
Generalized Voronoi Graphs \citep{Choset-1998}. The idea of dimensionality
reduction for snake robots has been studied by \cite{Rollinson-2011}, who define
a frame consistent with the overall shape of the robot in all configurations.
\cite{Cappo-2014} plan a path only for a portion of the snake robot. 

Octopus robots have been built by \cite{sfakiotakis_2014} and
\cite{cianchetti_2015}, and its locomotion behavior has been intensively
investigated by \cite{calisti_2015}. However, there has been no demonstration of
motion planning for an octopus robot. We concentrate here on motion planning
using jet propulsion in narrow environments like a system of pipes. 

Motion planning for humanoid robots is a well studied field \citep{harada_2010a}.
Applications range from manipulation planning in kitchen environments
\citep{vahrenkamp_2009}, contact planning in constrained environments
\citep{escande_2013}\citep{hauser_2006}\citep{deits_2014} to ladder climbing tasks
\citep{zhang_2014}. Since general multi-contact planning has a high run-time,
researchers have tried to decompose the problem by first planning for simple
geometrical shapes. A common approach is first to plan for a sliding box on a
floor, then generate footsteps along the box path
\citep{dalibard_2011b}\citep{elkhoury_2013}. Such an approach does not work in the
environments we consider, and our approach can be seen as a generalization of
the decomposition to include the original geometry of the robot.

%%%%%%%%%%%%%%%%%%%%%%%%%%%%%%%%%%%%%%%%%%%%%%%%%%%%%%%%%%%%%%%%%%%%%%%%%%%%%%%%%
%PART I
\begin{figure}

\newlength{\picwidth}
\setlength{\picwidth}{0.8\linewidth}
\centering

$\underbrace{
\includegraphics[width=0.3\picwidth,height=0.6\picwidth]{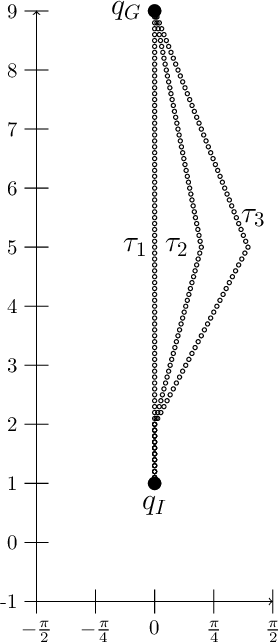}
}_{\C=\R \times [-\frac{\pi}{2},\frac{\pi}{2}]}
\underbrace{
\underbrace{
\includegraphics[width=0.15\picwidth]{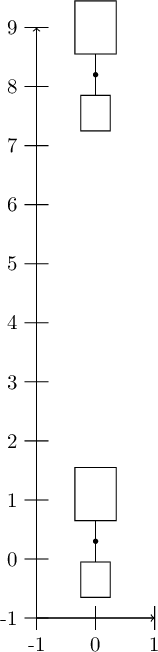}
}_{\SV(q_I),\SV(q_G)}
\underbrace{
\includegraphics[width=0.15\picwidth]{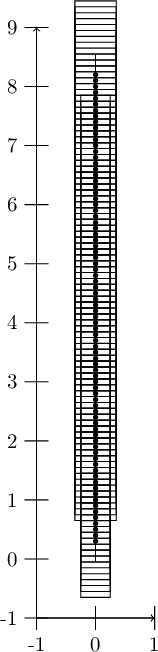}
}_{\SV(\tau_1)}
\underbrace{
\includegraphics[width=0.15\picwidth]{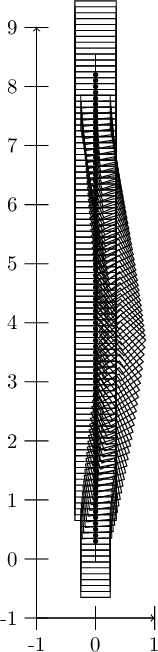}
}_{\SV(\tau_2)}
\underbrace{
\includegraphics[width=0.15\picwidth]{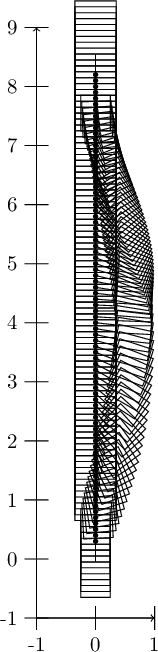}
}_{\SV(\tau_3)}
}_{\text{Workspace } \W = \R^2}$

\caption{Explanatory example of irreducible paths for a $2$-link, $2$-dof
robot, which can move along the $y$-axis, and which has one rotational
joint between its two links, such that its configuration space is $\C =
\R \times [-\frac{\pi}{2},\frac{\pi}{2}]$. {\bf{Left.}} Three
configuration space paths $\tau_1,\tau_2,\tau_3$ with
$\tau_1(0)=\tau_2(0)=\tau_3(0)=q_I$,
$\tau_1(1)=\tau_2(1)=\tau_3(1)=q_G$.  {\bf{Right.}} The workspace volume
of the starting configurations $q_I$, $q_G$, and the swept volume of the
three paths, whereby we have that $\SV(\tau_1) \subset
\SV(\tau_2)$ and $\SV(\tau_1) \subset \SV(\tau_3)$, i.e.  $\tau_2$ and $\tau_3$
are reducible by $\tau_1$, and $\tau_1$ is in fact irreducible. Adapted from
\cite{orthey_2015a}.
\label{irreducible-explanation}}

\end{figure}

%%%%%%%%%%%%%%%%%%%%%%%%%%%%%%%%%%%%%%%%%%%%%%%%%%%%%%%%%%%%%%%%%%%%%%%%%%%%%%%
\section{Irreducible Paths\label{sec:irrpath}}
%%%%%%%%%%%%%%%%%%%%%%%%%%%%%%%%%%%%%%%%%%%%%%%%%%%%%%%%%%%%%%%%%%%%%%%%%%%%%%%

The irreducible path is a path of minimal swept volume \cite{orthey_2015a}. In
this section we define the irreducible path space, we discuss why the
irreducible path space is important (Sec. \ref{sec:irrfeasible}), and we
investigate the internal structure in Sec. \ref{sec:irrstructure}. Then we prove
completeness (Sec.  \ref{sec:irrcompleteness}: If a motion planning algorithm is
complete using all paths, then it is complete using only irreducible paths.
Finally, we discuss generalizations to dynamically feasible paths in Sec.
\ref{sec:irrdynamics}.

Let $\Robot$ be a robot and $\C$ its configuration space, the space of all transformations applicable to
$\Robot$ \citep{lavalle_2006}. Let further $q_I \in \C$ be the initial
configuration and $q_G \in \C$ be the goal
configuration.

A motion planning algorithm needs to compute a continuous path between $q_I$ and
$q_G$. The solution space is therefore defined in terms of functional
spaces \citep{farber_2003,hatton_2011}. We first define the space of all paths in $\C$
as

\begin{equation}
        \begin{aligned}
          \Phi = \{ \phi: [0,1] \rightarrow \C |\ \phi\ \text{continuous}\}
        \end{aligned}
\end{equation}

\noindent We denote the path space between $q_I$ and $q_G$ as

\begin{equation}
        \begin{aligned}
          \Fqq = \{ \tau \in \Phi\ |\ \tau(0) = q_I, \tau(1) = q_G \} \label{eq:pathspace}
        \end{aligned}
\end{equation}

\noindent For the purpose of this paper we will abbreviate $\F = \Fqq$ assuming that some
$q_I,q_G \in \C$ have been choosen. $\F$ will be called the (full) path space.

If the robot $\Robot$ follows a path $\tau \in \F$, the body of the robot will
sweep a volume. We will denote this swept volume by $\SV(\tau)$. We then define
an irreducible path as

\begin{definition}[Irreducible Path]

A path $\tau' \in \F$ is called reducible by $\tau$, if there exist $\tau \in
\F$ such that $\SV(\tau) \subset \SV(\tau')$. Otherwise $\tau'$ is called
\emph{irreducible}. \label{def:irrpath}

\end{definition}

All irreducible paths define the irreducible path space.

\begin{definition}[Irreducible Path Space]

The space of all irreducible configuration space paths is

\begin{equation}
  \begin{aligned}
    \Ips = \{\tau \in \F\ |\ \tau \text{ is irreducible}\}
  \end{aligned}
\end{equation}
\end{definition}

\begin{example}{[2-dof robot]}

In Fig. \ref{irreducible-explanation} we consider a 2-link 2-dof robot, which
can move along a straight line and which can rotate its second link around a
pivot point. In the second column, the robot is shown in the workspace, once for
its initial configuration $q_I$, once for its goal configuration $q_G$. In the
first column, we show three configuration space paths $\tau_1,\tau_2,\tau_3$
connecting $q_I$ to $q_G$. On the right we show the corresponding swept volumes
in workspace for each path. Applying the definition of irreducibility, we have
that $\tau_2$ and $\tau_3$ are reducible by $\tau_1$, while $\tau_1$ is irreducible.

\end{example}

%%%%%%%%%%%%%%%%%%%%%%%%%%%%%%%%%%%%%%%%%%%%%%%%%%%%%%%%%%%%%%%%%%%%%%%%%%%%%%%
\subsection{Feasibility of Irreducible Path Space\label{sec:irrfeasible}}
%%%%%%%%%%%%%%%%%%%%%%%%%%%%%%%%%%%%%%%%%%%%%%%%%%%%%%%%%%%%%%%%%%%%%%%%%%%%%%%

The importance of the irreducible path space comes from the following claim:
If every irreducible path is infeasible, then all paths are infeasible. We prove
this claim in Theorem \ref{fundamental-irreducible-theorem}.

Let us denote by $\env$ the environment, the set of obstacle regions in $\R^3$
\citep{lavalle_2006}. We say that a path $\tau \in \F$ is called feasible in a
given environment $\env$, if $\SV(\tau) \cap \env = \emptyset$. 

\begin{theorem}
\label{thm:feasible}

Let $\tau \in \Ips,\tau' \in \F$ be such that $\SV(\tau) \subset \SV(\tau')$, i.e.
$\tau'$ is reducible by $\tau$.

\begin{center}
        (1) If $\tau$ is infeasible $\Rightarrow$ $\tau'$ is infeasible\\
        (2) If $\tau'$ is feasible $\Rightarrow$ $\tau$ is feasible
\end{center}

\end{theorem}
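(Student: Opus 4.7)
The plan is that both parts follow almost directly from the definition of feasibility stated just before the theorem and from the set-inclusion hypothesis $\SV(\tau) \subset \SV(\tau')$. Crucially, the assumption $\tau \in \Ips$ is not actually needed for the implication itself; it only fixes the context. Moreover, statement $(2)$ is the logical contrapositive of $(1)$, so the work reduces essentially to a single set-theoretic observation: feasibility depends only on the swept volume, and a containment of swept volumes is therefore monotone with respect to feasibility.

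For part $(1)$, I would unfold the definition. The path $\tau$ being infeasible means $\SV(\tau) \cap \env \neq \emptyset$, so I can pick a witness point $p \in \SV(\tau) \cap \env$. By the hypothesis $\SV(\tau) \subset \SV(\tau')$, that same point $p$ lies in $\SV(\tau')$, and since $p \in \env$ we obtain $p \in \SV(\tau') \cap \env$. Therefore $\SV(\tau') \cap \env \neq \emptyset$, which by definition means $\tau'$ is infeasible.

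Part $(2)$ then follows as the contrapositive of $(1)$: if $\tau$ were infeasible, part $(1)$ would force $\tau'$ to be infeasible as well, contradicting the hypothesis. Equivalently, one can argue directly: $\SV(\tau') \cap \env = \emptyset$ combined with $\SV(\tau) \subset \SV(\tau')$ immediately yields $\SV(\tau) \cap \env \subset \SV(\tau') \cap \env = \emptyset$, so $\tau$ is feasible.

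There is no substantive obstacle here; the only point worth emphasizing in the write-up is that collision-freeness has been defined purely through the swept volume $\SV(\cdot)$, which is exactly what makes feasibility monotone under the inclusion order on swept volumes. The irreducibility assumption on $\tau$ will become relevant only later (for instance in Theorem~\ref{thm:complete}), not in the elementary monotonicity statement proved here.
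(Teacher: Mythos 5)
Your proof is correct and follows essentially the same route as the paper's: part (1) via a witness point in $\SV(\tau) \cap \env$ transported into $\SV(\tau')$ by the inclusion, and part (2) as the direct set-theoretic consequence (equivalently the contrapositive). Your observation that the hypothesis $\tau \in \Ips$ plays no role in the argument is also accurate.
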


\begin{proof}
        Let $S=\SV(\tau)$ and $S'=\SV(\tau')$. 
        %$S$ is feasible if $S \cap \env =
        %\emptyset$, whereby $\env$ is the environment. \newline
        (1) Let $S \cap \env \neq \emptyset$, then there exists $v \in S \cap \env$. 
        Since $S \subset S'$, $v$ has to be
        in $S'$. But $v$ is also in $\env$, such that $S' \cap \env$ has to
        contain at least $v$ and is therefore not empty.\newline
        (2) Let $S' \cap \env = \emptyset$. Since $S
        \subset S'$, it follows that $S \cap \env = \emptyset$, which shows that $\tau$ is feasible.
\end{proof}

We will say that a path space $\F$ is feasible, if there exists at least one
feasible $\tau \in \F$. If there is no feasible $\tau \in \F$, then we say that
the space itself is \emph{infeasible}. By Theorem \ref{thm:feasible} it follows that

\begin{theorem}
If $\Ips$ is infeasible, then $\F$ is infeasible.
\label{fundamental-irreducible-theorem}
\end{theorem}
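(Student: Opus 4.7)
My plan is to prove the contrapositive: if $\F$ is feasible, then $\Ips$ is feasible. Assume some $\tau' \in \F$ satisfies $\SV(\tau') \cap \env = \emptyset$; the goal is to produce a feasible irreducible path. The cleanest route is to find an irreducible $\tau^* \in \F$ with $\SV(\tau^*) \subseteq \SV(\tau')$: once in hand, the inclusion together with $\SV(\tau') \cap \env = \emptyset$ forces $\SV(\tau^*) \cap \env = \emptyset$, which is exactly the content of Theorem~\ref{thm:feasible}(2) applied to $\tau^* \in \Ips$ and $\tau' \in \F$. Hence $\tau^*$ is feasible and lies in $\Ips$, witnessing feasibility of $\Ips$.

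To produce $\tau^*$, I would consider the set $\mathcal{R}(\tau') = \{\sigma \in \F : \SV(\sigma) \subseteq \SV(\tau')\}$, partially ordered by swept-volume inclusion. This set is non-empty since it contains $\tau'$. The key observation is that any element of $\mathcal{R}(\tau')$ minimal with respect to swept-volume inclusion is automatically irreducible in $\F$: if some $\sigma \in \F$ had $\SV(\sigma) \subsetneq \SV(\tau^*)$, then by transitivity $\SV(\sigma) \subsetneq \SV(\tau^*) \subseteq \SV(\tau')$, placing $\sigma$ in $\mathcal{R}(\tau')$ strictly below $\tau^*$ and contradicting minimality. So $\tau^* \in \Ips$, and the feasibility conclusion above closes the argument. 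Equivalently, one could phrase the result as the contrapositive of Theorem~\ref{thm:feasible}(1): if every $\tau \in \Ips$ is infeasible, and every $\tau' \in \F$ either equals or is reduced by some $\tau \in \Ips$, then every $\tau' \in \F$ is infeasible.

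The main technical obstacle is the existence of the minimal element $\tau^*$. Swept-volume inclusion is only a partial order on $\mathcal{R}(\tau')$, so a naive descending sequence need not terminate, and the natural tool is Zorn's lemma. The non-trivial hypothesis to verify is that every totally ordered chain $\{\sigma_\alpha\} \subseteq \mathcal{R}(\tau')$ admits a lower bound inside $\mathcal{R}(\tau')$: concretely, one must exhibit some $\sigma_\infty \in \F$ with $\SV(\sigma_\infty) \subseteq \bigcap_\alpha \SV(\sigma_\alpha)$. The standard route is a compactness argument on path space, for example restricting attention to a uniformly Lipschitz subfamily of $\F$, extracting a uniformly convergent subsequence via Arz\`ela--Ascoli, and appealing to an upper-semicontinuity property of the swept-volume map under uniform convergence of paths. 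Modulo this attainment/compactness claim, the remainder of the argument is an elementary contrapositive bookkeeping built entirely on Theorem~\ref{thm:feasible}; I expect the attainment step to carry essentially all of the mathematical weight.
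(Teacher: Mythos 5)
Your argument is, at bottom, the contrapositive of the paper's own proof, and both rest on the same dichotomy: every path in $\F$ is either itself irreducible or is reduced by some irreducible path. The paper argues forward (take $\tau\in\F$; if some $\tau'\in\Ips$ reduces it, apply Theorem~\ref{thm:feasible}; otherwise declare $\tau$ irreducible ``by definition''), while you argue backward from a feasible $\tau'\in\F$ to a feasible irreducible $\tau^*$ beneath it. The bookkeeping in both directions is the same appeal to Theorem~\ref{thm:feasible}, and that part of your proposal is correct.

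The substantive issue is the attainment step you yourself flag, and it is a genuine gap --- one that, to your credit, you have made explicit where the paper leaves it silent. Definition~\ref{def:irrpath} calls $\tau$ irreducible only when \emph{no} path in $\F$ reduces it, so the paper's case (2) (``no irreducible path reduces $\tau$, hence $\tau\in\Ips$'') tacitly assumes that a path reduced by something is reduced by something irreducible; a priori there could be an infinite strictly descending chain of swept volumes with no minimal element, in which case both the paper's case split and your $\tau^*$ fail. Your Zorn's lemma route is the natural repair, but as written it does not close: the lower bound for a chain $\{\sigma_\alpha\}$ must be a path $\sigma_\infty\in\F$ with $\SV(\sigma_\infty)\subseteq\bigcap_\alpha\SV(\sigma_\alpha)$, and an intersection of swept volumes need not be (or contain) the swept volume of any path; the uniform-Lipschitz restriction, Arz\`ela--Ascoli extraction, and semicontinuity of $\SV$ that you invoke are not available at the level of generality at which $\F$ and $\SV$ are defined here, and restricting to a Lipschitz subfamily changes the space whose minimal elements you are seeking. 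So the one step you defer carries all the weight, and neither your sketch nor the paper establishes it; the theorem as used really requires an additional hypothesis (well-foundedness of the reduction order, or existence of minimal swept volumes below every path) that should be stated explicitly.
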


\begin{proof}

        Let $\tau \in \F$. There are two cases: either (1) there exists a $\tau'
        \in \Ips$ such that $\SV(\tau')\subset \SV(\tau)$. Then $\tau$ is
        infeasible by Theorem \ref{thm:feasible}. Or (2) there is no $\tau' \in
        \Ips$ such that $\SV(\tau') \subset \SV(\tau)$. Then $\tau$ is by
        definition in $\Ips$ and therefore infeasible.

\end{proof}

%%%%%%%%%%%%%%%%%%%%%%%%%%%%%%%%%%%%%%%%%%%%%%%%%%%%%%%%%%%%%%%%%%%%%%%%%%%%%%%
\subsection{Structure of Irreducible Path Space\label{sec:irrstructure}}
%%%%%%%%%%%%%%%%%%%%%%%%%%%%%%%%%%%%%%%%%%%%%%%%%%%%%%%%%%%%%%%%%%%%%%%%%%%%%%%

The irreducible path space $\Ips$ can be partitioned
into equivalence classes of paths with equivalent swept
volumes.

\begin{definition}[Swept Volume Equivalence]
  Two irreducible paths $\tau,\tau' \in \Ips$ are swept-volume equivalent $\tau
  \simeq \tau'$
  if $\SV(\tau) = \SV(\tau')$
\end{definition}

This equivalence relation gives rise to equivalence classes of swept-volume
equivalent trajectories. This includes all injective and continuous
time-reparameterizations $s: [0,T] \rightarrow [0,1]$ of a path.

Those equivalence classes partition the irreducible path space into a union of
disjoint path spaces. This is made precise by taking the quotient space
\citep{munkres_2000} of $\Ips$ as

\begin{equation}
  \begin{aligned}
    \I = \Ips /{\simeq}\label{eq:quotientspace}
  \end{aligned}
\end{equation}

i.e. all swept-volume equivalent paths in $\Ips$ are assigned to exactly one
path in the quotient space $\I$.

%%%%%%%%%%%%%%%%%%%%%%%%%%%%%%%%%%%%%%%%%%%%%%%%%%%%%%%%%%%%%%%%%%%%%%%%%%%%%%%
\subsection{Completeness of Irreducible Path Space\label{sec:irrcompleteness}}
%%%%%%%%%%%%%%%%%%%%%%%%%%%%%%%%%%%%%%%%%%%%%%%%%%%%%%%%%%%%%%%%%%%%%%%%%%%%%%%

We say that a motion planning algorithm is complete in $\F$ if it finds
a feasible path in $\F$ if one exists or correctly reports that none
exists. Imagine replacing $\F$ by $\I$. We claim that if a motion planning
algorithm is complete in $\F$ then it is complete in $\I$ and vice versa. 

\begin{theorem}

  A motion planning algorithm is complete in $\I$ iff it is complete in $\F$
  \label{thm:complete}

\end{theorem}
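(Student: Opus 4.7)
The plan is to reduce the theorem to the single assertion that $\F$ contains a feasible path if and only if $\I$ does. Since completeness of an algorithm in a space $X$ is just the ability to correctly decide the existential question ``does $X$ contain a feasible path?'' (and to return a witness when the answer is yes), once the feasibility questions are shown to be logically equivalent, the two completeness notions coincide. So I would open the proof by stating this reduction and then establishing the two directions of the feasibility equivalence separately.

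For the direction $\I$ feasible $\Rightarrow$ $\F$ feasible, I would simply pick any feasible equivalence class $[\tau] \in \I$, lift it to a representative $\tau \in \Ips \subseteq \F$, and note that feasibility depends only on $\SV(\tau)$, which is preserved by $\simeq$ and by the inclusion $\Ips \subseteq \F$; this direction is essentially definitional. The nontrivial direction, $\F$ feasible $\Rightarrow$ $\I$ feasible, is what Theorem \ref{fundamental-irreducible-theorem} already supplies in contrapositive form: if $\I$ is infeasible, then $\Ips$ is infeasible (because $\I = \Ips/{\simeq}$ and elements of the same equivalence class share a swept volume, hence a feasibility status), and then Theorem \ref{fundamental-irreducible-theorem} concludes that $\F$ is infeasible. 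Combined, these give the feasibility equivalence.

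Finally I would address the ``witness'' side of completeness to make the iff genuinely algorithmic and not merely a statement about existence. If an algorithm complete in $\I$ returns a feasible $\tau \in \Ips$ (or the class $[\tau]$), then $\tau$ is automatically a feasible witness in $\F$ since $\Ips \subseteq \F$. Conversely, if an algorithm complete in $\F$ returns a feasible $\tau' \in \F$, the case analysis used in the proof of Theorem \ref{fundamental-irreducible-theorem} gives an irreducible $\tau \in \Ips$ with $\SV(\tau) \subseteq \SV(\tau')$; by Theorem \ref{thm:feasible}(2), $\tau$ is feasible, and its equivalence class $[\tau] \in \I$ is the desired witness. This covers both the ``YES with path'' and the ``NO'' outputs in both directions.

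The substantive content is all borrowed from Theorem \ref{thm:feasible} and Theorem \ref{fundamental-irreducible-theorem}; the only thing I expect to be mildly delicate is the precise formulation of what it means for an algorithm to ``return'' a path in the quotient $\I$ versus a path in $\F$, and the appeal to the same irreducibility-reduction step used inside the proof of Theorem \ref{fundamental-irreducible-theorem} to produce an irreducible witness from a feasible but possibly reducible one. That step implicitly assumes that every feasible path in $\F$ is either irreducible or dominated by some irreducible path, which is exactly the dichotomy already invoked in the preceding proof, so no new machinery is required.
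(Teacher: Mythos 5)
Your proposal is correct and follows essentially the same route as the paper: the trivial direction is by the inclusion $\Ips \subseteq \F$, and the substantive direction is exactly Theorem \ref{fundamental-irreducible-theorem} (used in contrapositive form), with the paper merely spelling this out as four implications of which two pairs are contrapositives of each other. Your added remarks on extracting an explicit witness and on the quotient by $\simeq$ go slightly beyond what the paper writes but introduce no new mathematical content.
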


\begin{proof}

  %By definition, a motion planning algorithm is complete in $\F$, if it can find
  %a feasible path in $\F$ if one exists, or correctly reports that no such path
  %exists. Equivalently, a motion planning algorithm is complete if it can decide
  %if the path space $\F$ is feasible or infeasible. This is equivalent to
  %deciding if the irreducible path space $\I$ is feasible or infeasible: 

  To prove equivalence, we need to prove four statements. 
  \begin{enumerate}[(1)]
    \item $\I$ infeasible $\Rightarrow$ $\F$ infeasible,
    \item $\I$ feasible $\Rightarrow$ $\F$ feasible,
    \item $\F$ infeasible $\Rightarrow$ $\I$ infeasible and 
    \item $\F$ feasible $\Rightarrow$ $\I$ feasible.  
  \end{enumerate}
  
  (1) is true by Theorem
  \ref{fundamental-irreducible-theorem}. Statements (2) and (3) are true by inclusion and
  (4) is true by contraposition of Theorem
  \ref{fundamental-irreducible-theorem}.
        
\end{proof}

In light of Theorem \ref{thm:complete} we call the reduction from $\F$ to $\I$
a completeness-preserving reduction.

\begin{example}{[Completeness-preserving reduction]}

Let us demonstrate the completeness property by the example from Fig.
\ref{irreducible-explanation}. Imagine that $\F$ contains only
$\tau_1,\tau_2,\tau_3$ and $\I$ contains $\tau_1$. Imagine further that
$\tau_1$ is infeasible because it is in collision with some imagined obstacle.
Then $\I$ is infeasible. But since $\tau_2$ and $\tau_3$ are supersets of
$\tau_1$, they are infeasible too, so we see that $\F$ must be infeasible.
Imagine that $\tau_1$ is feasible.  Then $\I$ is feasible. Since $\I$ is
contained in $\F$, $\F$ must be feasible, too. 

\end{example}

%%%%%%%%%%%%%%%%%%%%%%%%%%%%%%%%%%%%%%%%%%%%%%%%%%%%%%%%%%%%%%%%%%%%%%%%%%%%%%%
\subsection{Generalization of Irreducible Path to Dynamics\label{sec:irrdynamics}}
%%%%%%%%%%%%%%%%%%%%%%%%%%%%%%%%%%%%%%%%%%%%%%%%%%%%%%%%%%%%%%%%%%%%%%%%%%%%%%%

If we consider the dynamics of the robot, we could obtain a situation where the
only dynamically feasible path is not irreducible. This could happen if
the momentum of the sublinks is crucial to solve a task. While in this paper we
focus on collision-free paths where the dynamics are not considered, the concept of an
irreducible path can in principle be generalized to incorporate dynamically feasible paths.

A path is dynamically feasible if it is a solution to the equation of motion of
the robot. We define $\F_D$ as the subspace of all paths being dynamically
feasible. Combined with the concept of an irreducible path we obtain

\begin{definition}[Dynamically Irreducible Path]

A path $\tau' \in \F_D$ is called dynamically reducible by $\tau$, if there exist $\tau \in
\F_D$ such that $\SV(\tau) \subset \SV(\tau')$. Otherwise $\tau'$ is called
\emph{dynamically irreducible}. \label{def:dynirrpath}

\end{definition}

We note that the previous discussion of completeness analogously 
applies to dynamically irreducible paths. In this paper, however, we consider
only the non-dynamical case. The characterization of the dynamically irreducible
path space is left for future work.

%The characterization of the
%dynamically irreducible path space and its application is left for future work. 

%%%%%%%%%%%%%%%%%%%%%%%%%%%%%%%%%%%%%%%%%%%%%%%%%%%%%%%%%%%%%%%%%%%%%%%%%%%%%%%%%
%PART II
\section{Irreducible Motion Planning for Serial Kinematic Chains}
\label{mpskc}

As an application, let us approximate the irreducible path space of a serial
kinematic chain. A serial kinematic chain is an alternating sequence of $N+1$
links and $N$ joints as examplified in Fig. \ref{fig:linear_chain}. 

We will call the first link in the chain the \emph{root link} and we will call
the remaining $N$ links \emph{sublinks}. Our assumptions are that all joints are
either revolute or spherical, that the volume of the root link is bigger or
equal to the volume of the sublinks, and that the root link is free-floating.

Under those assumptions, our main idea is the following: if the root link follows a
curvature-constrained path, then the sublinks can be projected into the
swept volume of the root link along the path. Each curvature-constrained path is
thereby associated with an irreducible path. 

Motion planning for a serial kinematic chain is thereby decomposed into two
parts: first, conduct curvature-constrained motion planning for the root link,
and second, project the sublinks into the swept volume of the root link.  We
will first describe how to conduct motion planning for the root link, and then
describe how to construct an algorithm to project the sublinks. 

\subsection{Motion Planning for the Root Link}

The root link is a free-floating rigid body. Our goal is to plan a path for the
root link under a certain maximum curvature constraint $\kappa$. Since the swept
volume of a path is invariant to reparameterization of the path (Sec.
\ref{sec:irrstructure}), we can compute the path where the robot moves at unit
speed.

Planning with a curvature-constrained functional space using constant unit speed
is equivalent to planning a path for a non-holonomic rigid body subject to
differential constraints describing forward non-slipping motions. This is
equivalent to the model of Dubin's car, which can be solved in both 2d and 3d
using kinodynamic planning \citep{lavalle_2006}.

\begin{figure}
        \centering
        \includegraphics[width=0.7\linewidth]{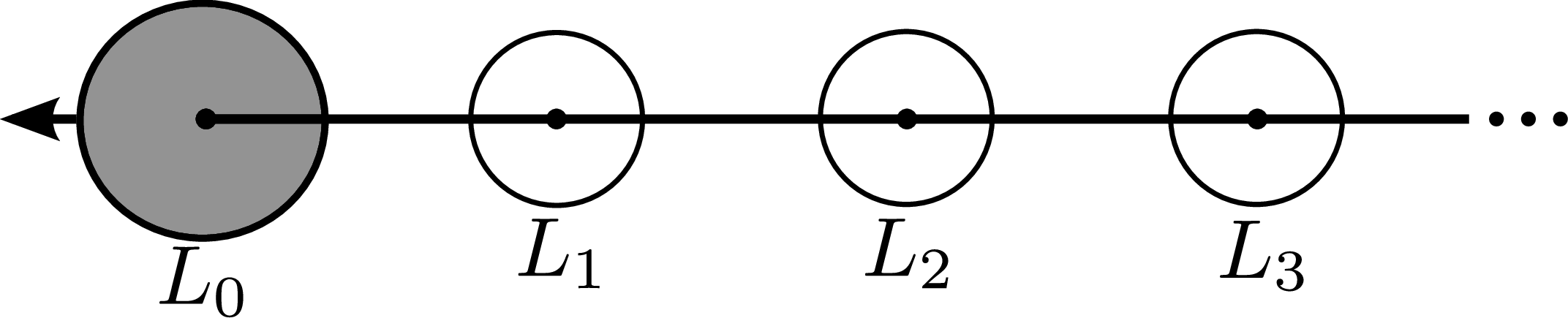}
        \caption{A serial kinematic chain with $L_0$ being the root link, and
                $L_1,L_2,\cdots$ are called the sublinks. 
\label{fig:linear_chain}}
\end{figure}

In 2d, the configuration space of the root link is $SE(2)$ with $q = \left(
x, y, \theta \right)^T$ and the differential model at unit speed is given by

\begin{equation}
  \begin{aligned}
   &\dot{x} = \cos \theta \\
   &\dot{y} = \sin \theta \\
   &\dot{\theta} = u
  \end{aligned}
\end{equation}

\def\atanconstraint{[-\atan(\kappa),\atan(\kappa)]}

where the control space is defined by the steering angle $u \in \atanconstraint$
with $\kappa$ being the curvature constraint. In 3d, the configuration space of
the root link is $\SE(3)$ and the differential model is similar to a driftless
airplane given by 

\begin{equation}
  \begin{aligned}
   \dot{q} = q \left( X_1 + \sum_{i=4}^6 u_i X_i\right) \\
  \end{aligned}
\end{equation}
where 

\begin{equation*}
\begin{aligned}
&X_1= \left[ \begin{smallmatrix} 0 & 0 & 0 & 1 \\ 0 & 0 & 0 & 0 \\ 0 & 0 & 0 & 0 \\ 0 & 0 & 0 & 0 \end{smallmatrix} \right]
&&X_2= \left[ \begin{smallmatrix} 0 & 0 & 0 & 0 \\ 0 & 0 & 0 & 1 \\ 0 & 0 & 0 & 0 \\ 0 & 0 & 0 & 0 \end{smallmatrix} \right]
&&X_3= \left[ \begin{smallmatrix} 0 & 0 & 0 & 0 \\ 0 & 0 & 0 & 0 \\ 0 & 0 & 0 & 1 \\ 0 & 0 & 0 & 0 \end{smallmatrix} \right] \\
&X_4= \left[ \begin{smallmatrix} 0 & -1 & 0 & 0 \\ 1 & 0 & 0 & 0 \\ 0 & 0 & 0 & 0 \\ 0 & 0 & 0 & 0 \end{smallmatrix} \right] 
&&X_5= \left[ \begin{smallmatrix} 0 & 0 & 1 & 0 \\ 0 & 0 & 0 & 0 \\ -1 & 0 & 0 & 0 \\ 0 & 0 & 0 & 0 \end{smallmatrix} \right]
&&X_6= \left[ \begin{smallmatrix} 0 & 0 & 0 & 0 \\ 0 & 0 & -1 & 0 \\ 0 & 1 & 0 & 0 \\ 0 & 0 & 0 & 0 \end{smallmatrix} \right]\nonumber
\end{aligned}
\end{equation*}

%%% bloch page 101 (120), chapter 2: SE(3) 
%%% bloch page 184 (203), chapter 4: basis of lie algebra se(2)

is a basis for $\mathfrak{se}(3)$, the Lie algebra of $\SE(3)$\citep{bloch_2003}.
The controls $u_4,u_5 \in \atanconstraint$ and $u_6 \in \R$ are the yaw, pitch,
and roll steering angles and $X_1$ represents the forward motion at unit speed.

In Appendix \ref{sec:irrlinearchain} we analytically compute the curvature
$\kappa$ for a serial chain in the plane using disk-shaped links. In all our
experiments we have computed $\kappa$ as if the chain would be 2-dimensional. We
have observed that this worked well in experiments. However, further research
needs to investigate the correctness of this claim.

\subsection{Algorithm to Project Sublinks into Swept Volume of Root Link\label{sec:irralgorithm}}

Let $\tau: [0,1] \rightarrow SE(3)$ be a path for the root link of the serial
chain, and let $\tau$ be constrained by having a maximal curvature $\kappa$ such
that $\kappa(s) \leq \kappa$ for all $s\in[0,1]$. Given $\tau$, let $\SV(\tau)$
be the swept volume of the root link. We claim that for a given $\kappa$, we can
find at least one configuration of the sublinks such that the swept volume of
the sublinks is a subset of $\SV(\tau)$. In Section \ref{sec:correctness} we
discuss the correctness of this claim, and in Appendix \ref{sec:irrlinearchain}
we provide a proof for an N-dimensional serial kinematic chain in the plane.

In this section, we develop the curvature projection algorithm, which takes as
input a path of the root link and provides one configuration of the sublinks.
Our algorithm approximates the serial chain by a set of spheres of radius
$\delta_0,\cdots,\delta_N$ which are connected by lines of length
$l_0,\cdots,l_{N-1}$. The serial chain has $N$ joints centered at each of the
spheres.  Each joint is described by two parameters $(\theta_i,\gamma_i)$,
whereby $\theta_i$ represents the rotation around the normal unit vector
centered at the previous link and $\gamma_i$ represents the rotation around the
binormal unit vector centered at the previous link. The system is then
represented by the position and orientation of the root link in space, i.e.
$SE(3)$, plus its joint configurations $\btheta=\{\theta_1,\cdots,\theta_N\}$
and $\bgamma=\{\gamma_1,\cdots,\gamma_N\}$.

%The algorithm has been implemented in python and is available as a standalone module
% \begin{center}
%         \url{https://github.com/orthez/irreducible-curvature-projection/}
% \end{center}

%In the algorithm we use a $\dt$ parameter to move along the input path. If
%this $\dt$ parameter is too big, then the resulting joint configuration
%will deviate from the true values (we overshoot the true position). We have not
%observed any issues with the parameter in our experiments, but for long chains
%the errors will accumulate, and $\dt$ might need to be decreased to obtain
%robust solutions.

\subsubsection{Algorithmic Description}
%\begin{figure}[h!]
\begin{algorithm}%[ht]
\KwData{$\tau, \tau', \tau'', \delta_{0:N}$, $l_{1:N}, \Dt$}
\KwResult{$\theta_{1:N},\gamma_{1:N}$}
$\ex \leftarrow \tau'(0)$\;
$\ey \leftarrow \tau''(0)$\;
$\ez \leftarrow \tau'(0) \times \tau''(0)$\;

 $\tc \leftarrow 0$\;

 $\Rab \leftarrow \begin{pmatrix}
         \ex\cdot\x & \ey\cdot\x & \ez\cdot\x\\
         \ex\cdot\y & \ey\cdot\y & \ez\cdot\y\\
         \ex\cdot\z & \ey\cdot\z & \ez\cdot\z
 \end{pmatrix}$\;

 \For{ $i \leftarrow 1$ \KwTo $N$ }{

         $\tn \leftarrow \tc$\;
         \While{$\|\tau(\tn) -\tau(\tc)\| \leq l_i$}{
                 $\tn \leftarrow \tn-\Delta t$
         }
         $\tau_n \leftarrow \tau(\tn)$\;

         $\pb\leftarrow \tau(\tn) - \tau(\tc)$\;
         $\pa \leftarrow \Rab^T\pb$\;
         $\xl \leftarrow (-1,0,0)^T$\;

         $\pxy \leftarrow \pa - (\pa^T\ze)\ze$\;
         $\pzx \leftarrow \pa - (\pa^T\ye)\ye$\;
         $\theta_i \leftarrow \acos( \frac{\pxy^T\xl}{\|\pxy\|\|\xl\|})$\;
         $\gamma_i \leftarrow \acos( \frac{\pzx^T\xl}{\|\pzx\|\|\xl\|})$\;
        
        \If{$\pa^T\ze < 0$}{
                $\gamma_i \leftarrow -\gamma_i$\;
        }
        \If{$\pa^T\ye > 0$}{
                $\theta_i \leftarrow -\theta_i$\;
        }
        
        $\Rab \leftarrow \Rab\cdot \Ry(\gamma_i)\cdot \Rz(\theta_i)$\;

        $\ex \leftarrow \Rab \xe$\;
        $\ey \leftarrow \Rab \ye$\;
        $\ez \leftarrow \Rab \ze$\;
         $\tc \leftarrow \tn$\;

 }
 %\caption{Irreducible Curvature Projection}
\caption{Irreducible Curvature Projection Algorithm \label{algo}}
\end{algorithm}
%\end{figure}

The resulting algorithm is described in Fig. \ref{algo}. It takes as input the
path of the root link $\tau$, its first and second derivative $\tau'$ and
$\tau''$, the size of the spheres $\delta_{0:N}$ for the root link and the $N$
sublinks, and the length of the links $l_{1:N}$. It outputs the configurations
of the sublinks $\theta_{1:N},\gamma_{1:N}$, such that each sublink is inside
the swept volume of the root link. We assume that there exists a world frame $O$
with basis $\x,\y,\z$. Starting from $\tc=0$ we compute a frame $S_0$ centered
at the root link with orthonormal basis $\ex,\ey,\ez$ (Line 1-4), and we compute
the rotational transformation matrix $\Rab$ between $O$ and $S_0$ (Line 5). Then
for each sublink (Line 6), we start at $\tc$, and we follow $\tau$ backwards
until the distance between $\tau(\tn)$ and $\tau(\tc)$ is equal to $l_i$ (Line
7-9). This position marks the position of the $i$-th sublink. We mark the
position as $\tn$ (Line 10), we compute the vector from $S$ to $\tn$ (Line 11),
and we rotate this vector into the world frame $O$ (Line 12). Then we compute
the angle of the vector to the $xy$ and the $zx$ plane, respectively (Line
13-21). Those angles give the configuration $\theta_i,\gamma_i$ of sublink $i$.
Finally, we rotate the rotation matrix $\Rab$ correspondingly (Line 22) to
obtain a new frame $S_i$ centered at link $i$ (Line 23-26). The algorithm is
iterated until all $N$ sublinks have been placed in that manner. See also Fig.
\ref{fig:irrtraj} for a visualization of the algorithm in a 2D setting. 

\begin{figure}[h]
        \centering
        \includegraphics[width=0.8\linewidth]{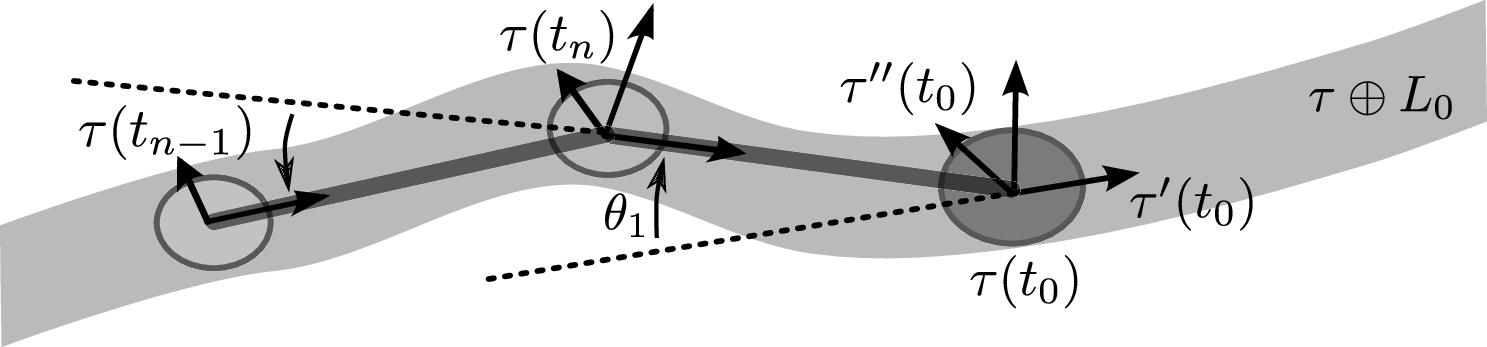}
        \caption{Given a path $\tau\in\Fkn$, we can analytically compute
        the joint configurations, such that sublinks of the serial kinematic chain are
reduced, i.e. they are inside of the swept volume of $\tau \oplus L_0$.\label{fig:irrtraj}}
\end{figure}

\subsubsection{Complexity and Correctness \label{sec:correctness}}

The complexity of the algorithm is $\mathcal{O}(N)$, $N$ being the number of
sublinks. In Appendix \ref{sec:irrlinearchain} we prove the correctness of the algorithm
for a serial kinematic chain in 2D, whereby we assume that the lengths between
joints is equidistant. The proof first verifies the correctness of the algorithm
of a chain with $N=1$ sublinks and then generalizes this result to arbitrary
sublinks $N>1$. The proof of correctness of the algorithm with arbitrary lengths
and with spherical joints in 3D is subject of further research.

%%%%%%%%%%%%%%%%%%%%%%%%%%%%%%%%%%%%%%%%%%%%%%%%%%%%%%%%%%%%%%%%%%%%%%%%%%%%%%%%%
%PART III
\section{Experiments\label{sec:experiments}}

We will show that the concept of an irreducible path space can be applied to any
motion planning algorithm taking curvature constraints into account, and that
each planning algorithm using the irreducible path space outperforms the same
motion planning algorithm using the space of all continuous paths.

We performed seven experiments to test our hypothesis. Our first three
experiments are planning scenarios for an idealized serial kinematic chain in a
2d maze, a 2d rock environment and a 3d rock environment. We compared the
kinodynamic {\bf{R}}apidly-exploring {\bf{R}}andom-{\bf{T}}ree (\rrt)
\citep{lavalle_2001} algorithm both using the original space and the irreducible
path space. In the fourth experiment we plan a path for a mechanical snake in a
turbine environment. We compared four different motion planning algorithms:
\rrt, {\bf{P}}ath-{\bf{D}}irected {\bf{S}}ubdivision {\bf{T}}ree planner (\pdst)
\citep{ladd_2004}, {\bf{K}}inodynamic motion {\bf{P}}lanning by
{\bf{I}}nterior-{\bf{E}}xterior {\bf{C}}ell {\bf{E}}xploration (\kpiece)
\citep{sucan_2009} and {\bf{S}}table {\bf{S}}parse {\bf{T}}ree (\sst)
\citep{li_2016}. In the fifth experiment we plan a path for a mechanical octopus
in a pipe environment, using the same four algorithms. In all those experiments
we assume that the head of the robot can be independently controlled. In the sixth and seventh experiments we plan a path for a
humanoid robot, first in a room with doors of different heights and second on a
floor with a hole in a wall.

\begin{figure*}
        \centering
        \def\figWidth{0.24\linewidth}
        \def\figH{0.25\linewidth}
        \includegraphics[width=\figWidth,height=\figH]
        {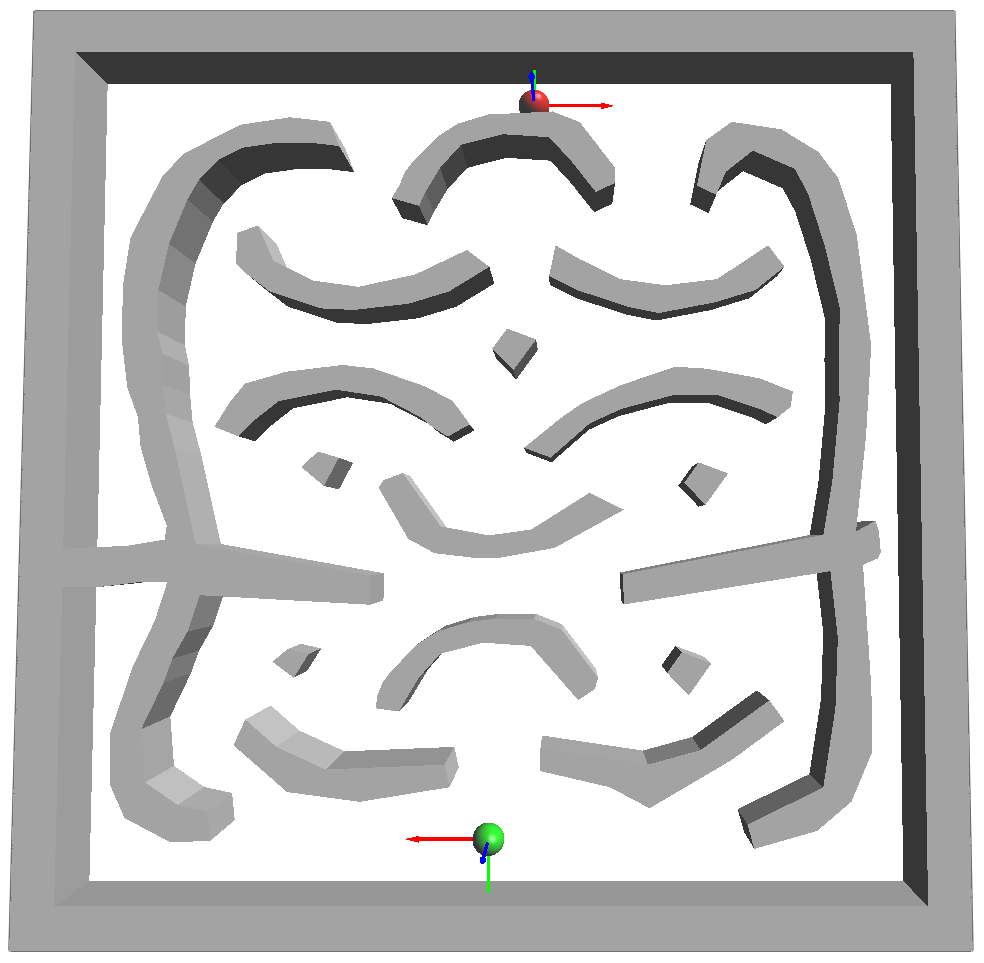}
        \includegraphics[width=\figWidth,height=\figH]
        {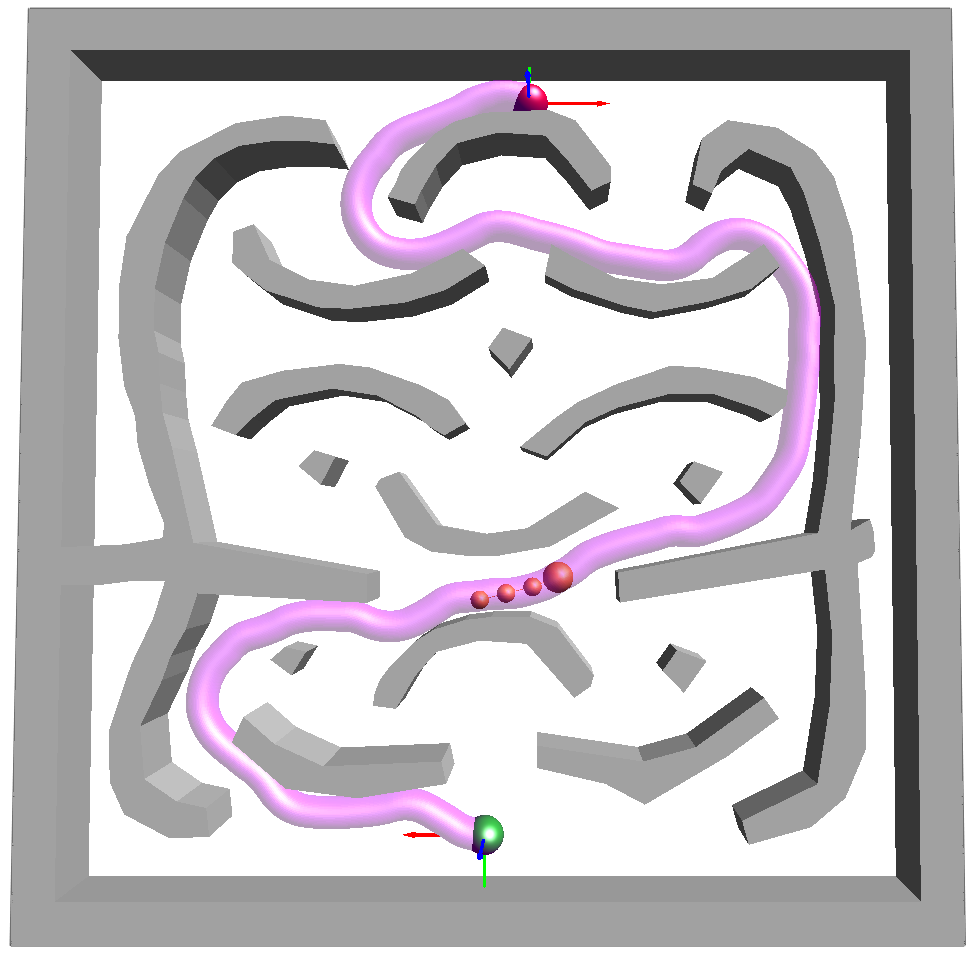}
        \includegraphics[width=\figWidth,height=\figH]
        {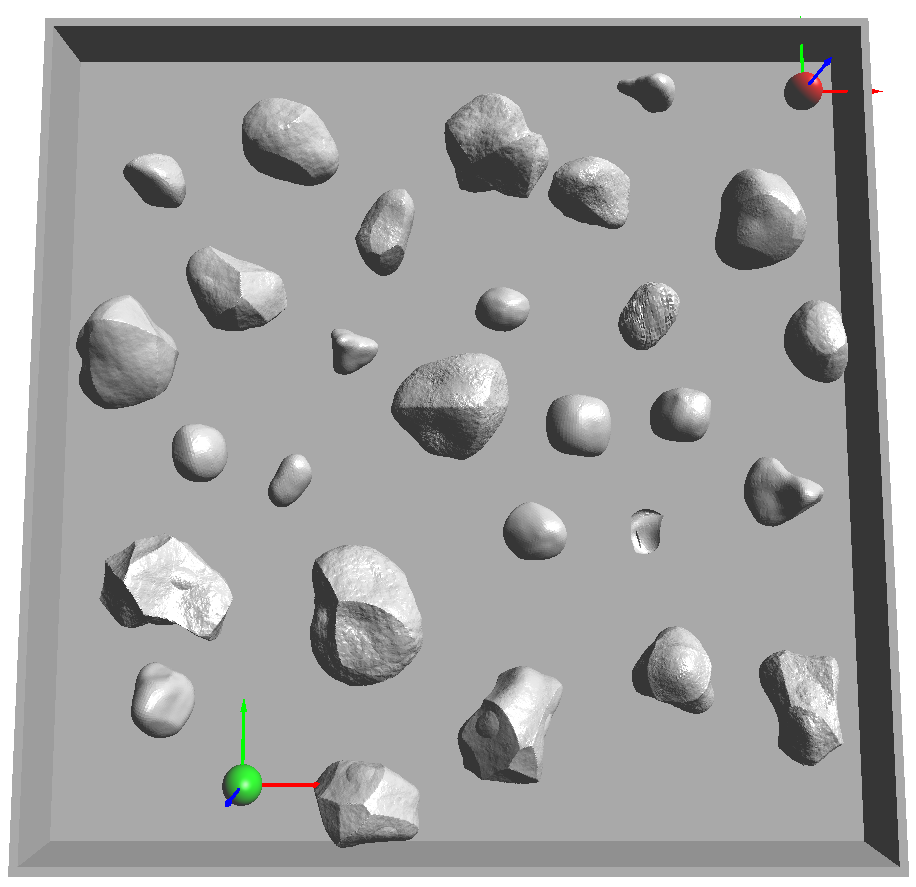}
        \includegraphics[width=\figWidth,height=\figH]
        {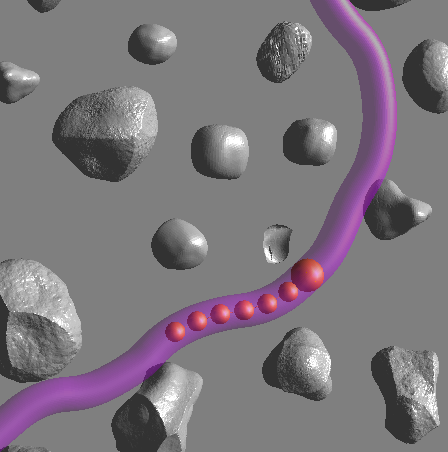}
        \caption{Serial kinematic chain in 2D (From Left to Right). Image 1 and
        2 show the start and goal configuration of Experiment $1$ and the swept
      volume along one irreducible solution path, respectively. Image 3 and 4
    show a serial kinematic chain in an environment filled with rocks. We show
  the start and goal configuration and the swept volume along a solution path,
respectively.\label{fig:maze}}
\end{figure*}

%In each experiment we report the average runtime of the motion planning
%algorithm. In the case of using the irreducible path space, we additionally need
%to project the sublinks into the swept volume of the root link.  This
%computation requires less than $0.2$ seconds for the mechanical octopus, less
%than $0.01$ seconds for the mechanical snake and less than $0.05$ seconds for
%the arms of the humanoid robot. These computations are negligible and have not
%been included in the reported runtime. All experiments have been conducted using
%the open motion planning library (\ompl) \citep{sucan_2012}.

\def\epsgoal{\epsilon_{\text{goal}}}

For each experiment we specify the values of the serial kinematic chain, the
maximum curvature $\kappa$, the joint limit $\theta^L$, the size of the root
link $\delta_0$ and the number of sublinks $N$. Each experiment is repeated $M$
times and it is terminated if a time threshold $T$ is reached or if a goal
region of size $\epsgoal$ around the goal configuration is reached.

\begin{figure*}
  \centering
  \def\hh{0.25}
  \includegraphics[width=0.39\linewidth,height=\hh\linewidth]
  {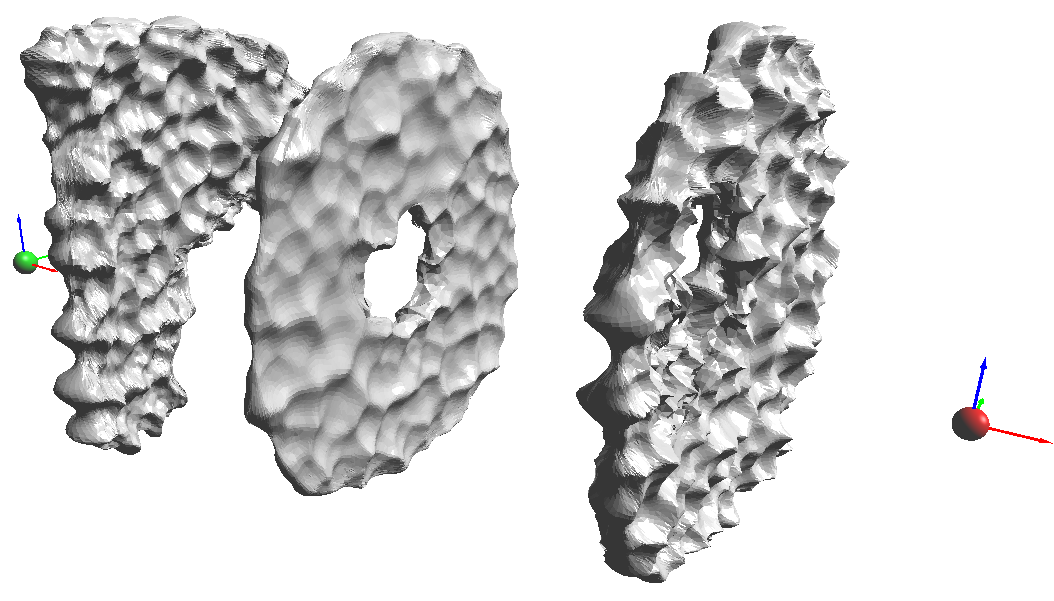}
  \includegraphics[width=0.39\linewidth,height=\hh\linewidth]
  {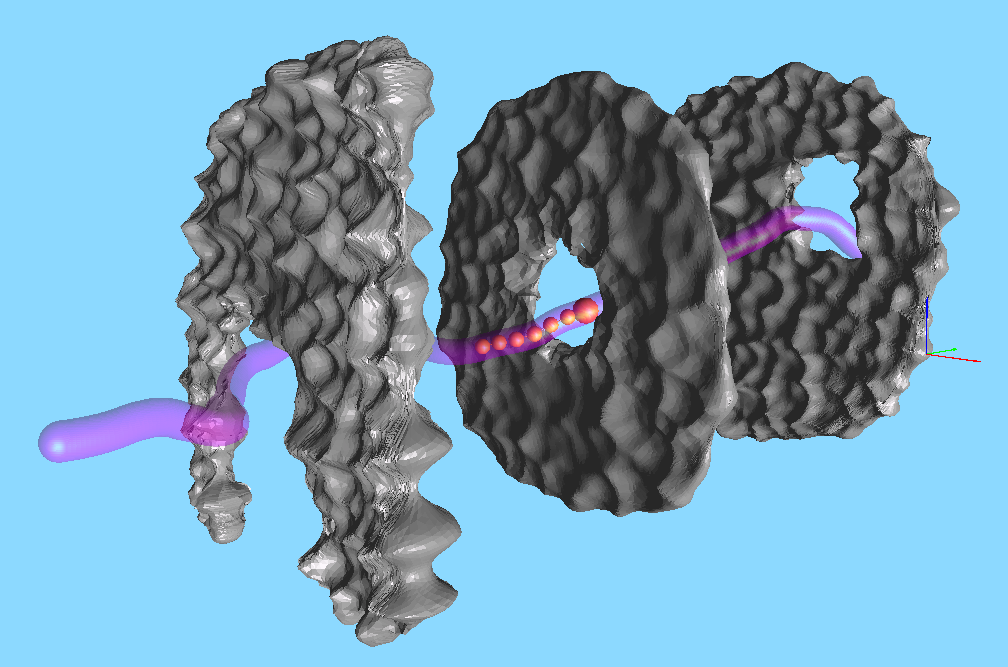}
  \includegraphics[width=0.19\linewidth,height=\hh\linewidth]
  {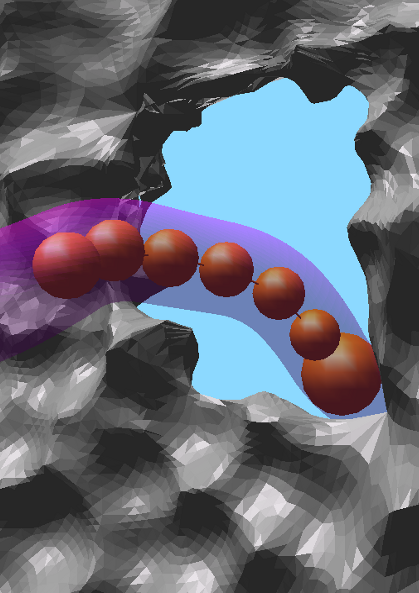}

  \caption{Planning for the root link of a serial kinematic chain on $\SE(3)$. {\bf Left:}
    3d rocks environment with starting position (green) and goal position (red).
    {\bf Middle:} An irreducible path found by \rrt [Irreducible]. The swept
    volume of the root link is shown in magenta. The
    position of the sublinks is an output of the curvature projection algorithm.
  {\bf Right:} Close-up of a position of the robot along the irreducible path,
showing how the sublinks are inside the swept volume of the root link.\label{fig:swimming-snake}}
\end{figure*}

\begin{figure*}
  \centering
  \def\ww{0.23} 

  \def\hh{0.2}

  \includegraphics[width=\ww\linewidth,height=\hh\linewidth]
  {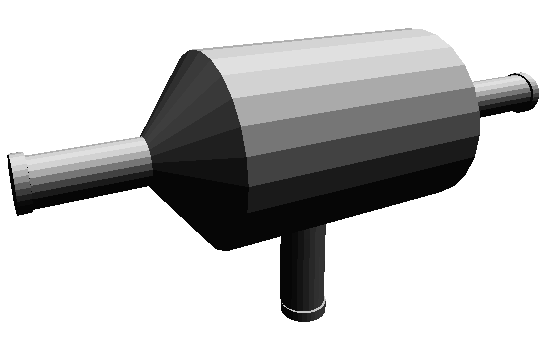}
  \includegraphics[width=\ww\linewidth,height=\hh\linewidth]
  {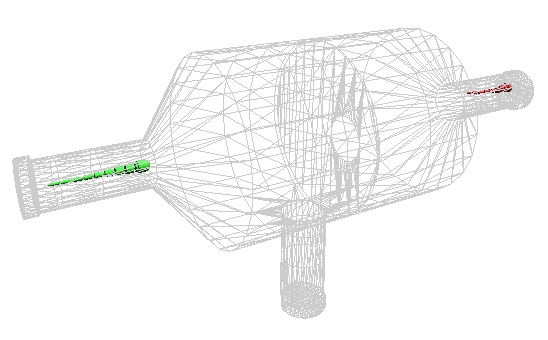}
  \includegraphics[width=\ww\linewidth,height=\hh\linewidth]
  {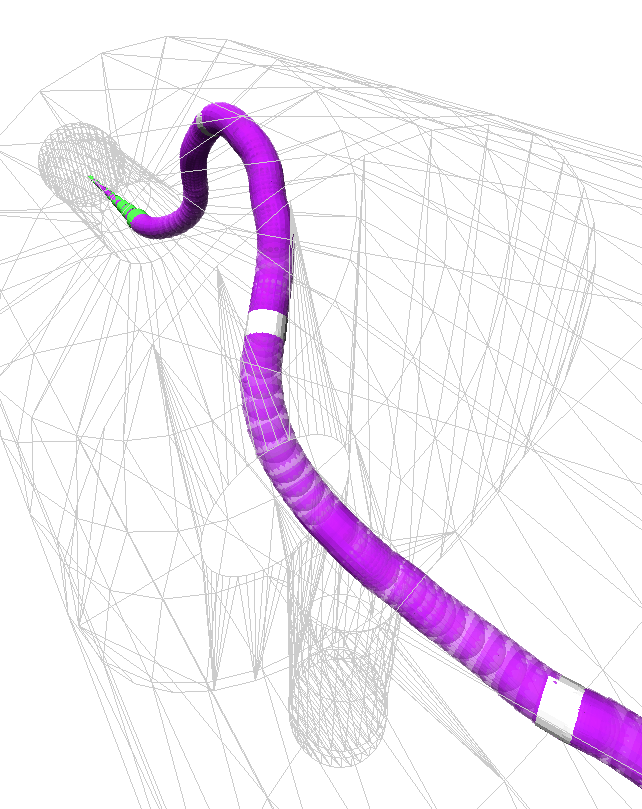}
  \includegraphics[width=\ww\linewidth,height=\hh\linewidth]
  {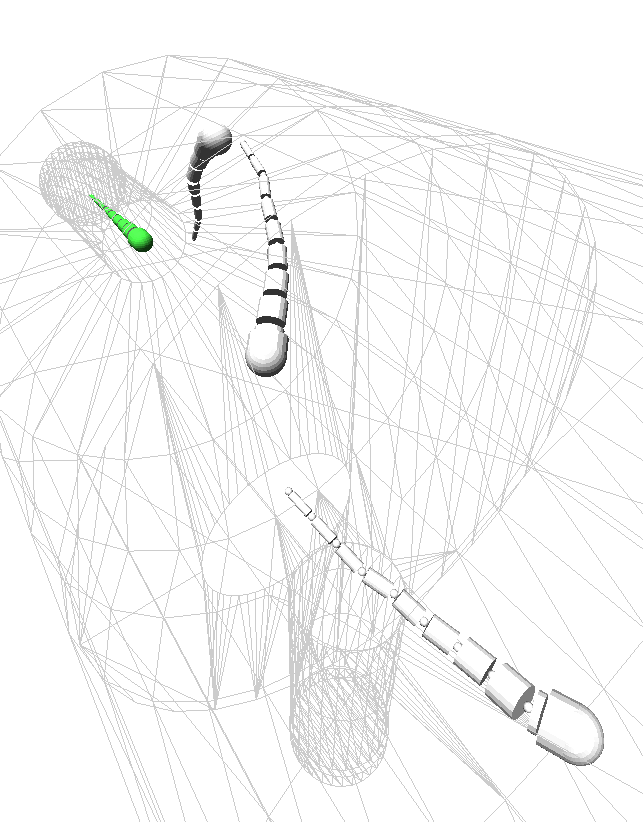}
  
  \def\hh{0.22}
  \includegraphics[width=\ww\linewidth,height=\hh\linewidth]
  {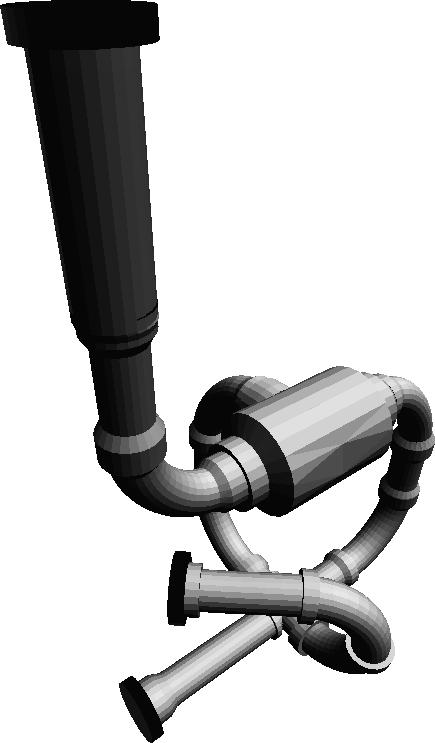}
  \includegraphics[width=\ww\linewidth,height=\hh\linewidth]
  {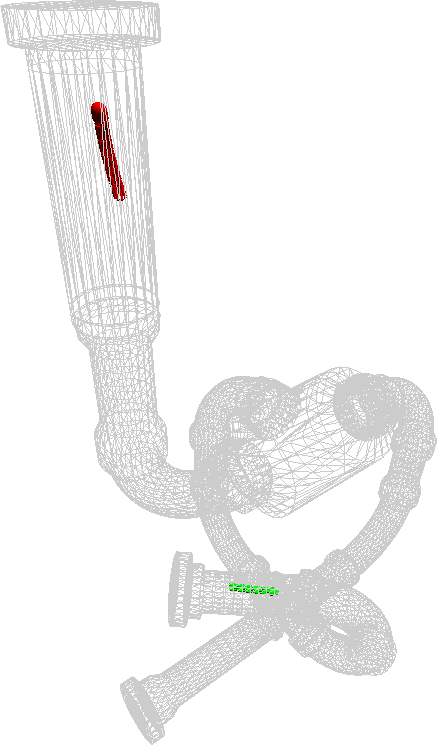}
  \def\hh{0.2}
  \includegraphics[width=\ww\linewidth,height=\hh\linewidth]
  %{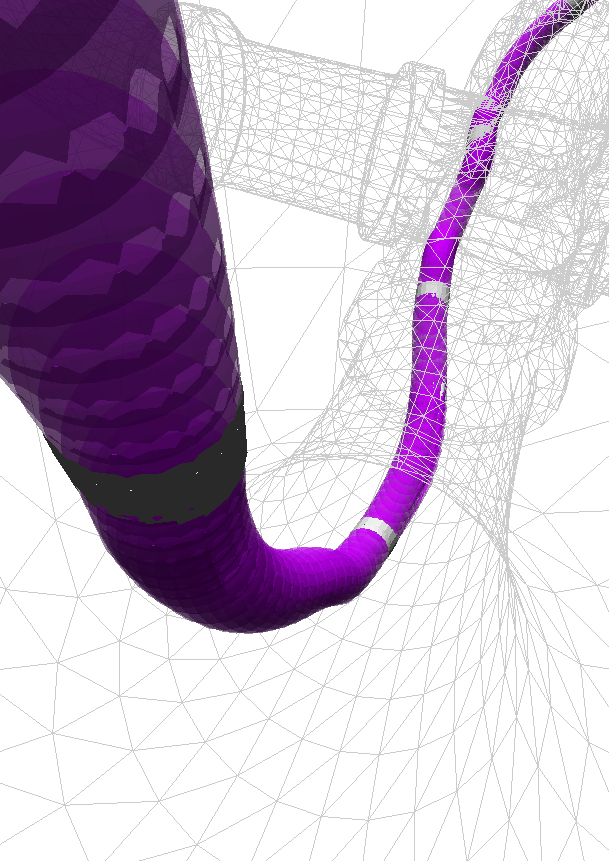}
  {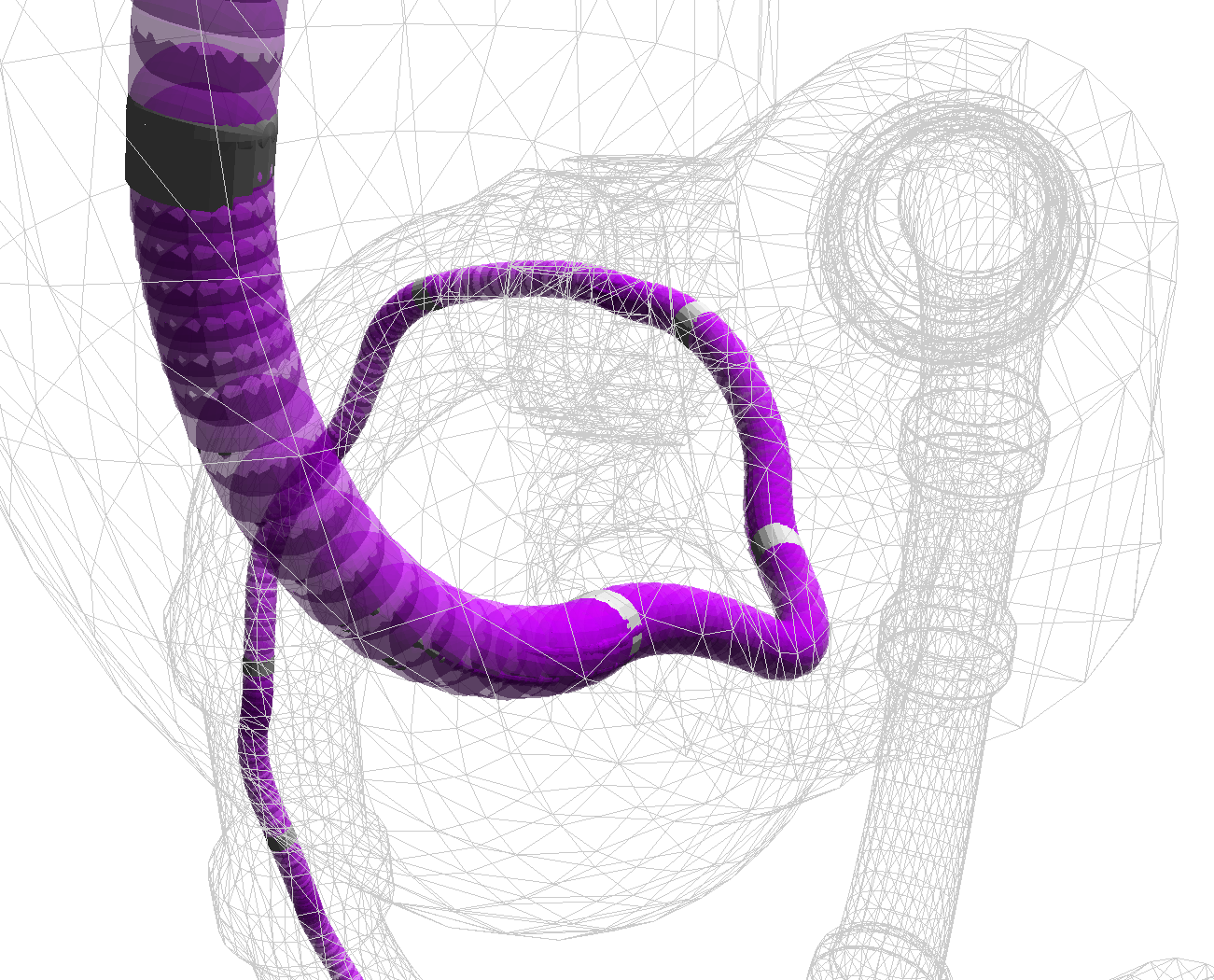}
  \includegraphics[width=\ww\linewidth,height=\hh\linewidth]
  %{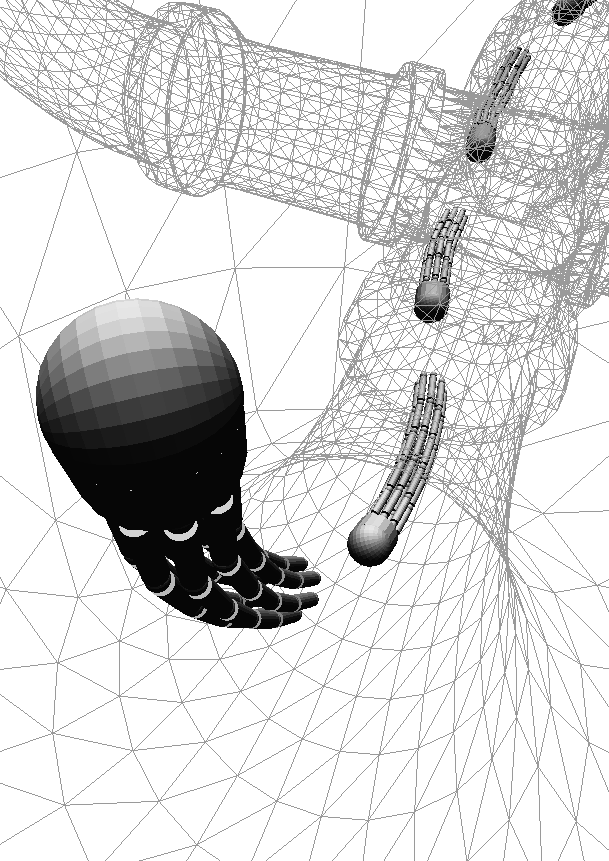}
  {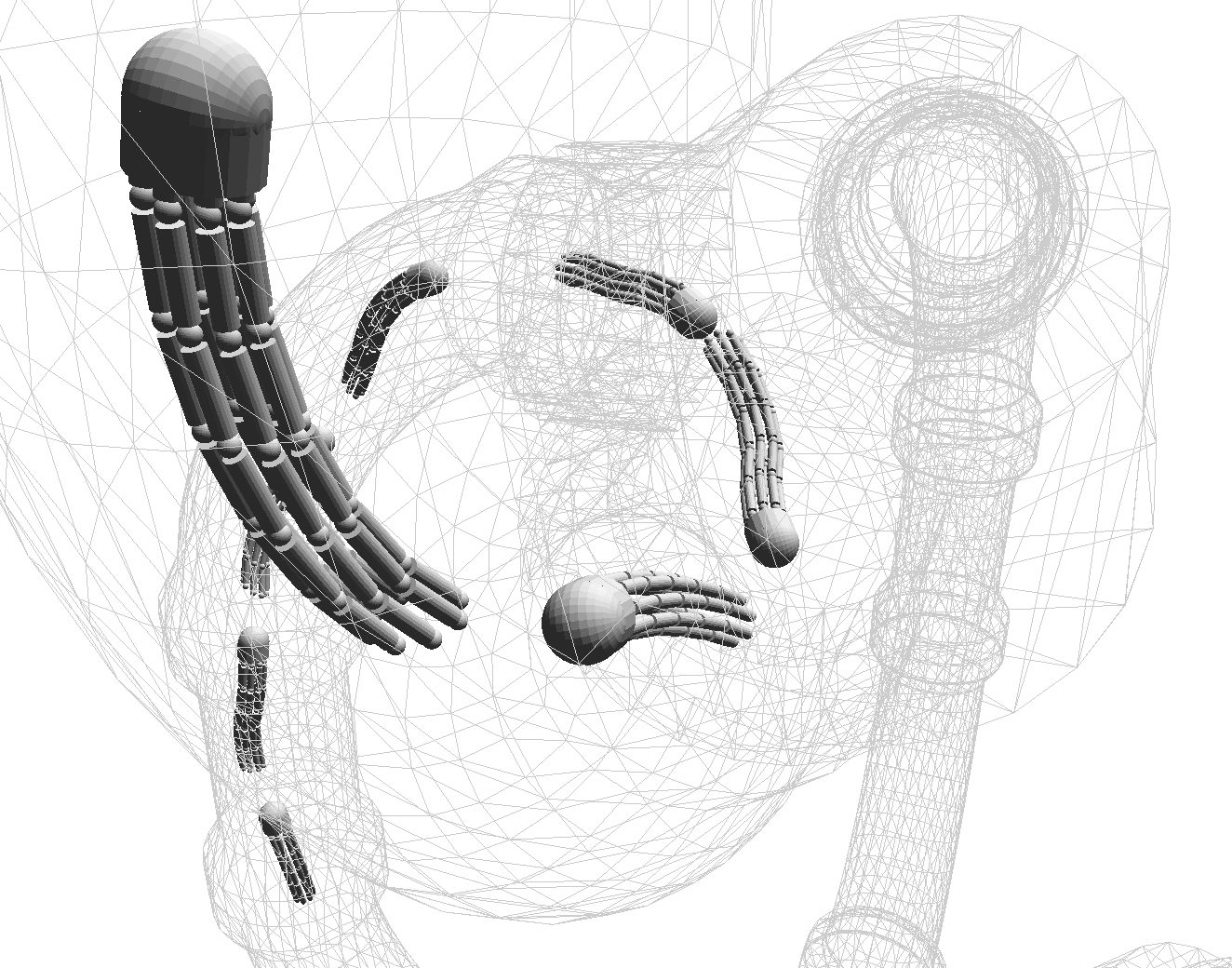}

  \def\hh{0.28}
  \includegraphics[width=\ww\linewidth,height=\hh\linewidth]
  {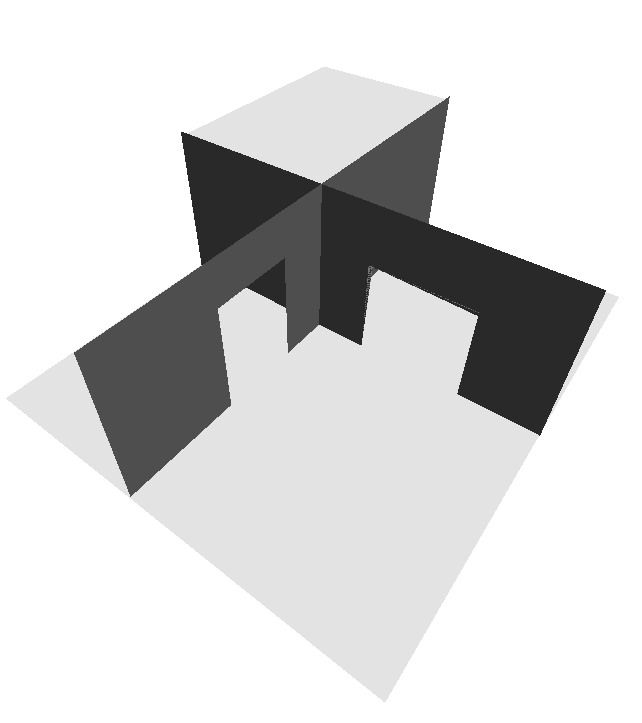}
  \includegraphics[width=\ww\linewidth,height=\hh\linewidth]
  {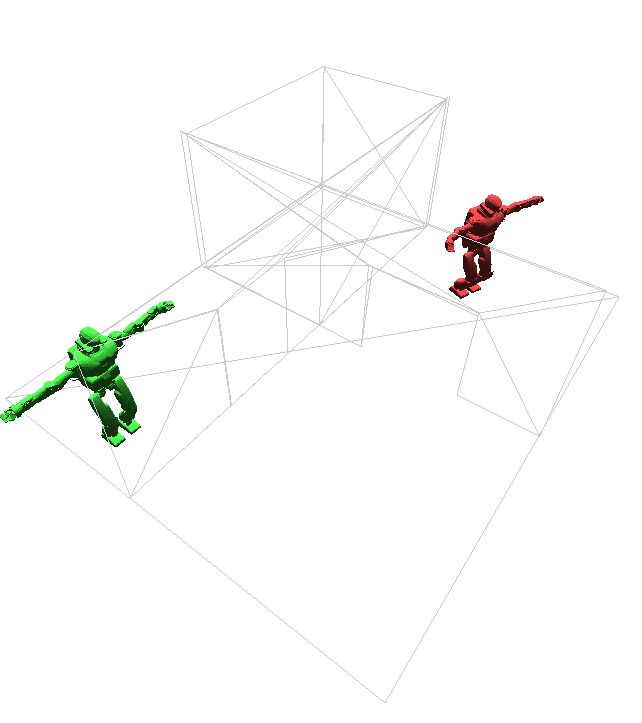}
  \includegraphics[width=\ww\linewidth,height=\hh\linewidth]
  {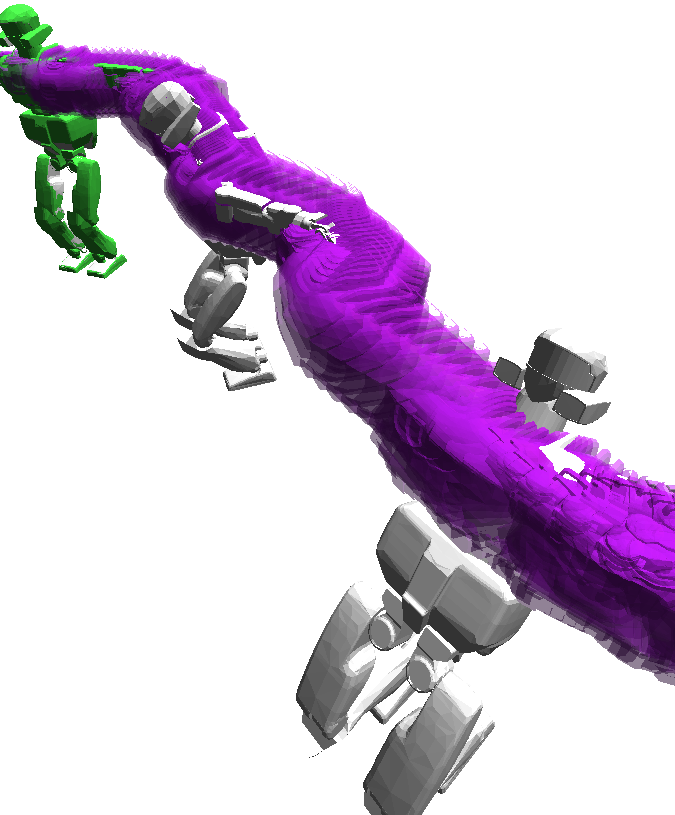}
  \includegraphics[width=\ww\linewidth,height=\hh\linewidth]
  {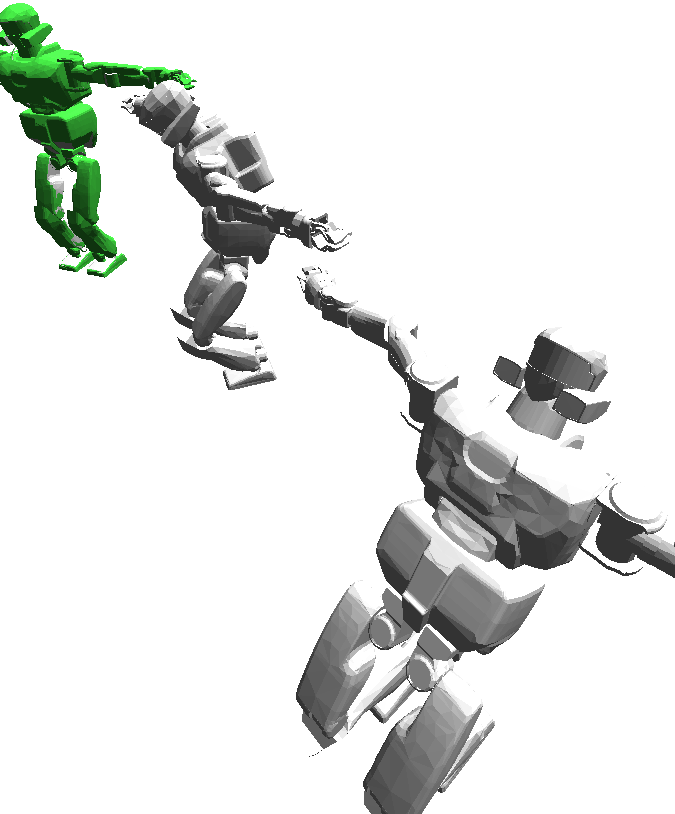}

  \def\hh{0.25}
  \includegraphics[width=\ww\linewidth,height=\hh\linewidth]
  {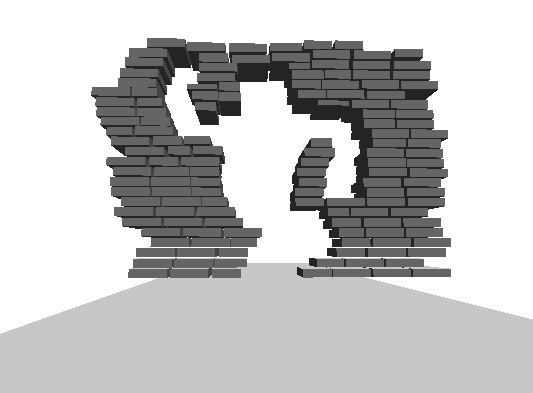}
  \includegraphics[width=\ww\linewidth,height=\hh\linewidth]
  {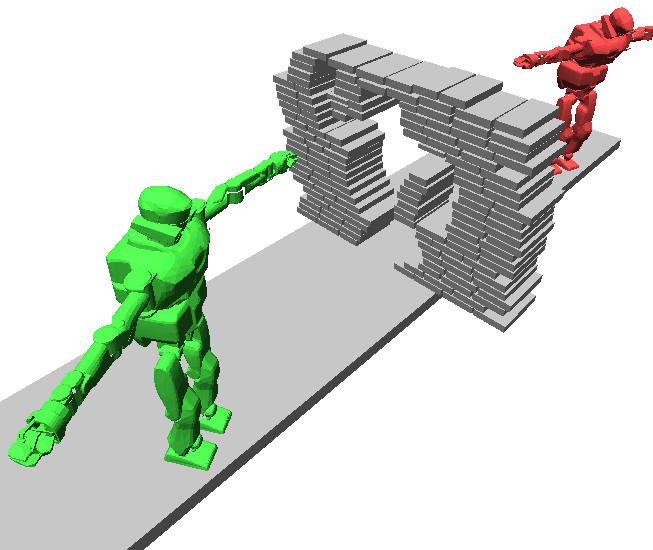}
  \def\hh{0.28}
  \includegraphics[width=\ww\linewidth,height=\hh\linewidth]
  {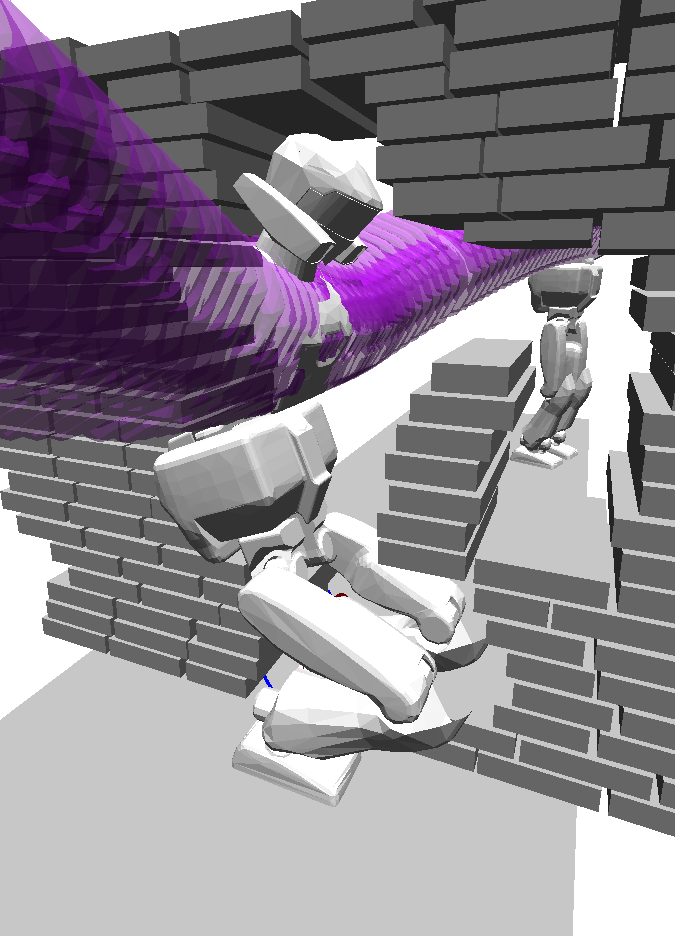}
  \includegraphics[width=\ww\linewidth,height=\hh\linewidth]
  {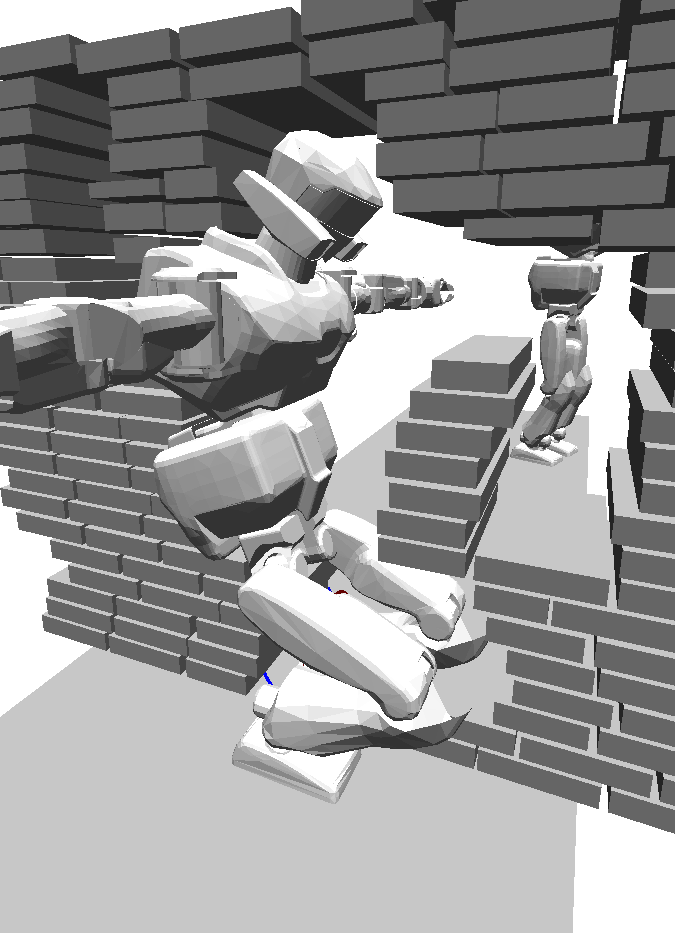}

  \def\ww{0.22} \def\hh{0.3}

  \caption{Visualization of the motion planning experiments. For each experiment we show the
    environment (column 1), the start configuration in green and the goal
    configuration in red (column 2), the close up of a swept volume of one irreducible
    solution path in magenta (column 3) and
    milestones along the swept volume in gray (column 4). The first row (Experiment 4)
    shows a mechanical snake in a turbine environment. The second row
    (Experiment 5)
    is a mechanical octopus in a pipe environment. The third row (Experiment 6)
    shows a humanoid robot moving sideways through a room with doors of
    different height. For better visualization, only the swept volume of the
    chest is visualized. The fourth row (Experiment 7) shows the same humanoid
    robot moving sideways through a hole in a wall shaped according to the
  geometry of the robot.\label{fig:experiments}}

\end{figure*}
\subsection{Experiment 1: Serial Kinematic Chain in 2D maze}

Our first experiment is a 2d maze environment as depicted in Fig.
\ref{fig:maze}, where a serial kinematic chain has to be moved from a given
start to a given goal configuration. We assume that the root link can be
independently actuated. The configuration space is
$\SE(2) \times \R^3$. We compare \rrt using the full space of paths with \rrt
\irreducible using the space of irreducible paths. We report on the success
rate, the average time to plan, and the standard deviation of the planning
algorithm in Tab. \ref{table:results}. It can be seen that \rrt \irreducible has
a lower planning time of one order of magnitude.
The parameters used were $\kappa = 1$, $\theta^L = \frac{\pi}{2}$,
$\delta_0 = 0.23$, $N = 3$, $M=100$, $T=3600$s and $\epsgoal = 0.1$.

%$l_0 = 0.33$, $\theta^L = \frac{\pi}{2}$, $\delta_0 = 0.23$,
%$\delta_i = 0.138$, $N = 3$ and $\epsilon_{dist to goal} = 0.1$.

\subsection{Experiment 2: Serial Kinematic Chain in 2D rock environment}

Our second experiment is a 2d rock environment as depicted in Fig.
\ref{fig:maze}. We compared again \rrt with \rrt \irreducible. The results are
reported in Table \ref{table:results} and show that \rrt \irreducible using the
irreducible path space outperforms \rrt using the space of continuous paths. The
parameters used were $\kappa = 1$, $\theta^L = \frac{\pi}{2}$, $\delta_0 =
0.23$, $N = 6$, $M=100$, $T=3600$s and $\epsgoal = 0.1$.

%parameters used were $\kappa = 1$, $l_0 = 0.33$, $\theta^L = \frac{\pi}{2}$,
%$\delta_0 = 0.23$, $\delta_i = 0.138$, $N = 6$ and $\epsilon_{dist to goal} =
%0.1$.

\subsection{Experiment 3: Serial Kinematic Chain in 3D rock environment}

Our third experiment changes the 2d rock environment into a 3d rock environment,
where the serial kinematic chain has to move through a series of holes to reach a
target. The configuration manifold is $\SE(3)\times\R^{12}$.

We compare again \rrt and \rrt \irreducible, results shown in Table
\ref{table:results}.  Fig. \ref{fig:swimming-snake} shows a time instance from
one successful run of the \rrt \irreducible, where the swept volume of the
planned motion for the head is shown in magenta, and the sublinks are shown as
the results of our projection algorithm. Parameters used were $\kappa = 1$,
$\theta^L = \frac{\pi}{2}$, $\delta_0 = 0.23$, $N = 6$, $M=100$, $T=3600$s and
$\epsgoal = 0.1$.

\subsection{Experiment 4: Mechanical snake in 3D turbine environment}

In the fourth experiment we use a mechanical snake which has to move through a
3D turbine environment. Inside the turbine there is a narrow hole through which
the snake has to move. The dimensionality of the configuration space is $\SE(3)
\times \R^{16}$. We compare \rrt, \pdst, \kpiece and \sst with the full path
space against the same algorithms using the irreducible path space. The results
in Table \ref{table:results} show that each original algorithm is outperformed
by the same algorithm using the irreducible path space. The overall best average
computation time has been achieved by \kpiece \irreducible. The environment, the
start and goal configuration, a swept volume along the one irreducible solution
path and a close up of milestones are shown in row $1$ of Fig.
\ref{fig:experiments}.  The parameters are $\kappa = 1.57$, $\theta^L =
\frac{\pi}{4}$, $\delta_0=0.1$, $N=8$, $M=100$, $T=1200$s and $\epsgoal=0.5$.

%The parameters are $\kappa =
%\dfrac{2\sin(\frac{\pi}{4})}{9\cdot 0.1} \approx 1.57$, $l_i = 0.14$,
%$\delta_0=0.1$, goal set $\epsilon=0.5$, $N=8$, $M=100$, $T=1200$, $\theta^L =
%\frac{\pi}{4}$

\subsection{Experiment 5: Mechanical octopus in 3D pipe environment}

In the fifth experiment we use a mechanical octopus which has $8$ arms with each
$5$ sublinks leading to a configuration space of dimensionality $\SE(3) \times
\R^{80}$.  For the irreducible path space we compute a path for the head on
$\SE(3)$, then project all the remaining links into the swept volume of the head
by applying the curvature projection algorithm on each arm individually. Results
in Table \ref{table:results} indicate that the original problem was too
difficult to be solvable by any algorithm. However, the irreducible path space
variations can find a solution with \pdst \irreducible achieving the best
average computation time while succeding in $100$ percent of the cases. Row $2$
of Fig. \ref{fig:experiments} visualizes the environment and one solution path.
The parameters used were $\kappa = 2.66$, $\theta^L = \frac{\pi}{2}$,
$\delta_0=0.1$, $N=5$, $M=100$, $T=1200$, and $\epsgoal=1.0$.

%The parameters used were $\kappa =
%\dfrac{2\sin(\frac{\pi}{2})}{6\cdot 0.15} \approx 2.66$, $l_i = 0.19$,
%$\delta_0=0.1$, goal set $\epsilon=1.0$, $M=100$, $T=1200$, and $\theta^L =
%\frac{\pi}{2}$. 
%The parameters are $\kappa = 1.57$, $\theta^L =
%\frac{\pi}{4}$, $\delta_0=0.1$, $N=8$, $M=100$, $T=1200$s and $\epsgoal=0.5$.

\subsection{Experiment 6: Humanoid Robot in Room environment}

In the sixth experiment we consider motion planning for the sideways motion of a
humanoid robot as shown in row $3$ of Fig. \ref{fig:experiments}. This can be
helpful to estimate if a humanoid robot can potentially fit through a door or a
small opening. The environment consists of two doors with different heights. We
make the assumption that the robot slides on the planar floor leading to the
configuration space $\SE(2) \times \R^{19}$.  We apply the idea of the
irreducible path space to the chest of the robot, such that the arms of the
robot behave like the sublinks of the serial kinematic chain. Since each arm has
$7$ dofs, the resulting dimensionality is $\SE(2) \times \R^{5}$.  As shown in
Table \ref{table:results} each algorithm using the irreducible path space
outperforms the same algorithm using the full path space. The best algorithm in
terms of average computation time is \rrt \irreducible achieving $100$ percent
success rate. A solution path is shown in row $3$ of Fig.
\ref{fig:experiments}, showing how the arms have been projected into the swept
volume of the chest. The parameters are $\kappa = 2.66$, $\theta^L =
\frac{\pi}{2}$, $\delta_0=0.1$, $N=3$, $M=100$, $T=1200$, and $\epsgoal = 0.5$.

%parameters are $\kappa = \dfrac{2\sin(\frac{\pi}{2})}{0.75} \approx
%2.66$, $l_i = 0.25$, $\delta_0=0.1$, goal set $\epsilon=0.5$, $M=100$, $T=1200$,
%and $\theta^L = \frac{\pi}{2}$.

\subsection{Experiment 7: Humanoid Robot in Hole in Wall environment}

The last experiment is similar to experiment $6$ with a more challenging
environment. The humanoid robot has to move through a hole in a wall which is
shaped according to the robot's geometry. This is difficult , since a possible
solution path has to overcome the narrow passage in the configuration space. Due
to this difficulty we only used the best algorithm from experiment $6$, the \rrt
using $M=10$ runs with a timelimit of $T=86400$s or $24$h. We compared the
performance of \rrt \irreducible with \rrt as shown in Table
\ref{table:results}. It can be seen that \rrt \irreducible is able to find a
path although it takes on average $100$ minutes to obtain a solution. \rrt was
not able to find a solution in the given time limit. A solution path is
shown in row $4$ of Fig. \ref{fig:experiments} after a shortcut procedure was
applied. This experiment is a reimplementation of the experiment conducted in
\cite{orthey_2015a}.

%\subsection{Limit Case}
%
%Since the curvature of the paths are restricted, there are environments which
%cannot be solved using the space of irreducible paths. One example is a twisted
%cylinder environment as depicted in Fig. \ref{fig:twister}. The curvature of the
%twist is higher than the allowable curvature of the mechanical snake. However, a
%standard \rrt was able to find a solution after $160$ seconds.
%
%\begin{figure}
%  \centering
%  \def\hh{0.3}
%  \def\ww{0.15} 
%  \includegraphics[width=\ww\linewidth,height=\hh\linewidth]
%  {images/klamptompl/twister1-crop.png}
%  \def\ww{0.25} 
%  \includegraphics[width=\ww\linewidth,height=\hh\linewidth]
%  {images/klamptompl/twister2-crop.png}
%  \includegraphics[width=\ww\linewidth,height=\hh\linewidth]
%  {images/klamptompl/twister3-crop.png}
%  \caption{A motion planning problem for a mechanical snake in a twisted
%  cylinder environment. Start configuration in green, goal configuration in red.
%Due to the approximation of the irreducible path space, this environment cannot
%be solved by our method since the curvature of the space is larger than the
%maximum curvature of the path space. However, a standard \rrt solved the problem
%after $160s$. Milestones along the path are shown.\label{fig:twister}}
%\end{figure}

\def\Vhrulefill{\leavevmode\leaders\hrule height 0.7ex depth \dimexpr0.4pt-0.7ex\hfill\kern5pt}
\begin{table}%[!htbp]
%\begin{adjustbox}{angle=90}
\centering
\begin{center}
        \caption{
                Results of the seven experiments. Each algorithm used is
                compared between its original version and the irreducible
                version using the irreducible path space.
        \label{table:results}
        }
        %\scriptsize
        \fontsize{0.3cm}{0.35cm}\selectfont

        \renewcommand{\arraystretch}{1.1}
        \newcommand\titleC{\cellcolor{gray!10}}
        \newcommand\bc[1]{\color{black!60}{#1}}
        \def\CG{\ensuremath{\mathbb{S}}\xspace}
        \newcommand\insertTableCellHead[1]{
          \hline
          \multicolumn{9}{|c|}{
          \raisebox{-1ex}[0ex][0ex]{\titleC #1}}\\[1.8ex]
          \hline
          %\rowcolor{black!5}Algorithm & Manifold & Success($\%$) & \multicolumn{5}{c}{Time($s$)}\\
          %\hline
        }

        \newcommand\insertTableCellWin[7]{  #1 &  & #2 & {#3}
      & #4   & #5 &\bc{$\pm\ $} &\bc{#6} &\bc{#7}\\}
        \newcommand\insertTableCellWinBold[7]{
        \insertTableCellWin{#1}{#2}{\textbf{#3}}{\textbf{#4}}{\textbf{#5}}{\textbf{#6}}{\textbf{#7}}}
        \newcommand\insertTableCell[7]{ #1 & \multirow{2}{2cm}{#2} & #2 & #3 
      & #4   & #5 &\bc{$\pm\ $} &\bc{#6} &\bc{#7}\\}

        %%%%%%%%%%%%%%%%%%%%%%%%%%%%%%%%%%%%%%%%%%%%%%%%%%%%%%%%%%%%%%%%%%%%%%%
        %%% MAZE 2D
        %%%%%%%%%%%%%%%%%%%%%%%%%%%%%%%%%%%%%%%%%%%%%%%%%%%%%%%%%%%%%%%%%%%%%%%
        \def\sizeCell{2cm}
        \newcommand\hfillPercentage[1]{\hskip 0pt plus #1 fill }

        \def\cssnake{$\SE(3) \times \R^{16}$}
        \def\cssentinel{$\SE(3) \times \R^{80}$}
        \def\chrp{$\SE(2) \times \R^{19}$}
        \def\chrpirr{$\SE(2) \times \R^{5}$}

        \begin{tabularx}{\linewidth}{|X|>{\raggedright\arraybackslash}p{2cm}|>{\raggedright\arraybackslash}p{\sizeCell}|>{\centering\arraybackslash}p{1.3cm}|r@{}l@{}l@{}r@{}l@{}|}
          \hline
          \rowcolor{black!5} \bf Algorithm & \bfseries Manifold & \bfseries Sampling Manifold & \bfseries Success ($\%$) & \multicolumn{5}{c|}{\bfseries Time($s$)}\\
                \insertTableCellHead{Serial Kinematic Chain --- Maze 2D
                Environment (M=100, T=3600)}
                \insertTableCell
                        {\rrt}
                        {$\SE(2)\times\R^3$}
                        {100}
                        {174}{.70}{177}{.07}
                \insertTableCellWinBold
                        {\rrt \irreducible}
                        {$\SE(2)$}
                        {100}
                        {17}{.92}{9}{.47}
        %\end{tabularx}
        %%%%%%%%%%%%%%%%%%%%%%%%%%%%%%%%%%%%%%%%%%%%%%%%%%%%%%%%%%%%%%%%%%%%%%%
        %%% ROCKS 2D
        %%%%%%%%%%%%%%%%%%%%%%%%%%%%%%%%%%%%%%%%%%%%%%%%%%%%%%%%%%%%%%%%%%%%%%%

                \insertTableCellHead{Serial Kinematic Chain --- Rock 2D
                Environment (M=100, T=3600)}
                %\insertTableCell
                %        {\kpiece}
                %        {$\SE(2)\times\R^6$}
                %        {0}
                %        {3600}{.00}{0}{.00}
                %\insertTableCellWin
                %        {\kpiece \irreducible}
                %        {$\SE(2)$}
                %        {100}
                %        {140}{.32}{92}{.69}
                %\hline
                %\insertTableCell
                %        {\pdst}
                %        {$\SE(2)\times\R^6$}
                %        {0}
                %        {3600}{.00}{0}{.00}
                %\insertTableCellWin
                %        {\pdst \irreducible}
                %        {$\SE(2)$}
                %        {100}
                %        {1}{.96}{2}{.15}
                %\hline
                \insertTableCell
                        {\rrt}
                        {$\SE(2)\times\R^6$}
                        {100}
                        {115}{.63}{148}{.75}
                \insertTableCellWinBold
                        {\rrt \irreducible}
                        {$\SE(2)$}
                        {100}
                        {4}{.24}{3}{.17}

        %\end{tabularx}
        %%%%%%%%%%%%%%%%%%%%%%%%%%%%%%%%%%%%%%%%%%%%%%%%%%%%%%%%%%%%%%%%%%%%%%%
        %%% ROCKS 3D
        %%%%%%%%%%%%%%%%%%%%%%%%%%%%%%%%%%%%%%%%%%%%%%%%%%%%%%%%%%%%%%%%%%%%%%%
        %\begin{tabularx}{\linewidth}{X|>{\raggedright\arraybackslash}p{\sizeCell}|>{\centering\arraybackslash}p{\sizeCell}|r@{}l@{}l@{}r@{}l@{}}
                \insertTableCellHead{Serial Kinematic Chain --- Rocks 3D
                Environment (M=$100$, T=$3600$s)}
                %\insertTableCell
                %        {\kpiece}
                %        {\cssnake}
                %        {0}
                %        {3600}{.00}{0}{.0}
                %\insertTableCell
                %        {\kpiece \irreducible}
                %        {$\SE(3)$}
                %        {81}
                %        {819}{.51}{2268}{.28}
                %\hline
                %\insertTableCell
                %        {\pdst}
                %        {\cssnake}
                %        {0}
                %        {3600}{.00}{0}{.0}
                %\insertTableCell
                %        {\pdst \irreducible}
                %        {$\SE(3)$}
                %        {100}
                %        {419}{.55}{483}{.85}
                %\hline
                \insertTableCell
                        {\rrt}
                        {\cssnake}
                        {0}
                        {3600}{.00}{0}{.0}
                \insertTableCellWinBold
                        {\rrt \irreducible}
                        {$\SE(3)$}
                        {100}
                        {531}{.40}{623}{.32}

        %\end{tabularx}
        %%%%%%%%%%%%%%%%%%%%%%%%%%%%%%%%%%%%%%%%%%%%%%%%%%%%%%%%%%%%%%%%%%%%%%%
        %%% SNAKE TURBINE 3D
        %%%%%%%%%%%%%%%%%%%%%%%%%%%%%%%%%%%%%%%%%%%%%%%%%%%%%%%%%%%%%%%%%%%%%%%
                \insertTableCellHead{Mechanical Snake --- Turbine Environment
                (M=$100$, T=$1200$s)}

                \insertTableCell
                        {\kpiece}
                        {\cssnake}
                        {52}
                        { 625}{.74}{566}{.55}
                \insertTableCellWinBold
                        {\kpiece \irreducible}
                        {$\SE(3)$}
                        {100}
                        {13}{.42}{64}{.83}
                \hline
                \insertTableCell
                        {\pdst}
                        {\cssnake}
                        {15}
                        {1120}{.35}{208}{.10}
                \insertTableCellWin
                        {\pdst \irreducible}
                        {$\SE(3)$}
                        {96}
                        {95}{.43}{250}{.40}
                \hline
                \insertTableCell
                        {\rrt}
                        {\cssnake}
                        {46}
                        {813}{.32}{481}{.19}
                \insertTableCellWin
                        {\rrt \irreducible}
                        {$\SE(3)$}
                        {90}
                        {352}{.40}{392}{.35}
                \hline
                \insertTableCell
                        {\sst}
                        {\cssnake}
                        {51}
                        {811}{.40}{462}{.88}
                \insertTableCellWin
                        {\sst \irreducible}
                        {$\SE(3)$}
                        {87}
                        {360}{.60}{436}{.69}
        %%%%%%%%%%%%%%%%%%%%%%%%%%%%%%%%%%%%%%%%%%%%%%%%%%%%%%%%%%%%%%%%%%%%%%%
        %%% PIPES 3D
        %%%%%%%%%%%%%%%%%%%%%%%%%%%%%%%%%%%%%%%%%%%%%%%%%%%%%%%%%%%%%%%%%%%%%%%

                \insertTableCellHead{Mechanical Octopus --- Pipes Environment
                (M=$100$, T=$1200$s)}
                \insertTableCell
                        {\kpiece}
                        {\cssentinel}
                        {0}
                        { 1200}{.00}{0}{.0}
                \insertTableCellWin
                        {\kpiece \irreducible}
                        {$\SE(3)$}
                        {100}
                        {308}{.72}{60}{.78}
                \hline
                \insertTableCell
                        {\pdst}
                        {\cssentinel}
                        {0}
                        {1200}{.00}{0}{.0}
                \insertTableCellWinBold
                        {\pdst \irreducible}
                        {$\SE(3)$}
                        {100}
                        {23}{.88}{14}{.26}
                \hline
                \insertTableCell
                        {\rrt}
                        {\cssentinel}
                        {0}
                        {1200}{.00}{0}{.0}
                \insertTableCellWin
                        {\rrt \irreducible}
                        {$\SE(3)$}
                        {97}
                        {215}{.05}{270}{.08}
                \hline
                \insertTableCell
                        {\sst}
                        {\cssentinel}
                        {0}
                        {1200}{.00}{0}{.0}
                \insertTableCellWin
                        {\sst \irreducible}
                        {$\SE(3)$}
                        {99}
                        {110}{.10}{207}{.82}

        %%%%%%%%%%%%%%%%%%%%%%%%%%%%%%%%%%%%%%%%%%%%%%%%%%%%%%%%%%%%%%%%%%%%%%%
        %%% HRP2 DOOR
        %%%%%%%%%%%%%%%%%%%%%%%%%%%%%%%%%%%%%%%%%%%%%%%%%%%%%%%%%%%%%%%%%%%%%%%
                \insertTableCellHead{Humanoid Robot HRP-2 --- Doors (M=$100$,
                T=$1200$s)}
                \insertTableCell
                        {\kpiece}
                        {\chrp}
                        {0}
                        { 1200}{.00}{0}{.0}
                \insertTableCellWin
                        {\kpiece \irreducible}
                        {\chrpirr}
                        {31}
                        {1019}{.67}{316}{.00}
                \hline
                \insertTableCell
                        {\pdst}
                        {\chrp}
                        {0}
                        {1200}{.00}{0}{.0}
                \insertTableCellWin
                        {\pdst \irreducible}
                        {\chrpirr}
                        {19}
                        {1051}{.05}{336}{.31}
                \hline
                \insertTableCell
                        {\rrt}
                        {\chrp}
                        {40}
                        {834}{.38}{467}{.78}
                \insertTableCellWinBold
                        {\rrt \irreducible}
                        {\chrpirr}
                        {100}
                        {166}{.36}{181}{.99}
                \hline
                \insertTableCell
                        {\sst}
                        {\chrp}
                        {59}
                        {746}{.34}{426}{.11}
                \insertTableCellWin
                        {\sst \irreducible}
                        {\chrpirr}
                        {99}
                        {175}{.92}{199}{.39}
                %\hline
        %%%%%%%%%%%%%%%%%%%%%%%%%%%%%%%%%%%%%%%%%%%%%%%%%%%%%%%%%%%%%%%%%%%%%%%
        %%% HRP2 WALL
        %%%%%%%%%%%%%%%%%%%%%%%%%%%%%%%%%%%%%%%%%%%%%%%%%%%%%%%%%%%%%%%%%%%%%%%
                \insertTableCellHead{Humanoid Robot HRP-2 --- Wall (M=10,
                T=$86400$s=$24$h)}
                \insertTableCell
                        {\rrt}
                        {\chrp}
                        {0}
                        {86400}{.00}{0}{.0}
                \insertTableCellWinBold
                        {\rrt \irreducible}
                        {\chrpirr}
                        {100}
                        {6077}{.40}{2128}{.15}
                \hline
        \end{tabularx}

\end{center}
\end{table}
%\end{adjustbox}

\section{Conclusion} 

We described the irreducible path space, a novel concept to reduce the
dimensionality of the configuration space. Our main result is given in Theorem
\ref{thm:complete} stating that a motion planning algorithm
using the space of irreducible paths is complete. While this result remains true
if we apply arbitrary constraints, we have focused here exclusively on
collision-free paths.

We have described how to approximate the space of irreducible paths for a serial
kinematic chain by using the space of curvature constrained paths of the root
link. We developed an algorithm to project sublinks into the swept volume of the
root link. This algorithm works for serial kinematic chains with configuration
space $\SE(3) \times \R^{2N}$ or any subset of that. 

We have proven the correctness of this algorithm for a serial chain disk robot
in 2D having revolute joints and equal length between joints. The proof for 3D
robots with spherical joints and arbitrary lengths is subject of future research.

Using the space of irreducible paths, we conducted experiments for several
mechanical systems including a mechanical snake, a mechanical octopus and a
humanoid robot. We compared four state-of-the-art kinodynamic motion planning
algorithms and we showed that each algorithm performs better using the
irreducible path space.

In future work, we will address the generalization to arbitrary constraints, the
automatic discovery of serial kinematic chains, and we will construct the
irreducible path space for more general chain structures, using the serial
kinematic chain as a fundamental building block. Furthermore, we like to apply
the irreducible path space concept to optimal motion planning algorithms \cite{karaman_2011}.

\bibliographystyle{unsrtnat}
\bibliography{bib/general,bib/mathematics}

\newpage

\begin{appendices}
\section{Proof that 2D Serial Chain on Curvature Constraint Path is
Irreducible\label{sec:irrlinearchain}}

We will show that if the root link of a serial kinematic chain moves on a
$\kappa_N$-curvature constrained path, then there exists at least one sublink
configuration, such that all sublinks are inside the swept
volume of the root link. We first prove this for $N=1$ case, then generalize our
proof to the $N>1$ case.

We consider a serial kinematic chain in the plane, consisting of a disk-shaped
root link of radius $\delta_0$ plus $N$ disk-shaped sublinks of radius
$\delta_1,\cdots,\delta_N$. The chain has $N$ revolute joints centered at the
center of each disk. The distance between joints is $l_0,\cdots,l_{N-1}$, and
the radius of the disks is such that $\delta_i \leq \delta_0$ for any $i>0$. We
will denote by $\theta_1,\cdots,\theta_N$ the configuration of the sublinks. The
configuration space of the serial chain is then $\SE(2) \times \R^N$ whereby
each joint is restricted by joint limits. An $N=2$ serial kinematic chain is
visualized in Fig. \ref{fig:snake}. 

We will prove that if the root link moves on a curvature constrained path on
$\SE(2)$, then there exists at least one configuration
$\stheta_1,\cdots,\stheta_N$ such that the sublinks are inside the volume swept
by the root link. 

The proof consists of two parts. First, we prove the result for a serial
kinematic chain with $N=1$ sublinks. Second, we generalize this result to $N>1$
sublinks. The proofs use only elementary notions from differential geometry of
curves like the osculating circle. A comprehensive introduction to curve
geometry can be found in \cite{banchoff_2015}.

\subsection{Single Link Chain}

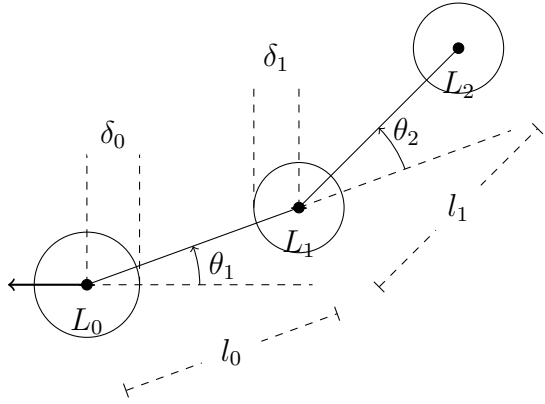
\begin{figure}
        \centering

\begin{tikzpicture}

        \tikzstyle{mcirc}=[draw=black,circle,minimum width=1cm];
        \tikzstyle{bdot}=[draw=black,circle,fill=black];
        \tikzstyle{marc}=[-];

        \def\tI{20}
        \def\tII{45}
        \def\lI{3}
        \def\lII{3}
        \def\dI{0.7cm}
        \def\dII{0.6cm}

        \pgfmathsetmacro\pIx{cos(\tI)*\lI}
        \pgfmathsetmacro\pIy{sin(\tI)*\lI}
        \pgfmathsetmacro\pIIx{cos(\tII)*\lII}
        \pgfmathsetmacro\pIIy{sin(\tII)*\lII}
        \coordinate (p0) at (0,0);
        \coordinate (p1) at ($(p0)+(\pIx,\pIy)$);
        \coordinate (p1p) at ($(p0)+(\lI,0)$);
        \coordinate (p2) at ($(p1)+(\pIIx,\pIIy)$);

        \pgfmathsetmacro\pIIIx{cos(\tI)*\lII}
        \pgfmathsetmacro\pIIIy{sin(\tI)*\lII}
        \coordinate (p2p) at ($(p1)+(\pIIIx,\pIIIy)$);

        \def\zzz{2}

        \coordinate (p0z) at  ($(p0)+(0,\zzz)$);
        \coordinate (p1z) at  ($(p1)+(0,\zzz)$);
        \coordinate (p2z) at  ($(p2)+(0,\zzz)$);

        %%% LINKS

        \draw[mcirc] (p0) circle (\dI);
        \draw[mcirc] (p1) circle (\dII);
        \draw[mcirc] (p2) circle (\dII);

        %%% JOINTS
        \draw[bdot] (p0) circle (2pt) node[below] {$L_0$};
        \draw[bdot] (p1) circle (2pt) node[below] {$L_1$};
        \draw[bdot] (p2) circle (2pt) node[below] {$L_2$};

        \draw[->,thick] (p0) -- ($(p0)+(-1.5*\dI,0)$);
        \draw[marc] (p0) -- (p1);
        \draw[marc] (p1) -- (p2);

        %%% ANGLE DESCRIPTIONS
        \draw[-,dashed] (p0) -- (p1p);
        \draw[-,dashed] (p1) -- (p2p);
        %\draw[->] (p1p) to[bend right] (p1) node
        \def\darcI{0.5*\lI}
        \draw[->] ($(p0)+(0:\darcI)$) arc (0:\tI:\darcI) 
                node[right] at (\tI*0.5:\darcI) {$\theta_1$};
        \def\darcII{0.5*\lII}

        \draw[->] ($(p1)+(\tI:\darcII)$) arc (\tI:\tII:\darcII)
                node[right] at ($(p1)+(\tI+\tII*0.5:\darcII)$) {$\theta_2$};

        %%% DESCRIPTIONS

        \draw[-,dashed] (p0) -- (p0z);
        \draw[-,dashed] (p1) -- (p1z);

        \draw[-,dashed] ($(p1)+(-\dII,0)$) -- ($(p1z)+(-\dII,0)$);
        \draw[-,dashed] ($(p0)+(\dI,0)$) -- ($(p0z)+(\dI,0)$);

        \LeftShiftLine[($(p0)$)][($(p0)+(\dI,0)$)][2cm][$\delta_0$];
        \LeftShiftLine[($(p1)$)][($(p1)+(-\dII,0)$)][-2cm][$\delta_1$];
        \LeftShiftLine[($(p0)$)][($(p1)$)][-1.5cm][$l_0$];
        \LeftShiftLine[($(p1)$)][($(p2)$)][-1.5cm][$l_1$];

\end{tikzpicture}

\caption{$N=2$ serial kinematic chain system}
        \label{fig:snake}
\end{figure}

Let us consider an $N=1$ serial kinematic chain with
disk links $L_0,L_1$ in the plane $\R^2$, connected by a rotational joint at the
center of $L_0$, with distance $l_0$ to the center of $L_1$.  The rotational
joint has an allowed rotation of $\theta \in [-\tl,\tl]$, whereby $\tl$ is the
upper limit joint configuration and $-\tl$ is the lower limit joint
configuration. Let us denote by $p_0=(p_{0,0},p_{0,1})\in\R^2$ the position of
$L_0$, and by $p_0'$ its orientation.  Let us define a cone $\Kcone =
\{(x_0,x_1)\in \R^2| \|x_1-p_{0,1}\| \leq (x_0-p_{0,0}) \tan{\tl}\}$ with apex
$p_0$, orientation $p_0'$, and aperture $\tl$. See Fig. \ref{fig:curvature}.
Given $L_0$ at $(p_0,p_0')$ let us define the set $\dP$ of all possible
positions of $L_1$ as a circle intersecting $\Kcone$ and the corresponding disk
segment $\Pp$ as a disk intersecting $\Kcone$.

%\begin{equation}
\begin{align}
  \Pp &= \{x \in \R^2| \|x-p_0\|\leq l_0\} \cap \Kcone\\
  \dP &= \{x \in \R^2| \|x-p_0\|=l_0\} \cap \Kcone
\end{align}
%\end{equation}

\noindent whereby $\Pp$ and $\dP$ are visualized in Fig. \ref{fig:curvature}.

Let us construct a functional space $\Fk$ such that all
functions from $\Fk$ starting at $(p_0,p_0')$ will \emph{necessarily} have to leave
$\Pp$ by crossing $\dP$.

We define the functional space 
%\begin{equation}
\begin{align}
  \Phi_2 &= \{ \phi \in C^2\ |\ \phi: [0,1] \rightarrow \R^2\}\\
  \F_{\Pp} &= \{ \tau \in \Phi_2\ |\ \tau(0)=p_0, \tau'(0)=p_0', \tau(1)\notin \Pp\}
  %\Fk &= \{ \tau \in \Phi_2\ |\ \tau(0)=p_0, \tau'(0)=p_0', \tau(1)\notin \Pp\}
\end{align}
%\end{equation}
\noindent whereby $C^2$ is the space of all continuous two times differentiable functions. 
Let $\Fk \subseteq \F_{\Pp}$ be the subspace of all curvature
constrained functions

%and with maximum curvature
\begin{align}
  \Fk &= \{ \tau \in \F_{\Pp}\ |\ \kappa(\tau(s)) \leq \kappa_0 \}\\
  \kappa_0 &= \frac{2\sin(\tl)}{l_0}
\end{align}

\noindent whereby $\kappa(\tau(s))$ is the curvature at $\tau(s)$. The curvature
$\kappa_0$ has been constructed in the following way: first, we observe that for
any point $\tau(s)$ on $\tau$ the curvature is defined by $\kappa_0 =
\frac{1}{R_0}$ whereby $R_0$ is the radius of the osculating circle at
$\tau(s)$\citep{banchoff_2015}. We consider paths parametrized by arc-length,
such that $\tau'(s)\cdot\tau''(s) = 0$.  The center of the osculating circle has
to lie therefore in the direction of vector $\tau''(s)$. We are searching for
the minimal osculating circle, which ensures that all functions will necessarily
leave $\Pp$ through $\dP$. This minimal osculating circle touches the most
extreme point of $\dP$, which we call $x_M$:

\begin{equation}
        \begin{aligned}
                x_M &= (l_0 \cos(\tl),l_0 \sin(\tl))^T
        \end{aligned}
\end{equation}
See also Fig. \ref{fig:curvature} for clarification. The minimal osculating
circle can be found by solving the equation
\begin{equation}
        \begin{aligned}
                \|x_M - (0,R_0)^T\|^2 = R_0^2
        \end{aligned}
\end{equation}
The solution is given by 

\begin{equation}
        \begin{aligned}
                R_0 &= \dfrac{l_0}{2\sin(\tl)}
        \end{aligned}
\end{equation}

%\subsection{Reducibility theorems of $\Fk$}

We are going to prove some elementary properties of the functional space $\Fk$,
which will show the conditions under which we can project the sublinks.

\begin{figure}
        \centering
        \def\ll{4}
        \def\ls{0.2cm}
        \def\ttl{30}
        \tikzstyle{mcirc}=[draw=black,circle,minimum width=1cm]
        \tikzstyle{extline}=[-,pattern=dots,dashed,shorten >= -1.7cm]
        \def\dI{0.7cm}
        \def\dII{0.6cm}
        \def\tt{0.05}
        \pgfmathsetmacro\rr{\ll/(2*sin(\ttl))}
        \pgfmathsetmacro\rtheta{asin(\ll/(\rr*2))}

        \def\scaleP{0.65}

        \begin{tikzpicture}[scale=\scaleP]
        \def\ll{3}
        \def\llcone{5}

        \pgfmathsetmacro\llx{cos(\ttl)*\ll}
        \pgfmathsetmacro\lly{sin(\ttl)*\ll}
        \pgfmathsetmacro\lux{cos(\ttl)*\ll}
        \pgfmathsetmacro\luy{-sin(\ttl)*\ll}

        \coordinate (L0) at (0,0);
        \coordinate (L1) at (\ll,0);
        \coordinate (LL) at (\llx,\lly);
        \coordinate (LU) at (\lux,\luy);

        \shade[left color=black!30] ($(L0) + (-\ttl:\llcone)$) arc (-\ttl:\ttl:\llcone)
        -- ($(L0) + (\ttl:0)$) arc (\ttl:0:0) -- cycle;

        \draw[fill=black!10] ($(L0) + (-\ttl:\ll)$) arc (-\ttl:\ttl:\ll)
        -- ($(L0) + (\ttl:0)$) arc (\ttl:0:0) -- cycle;

        \draw[fill=white,ultra thick] ($(L0) + (-\ttl:\ll)$) arc (-\ttl:\ttl:\ll)
        -- ($(L0) + (\ttl:\ll)$) arc (\ttl:-\ttl:\ll) -- cycle;

        \draw[extline] (L0) -- (LL);
        \draw[extline] (L0) -- (LU);

        \draw[mcirc] (L0) circle (\dI);
        \draw[mcirc] (LL) circle (\dII);
        \draw[fill=black] (L0) circle (1pt);
        \draw[fill=black] (LL) circle (1pt);
        \path (L0) -- (L1) node[pos=0.76] {$\Pp$};
        \path (L1) -- (LU) node[pos=0.9,right,xshift=0.05cm] {$\dP$};
        \path (L0) -- (L1) node[pos=1.0,right,xshift=0.05cm] {$\Kcone$};

        %%% S
        \draw[->] (L0) -- ($(L0)+(1,0)$)
        node[pos=0.0,xshift=0.05cm,yshift=0.10cm,below left] {$p_0$};
        %%% S'
        \path (L0) -- ($(L0)+(1,0)$)
        node[pos=1.3,below,xshift=-0.05cm,yshift=0.15cm] {$p_0'$};
        %%% S''
        \draw[->] (L0) -- ($(L0)+(0,1)$) node[pos=1.0,left] {$p_0''$};

        \draw[-,dashed] ($(L0)+(0,-2)$) -- ($(L0)+(0,2)$);
        \end{tikzpicture}
%%%%%%%%%%%%%%%%%%%%%%%%%%%%%%%%%%%%%%%%%%%%%%%%%%%%%%%%%%%%%
%%%%%%%%%%%%%%%%%%%%%%%%%%%%%%%%%%%%%%%%%%%%%%%%%%%%%%%%%%%%%
        \begin{tikzpicture}[scale=\scaleP]
        %%%%%
        \pgfmathsetmacro\llx{cos(\ttl)*\ll}
        \pgfmathsetmacro\lly{sin(\ttl)*\ll}
        \pgfmathsetmacro\lux{cos(\ttl)*\ll}
        \pgfmathsetmacro\luy{-sin(\ttl)*\ll}

        \coordinate (BR) at (0,\ll);
        \coordinate (L1) at (\ll,0);
        \coordinate (L0) at (0,0);
        \coordinate (LL) at (\llx,\lly);
        \coordinate (LU) at (\lux,\luy);

        %%%%%

        \draw[fill=black!10] ($(L0) + (-\ttl:\ll)$) arc (-\ttl:\ttl:\ll)
        -- ($(L0) + (\ttl:0)$) arc (\ttl:0:0) -- cycle;

        \draw[fill=white,ultra thick] ($(L0) + (-\ttl:\ll)$) arc (-\ttl:\ttl:\ll)
        -- ($(L0) + (\ttl:\ll)$) arc (\ttl:-\ttl:\ll) -- cycle;

        \draw[fill=black] (BR) circle (1pt) node[above]{$(0,R_0)$};

        \draw[fill] (L0) circle (1pt);
        \draw[fill=black] (L1) circle (1pt);

        \draw[->] (L0) -- (\ll+1,0) node[below] {$x_0$};
        \draw[->] (L0) -- (0,2) node[left] {$x_1$};
        \draw[-,dashed] (L0) -- (BR);
        
        \draw[-,dashed] (BR) -- (LL) node[pos=0.5,above] {$R_0$};
        \draw[fill=black] (LL) circle (1pt) node[right] {$x_M$};

        \path (L0) -- ($(L0)+(1,0)$) node[pos=0.0,below] {$p_0$};

        \draw[fill=white,thick] ($(L0)+(0,\rr)+(-90:\rr)$) arc (-90:-0:\rr)
        -- ($(L0)+(0,\rr)+(-0:\rr)$) arc (-0:-90:\rr) -- cycle;

        \draw[pattern=dots, pattern color=gray,thick] ($(L0)+(0,\rr)+(-90:\rr)$) arc (-90:-\rtheta:\rr)
        -- ($(L0) + (\ttl:\ll)$) arc (\ttl:0:\ll) -- cycle;

        \draw[->] ($(LU)+(-0.5,-0.2)$)--($(L1)+(-1,0.5)$) node[pos=0.0,below]{$\Pp\cap LD(s)
        \setminus B_R(0,R)$};

        \def\offset{1.0}
        \draw[thick,->] ($(L0)+(0:\ll+\offset)$) arc (0:+\rtheta:\ll+\offset) node[right] at (\rtheta*0.5:\ll+\offset) {$\tl$};

        \end{tikzpicture}

\caption{$\dP$ is the space of all possible positions of link $L_1$, constrained
by link $L_0$. We establish in this section that for a specifically constructed
functional space $\Fk$ any function which starts at $p_0$ and has first derivative
equal to $p_0'$ will leave the area $P$ by crossing $\dP$.\label{fig:curvature}}

\end{figure}
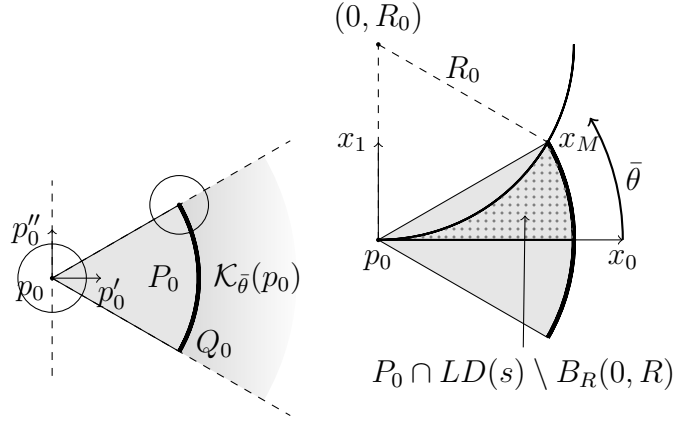

%%%%%%%%%%%%%%%%%%%%%%%%%%%%%%%%%%%%%%%%%%%%%%%%%%%%%%%%%%%%%%%%%%%%%%%%%%%%%%%
\begin{theorem}
        For all $\tau \in \Fk$ there exists $s_0\in[0,1]$ such that $\tau(s_0) \in
        \dP$ and $\tau(s) \in \Pp$ for all $s \leq s_0$. \label{thm:curvzero}
\end{theorem}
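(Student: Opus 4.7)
The plan is to locate the first exit point of $\tau$ from $\Pp$ and show that it must lie on the arc $\dP$ rather than on one of the two straight radial edges of the cone $\Kcone$. To set up, I would place coordinates so that $p_0 = (0,0)$ and $p_0' = (1,0)$, then define
\[ s_0 = \inf\{\, s \in [0,1] : \tau(s) \notin \Pp \,\}. \]
Since $\tau(1) \notin \Pp$, $\tau(0) = p_0 \in \Pp$, and $\tau$ is continuous, this infimum is well-defined with $0 < s_0 \leq 1$. Because $\Pp$ is a closed set (an intersection of a closed disk and a closed cone), a standard limiting argument gives $\tau(s_0) \in \partial \Pp$, while the infimum definition yields $\tau(s) \in \Pp$ for all $s \leq s_0$. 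It therefore only remains to show $\tau(s_0) \in \dP$.

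Next, I would decompose $\partial \Pp$ as the union of $\dP$ and the two straight radial edges $E_+, E_-$ of the cone $\Kcone$ joining $p_0$ to the extreme points $x_M^+ = (l_0 \cos\tl, l_0 \sin\tl)$ and $x_M^- = (l_0 \cos\tl, -l_0 \sin\tl)$. The strategy is to rule out exit through $E_+$ or $E_-$ using the curvature bound. Introduce the two tightest-turn osculating disks of radius $R_0 = l_0 / (2\sin\tl)$,
\[ D_+ = \{ x \in \R^2 : \| x - (0, R_0) \| < R_0 \}, \quad D_- = \{ x \in \R^2 : \| x - (0, -R_0) \| < R_0 \}, \]
each tangent to the $x$-axis at the origin. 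The computation already carried out in the paper to verify $\|x_M - (0,R_0)\| = R_0$ generalizes verbatim to show that every point in the open segment $E_+ \setminus \{p_0, x_M^+\}$ lies in $\interior(D_+)$, and symmetrically for $E_-$ and $D_-$. Consequently, if $\tau(s_0)$ were to fall on an open radial edge, $\tau$ would have entered $\interior(D_+)$ or $\interior(D_-)$ strictly before time $s_0$.

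The main step, and what I expect to be the principal obstacle, is a curvature-comparison lemma: no $\tau \in \Fk$ can enter $\interior(D_+)$ or $\interior(D_-)$. I would prove this by reparametrizing $\tau$ by arc-length (valid after discarding stationary parameters, and without loss of generality since the swept-volume analysis of Sec.~\ref{sec:irrstructure} is reparametrization-invariant) and then studying the scalar function $h(s) = \| \tau(s) - (0, R_0) \|^2 - R_0^2$. Differentiating twice and using $\|\tau'(s)\| = 1$ together with $\|\tau''(s)\| = \kappa(s) \leq 1/R_0$ yields $h(0) = h'(0) = 0$ and a second-order inequality that, combined with the geometry of the unit normal on the side facing $D_+$, forces $h(s) \geq 0$ on any interval in which $\tau$ remains within distance $R_0$ of $(0, R_0)$; a short continuation argument then extends this globally. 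The delicate point is tracking the sign of the curvature, since $\tau$ may bend either way and one must identify which osculating disk is approached from which side, but these are standard features of planar unit-speed curves with bounded curvature. Combining the three ingredients, $\tau(s_0)$ cannot lie on either open radial edge, and it cannot equal $p_0$ (since $s_0 > 0$); hence $\tau(s_0) \in \dP$, as required.
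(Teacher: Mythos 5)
Your setup is sound and, up to the last step, cleaner than the paper's own write‑up: defining $s_0$ as the first exit time, deducing $\tau(s_0)\in\partial\Pp$ from closedness of $\Pp$, decomposing $\partial\Pp$ into the arc $\dP$ and the two radial edges, and the verbatim generalization of the tangency computation showing that the open edges lie in $\interior(D_+)\cup\interior(D_-)$ are all correct (for a point $t\,x_M^+$, $t\in(0,1)$, one gets $\|t\,x_M^+-(0,R_0)\|^2=R_0^2+l_0^2\,t(t-1)<R_0^2$). This is the same reduction the paper makes implicitly when it imports Lemma~6 of Ahn et al.\ and the pocket lemma of Agarwal et al.

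The gap is in your main step, in two ways. First, the lemma as stated --- no $\tau\in\Fk$ ever enters $\interior(D_\pm)$ --- is false without a time restriction: paths in $\Fk$ are not confined to $\Pp$, and a unit-speed curve with $\kappa\le 1/R_0$ \emph{does} enter a disk tangent to its initial direction once its tangent has rotated enough (run straight a distance $a\in(0,2R_0)$ along the $x$-axis, then turn left at maximal curvature; after the tangent has rotated by more than $\pi$ the arc crosses into $\interior(D_+)$). What you need, and what is true, is only that $\tau$ cannot enter $\interior(D_\pm)$ \emph{before its first exit from $\Pp$}. Second, and more seriously, the functional $h(s)=\|\tau(s)-(0,R_0)\|^2-R_0^2$ cannot deliver even that restricted claim. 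The curvature bound gives $h''\ge 2-\tfrac{2}{R_0}\sqrt{h+R_0^2}\ge 0$ only on the set $\{h\le 0\}$; combined with $h(0)=h'(0)=0$ this rules out entry at the initial tangency (and at any later tangential touch), but it says nothing against a later \emph{transversal} crossing of $\partial D_+$ at an interior point of the arc $\partial D_+\cap\Pp$, where $h=0$ with $h'<0$. Convexity of $h$ on $\{h\le 0\}$ is perfectly consistent with the curve diving through the disk and emerging on the far side, so the ``short continuation argument'' you defer to is exactly the step that cannot be completed with this comparison function. Excluding the transversal re-entry requires a global argument relating the rotation of the tangent to the arc length available inside the pocket $\Pp\setminus(\interior(D_+)\cup\interior(D_-))$; that is Dubins-type reachability, and it is precisely the content of the forward-chain/pocket lemmas that the paper cites rather than reproves. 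As written, your proof establishes the easy half (local non-entry at the tangency) and asserts the hard half.
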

%%%%%%%%%%%%%%%%%%%%%%%%%%%%%%%%%%%%%%%%%%%%%%%%%%%%%%%%%%%%%%%%%%%%%%%%%%%%%%%
Explanation: any path from the functional space $\Fk$ will
leave the region $\Pp$ by crossing $\dP$. Visualized in Fig. \ref{fig:cone}.

\begin{figure}
        \centering
%\begin{tikzpicture}
%%%arc (startAngle:endAngle:radius and 2ndaxis Radius)
%        \def\Cradius{2}
%        \draw (0,0) arc (-90:90:0.5cm and 2cm) -- (-3,\Cradius) -- cycle;
%        \shade[left color=gray!20!white,right color=gray!40!white,opacity=0.3] 
%              (0,0) arc (-90:90:0.5cm and 2cm) -- (-3,\Cradius) -- cycle;
%        \draw[dashed] (0,0) arc (270:90:0.5cm and 2cm);
%        \draw[fill=black] (0,2*\Cradius) circle (2pt) node[right] {$x_M$};
%        \draw[fill=black] (0,2*\Cradius) circle (2pt) node[right] {$x_M$};
%
%        \path (-3,\Cradius) -- (0,2*\Cradius) node[pos=0.5,above] {$l_0$};
%        \path (-3,\Cradius) -- (0,\Cradius) node[pos=0.5] {$C_s$};
%        \draw[fill=black] (-3,\Cradius) circle (2pt) node[left] {$s$};
%        \draw[->] (-3,\Cradius) -- (-2,\Cradius) node[above,pos=0.8] {$s'$};
%        \draw[->] (-3,\Cradius) -- (-3,\Cradius+1) node[left,pos=0.8] {$s''$};
%\end{tikzpicture}
\includegraphics[width=0.5\linewidth]{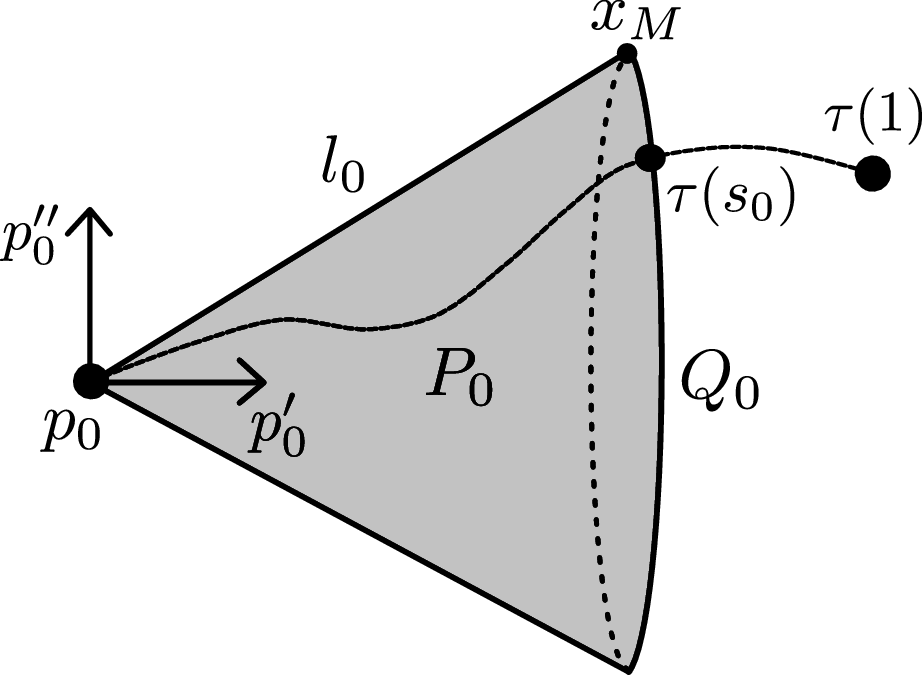}

\caption{Cone spanned by the length $l_0$, the limit angle $\tl$ and the
position of $s$. Every function from $\Fk$ will necessarily leave $\Pp$ by
crossing $\dP$ at $\tau(t_0)$ to reach a point $\tau(1)$ outside $\Pp$.\label{fig:cone}}
\end{figure}

\begin{proof}

        Let us decompose the problem into two parts.  First, let us show that
        all circles with center $(0,R)$ and radius $R\geq R_0$ will intersect
        $\dP$.  Second, let us show that all paths from $\Fk$ will necessarily
        leave $\Pp$ by crossing $\dP$, such that there is no path crossing the
        ball $B_R(0,R)$ for given curvature $\kappa=\frac{1}{R}$.

        \begin{itemize}

                \item 
                  
                  By construction the circle with radius $R_0$ intersects $\dP$
                  at the point specified by angle $\tl = \theta(R) = \asin\left(
                  \dfrac{l_0}{2R} \right)$.  Since $\asin$ is monotone
                  increasing on $[0,1]$, $l_0,R \geq 0$ and $l_0 \leq 2R$, we
                  have that $\theta(R)\geq 0$. We have $\tl \geq \theta(R)$
                  since $\asin\left( \dfrac{l_0}{2R_0} \right) \geq \asin\left(
                  \dfrac{l_0}{2R} \right)$ and therefore we can write
                  $\dfrac{l_0}{2R_0} \geq \dfrac{l_0}{2R}$ since $\asin$ is
                  monotone increasing. It follows that $R\geq R_0$.

                \def\Pcurv{P_0 \setminus (B_R(0,R) \cup B_R(0,-R))} 
        
                \item 
                  
                Let us define the left side of $p_0$ as $LD(p_0) = \{(x_0,x_1)
                \in \R^2|(x_0-p_{0,0})\geq 0, (x_1-p_{0,1})\geq 0\}$ and let us
                construct a polygonal chain as defined by \cite{ahn_2012}. For
                a given $R \geq R_0$ start on the boundary of $\Pp$ at point
                $p_0$ and follow direction $p_0'$ until $\dP$ is reached. At
                $\dP$ move along on $\dP$ until the ball with radius $R$ is
                intersected.  This constitutes a polygonal forward chain
                \citep{ahn_2012}. This chain follows the boundary of $\Pp \cap
                LD(p_0)$. Let us apply Lemma $6$ in \cite{ahn_2012}, stating
                that if a forward chain intersects the circle of radius $R$,
                then the reachable region of all paths in $\Fk$ is given by
                $\Pp \cap LD(p_0) \setminus B_R(0,R)$. See Fig.
                \ref{fig:curvature} for visualization. Applying the pocket
                lemma from \cite{agarwal_2002} it follows that no path can
                escape the region $\Pp \cap LD(p_0) \setminus B_R(0,R)$ except
                through $\dP$ or the lower boundary. The same arguments apply
                for the right side of $p_0$ with $RD(p_0) = \{x \in \R^2|
                x_0-p_{0,0} \geq 0,x_1-p_{0,1} \leq 0\}$ and therefore any
                function in $\Fk$ starting in $p_0$ can escape the region
                $\Pcurv \subset \Pp$ only through the arc segment $\dP$. Since
                $\tau(1)\notin\Pp$, the result follows.
                        
        \end{itemize}

\end{proof}
\def\bd{D_{\delta_0}}
\def\bdd{D_{\delta_1}}

Theorem \ref{thm:curvzero} assures that a particle starting at $(p_0,p'_0)$, following
$\tau \in \Fk$ will always cross the arc segment $\dP$. Now we consider the sweeping of disks
$D_{\delta}(p) = \{x\in\R^2|\|x-p\|\leq \delta\}$ with radius $\delta$ along a
path $\tau \in \Fk$. Let us define $L_0 = \bd(p_0)$, $L_1(\theta) =
\bdd(p_1(\theta))$ with $p_1(\theta) = (l_0 \cos(\theta),l_0 \sin(\theta))$. Let
$\oplus$ denote the Minkowski sum.

\begin{theorem}

        Let $L_0 = \bd(p_0)$. Then there exist a $\theta_1 \in [-\tl,\tl]$ with the property that
        for all $\tau \in \Fk$ there exists $s_0 \in [0,1]$ such that $L_1(\theta_1)
        \subset (\tau(s_0) \oplus L_0)$ if $\delta_1 \leq \delta_0$.
        \label{thm:curvdelta}

\end{theorem}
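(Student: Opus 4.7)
The plan is to exhibit a single angle $\theta_1 \in [-\tl, \tl]$, independent of the path $\tau$, and then to show that each $\tau \in \Fk$ admits a time $s_0$ at which the sublink disk $L_1(\theta_1)$ is contained in the displaced root disk $\tau(s_0) \oplus L_0$. By symmetry of the cone $\Kcone$ about its axis $p_0'$, the natural candidate is $\theta_1 = 0$, so that $L_1(0)$ is centered at the midpoint $(l_0, 0)$ of the arc $\dP$. Fixing this universal choice at the outset is what separates the claim from a per-path argument: the path-dependence is pushed entirely into the choice of $s_0(\tau)$.

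The proof would then proceed in two steps. First, I would invoke Theorem~\ref{thm:curvzero} to obtain, for every $\tau \in \Fk$, a time $s^{\mathrm{exit}} \in [0, 1]$ at which $\tau$ first meets $\dP$, and define
\begin{equation*}
  s_0(\tau) \; := \; \arg\min_{s \in [0, s^{\mathrm{exit}}]} \|\tau(s) - p_1(0)\|,
\end{equation*}
which exists by continuity of $\tau$ and compactness of the interval. Second, I would reduce the inclusion $L_1(0) \subset \tau(s_0(\tau)) \oplus L_0$ to the single scalar inequality $\|\tau(s_0(\tau)) - p_1(0)\| \leq \delta_0 - \delta_1$, which under the hypothesis $\delta_1 \leq \delta_0$ is exactly the distance bound to verify.

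The main obstacle, and by far the hardest step, is proving the uniform estimate $\sup_{\tau \in \Fk} \|\tau(s_0(\tau)) - p_1(0)\| \leq \delta_0 - \delta_1$. My plan for this is to use the characterization of the $\Fk$-reachable region inside $\Pp$ developed in the proof of Theorem~\ref{thm:curvzero}: this region is carved out by the two extremal osculating circles of radius $R_0 = l_0 / (2 \sin\tl)$ tangent to $p_0'$ at $p_0$, and every $\tau \in \Fk$ is confined to it. Since $\tau(0) = p_0$ lies at distance $l_0$ from $p_1(0)$ and $\tau(s^{\mathrm{exit}}) \in \dP$ lies at distance at most $2 l_0 \sin(\tl/2)$ from $p_1(0)$, the intermediate-value behaviour of $s \mapsto \|\tau(s) - p_1(0)\|$ on the reachable set pins the minimiser to the region where the path crosses the axis $[p_0, p_1(0)]$; the worst case is the $\kappa_0$-arc sweeping from $p_0$ to a corner of $\dP$, for which an explicit computation gives a closest approach of $l_0 \sin\tl$ to $p_1(0)$. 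The delicate part is then to absorb this extremal quantity into the bound $\delta_0 - \delta_1$ using the chain-geometric conditions implicit in the construction of $\Fk$ (in particular, the fact that $\kappa_0$ is built from $\tl$ and $l_0$ in the rigid relation $\kappa_0 = 2\sin(\tl)/l_0$), thereby closing the inclusion uniformly in $\tau$.
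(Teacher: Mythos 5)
Your proposal takes the quantifier order in the statement literally ($\exists\,\theta_1\ \forall\,\tau$) and therefore commits to a single, path-independent angle $\theta_1=0$. That forces you to prove the uniform estimate $\sup_{\tau\in\Fk}\|\tau(s_0(\tau))-p_1(0)\|\leq \delta_0-\delta_1$, and this estimate is false under the theorem's hypotheses: the only assumption is $\delta_1\leq\delta_0$, with no relation tying $\delta_0-\delta_1$ to $l_0$ and $\tl$. In the extreme case $\delta_1=\delta_0$ your bound demands that \emph{every} $\tau\in\Fk$ pass exactly through the fixed point $p_1(0)$, which is clearly not the case (the extremal $\kappa_0$-arc you yourself analyse exits through the corner $x_M$ of $\dP$ and stays at distance on the order of $l_0\sin\tl$ from $(l_0,0)$). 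So the "delicate part" you defer --- absorbing $l_0\sin\tl$ into $\delta_0-\delta_1$ --- cannot be closed; there is nothing in the construction of $\Fk$ that makes $l_0\sin\tl\leq\delta_0-\delta_1$.

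The paper's proof avoids this entirely by letting $\theta_1$ depend on $\tau$ (i.e., it proves the $\forall\,\tau\ \exists\,\theta_1\ \exists\,s_0$ version, which is how the result is actually used downstream; the statement's quantifier order should be read accordingly). Concretely: Theorem~\ref{thm:curvzero} gives a crossing point $\tau(s_0)\in\dP$, and since $\dP$ is precisely the set of reachable positions of $L_1$, one chooses $\theta_1$ so that $p_1(\theta_1)=\tau(s_0)$, namely $\theta_1=\acos\bigl((\tau(s_0)-\tau(0))^Tp_0'/l_0\bigr)$ with the appropriate sign. Then $L_1(\theta_1)$ and $\tau(s_0)\oplus L_0$ are \emph{concentric} disks of radii $\delta_1$ and $\delta_0$, and your own reduction of the inclusion to $\|c_0-c_1\|\leq\delta_0-\delta_1$ (which is correct) closes immediately because the distance is zero. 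I recommend you keep your disk-inclusion reduction but replace the universal choice of $\theta_1$ by this path-dependent one.
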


\begin{proof}

        Applying Theorem \ref{thm:curvzero} a function $\tau \in \Fk$ will
        necessarily intersect $\dP$.  Let $\tau(s_0) \in \dP$ be the
        intersection point. Let us choose $p_1(\theta_1) = \tau(s_0)$ as the
        position of link $L_1$. $\theta_1$ can be computed as $\theta_1 = \acos
        \left( \dfrac{(\tau(s_0)-\tau(0))^Tp_0'}{l_0} \right)$. The volume of
        link $L_1(\theta_1)$ is given by $(\tau(s_0) \oplus L_1(\theta_1))$, and
        is smaller than $(\tau(s_0) \oplus L_0)$ exactly when $\delta_1 \leq
        \delta_0$.

\end{proof}

\subsection{Multi Link Chain\label{sec:irrproof}}

Let $L_0,\cdots,L_N \in D^2$ be disk links of radius $\delta_0,\cdots,\delta_N$
connected by lines of equal length $l_0,\cdots,l_{N-1}$ with
$l_0=\cdots=l_{N-1}$, $\delta_i > 0$, $l_i > \delta_i+\delta_{i+1}$ (no
overlapping disks), $\delta_i \leq \delta_0$ for all $i>0$ and joint limits
$\{\{-\tl_0,\tl_0\},\cdots,\{-\tl_{N-1},\tl_{N-1}\}\}$ with
$\tl_0=\cdots=\tl_{N-1}$. We will refer to this serial kinematic chain structure
as $\RobotLL$.

Let $V(\theta_0,\cdots,\theta_{N-1})$ be the swept volume of the chain without
the links for a given set of configurations. We define $\Pn$ as the union of all
swept volumes of the chain under the constraint that $-\tl \leq \theta_i \leq
\tl$ for every $i \in [0,N-1]$.  This is depicted in Fig.
\ref{fig:conesuccession}. Further, let $\dPn$ be the part of the outer border
which we obtain by removing the swept volume of the minimum joint configuration
$V(\theta_0=-\tl,\cdots,\theta_{N-1}=-\tl)$, and the swept volume of the maximum
joint configuration $V(\theta_0=-\tl,\cdots,\theta_{N-1}=-\tl)$, from the
boundary of $\Pn$.  $\dPn$ is shown in Fig. \ref{fig:conesuccession}.

As in the $N=1$ case, let us construct a functional space $\Fkn$ as

\begin{equation}
        \begin{aligned}
          \F_{\Pn} &= \{ \tau \in \Phi_2\ |\ \tau(0)=p_0, \tau'(0)=p_0', \tau(1)\notin \Pn\}
        \end{aligned}
\end{equation}

\noindent Let $\Fkn \subseteq \F_{\Pn}$ be the subspace of curvature constrained functions
%and for all $\tau \in
%\Fkn$ we have a maximum curvature given by 

\begin{align}
    \Fkn &= \{ \tau \in \F_{\Pn}\ |\ \kappa(\tau(s)) \leq \kappa_N \}\\
    \kappa_N &= \frac{2\sin(\tl)}{\sumL}, N>1
\end{align}

\begin{figure}
        \centering
        \includegraphics[width=0.6\linewidth]{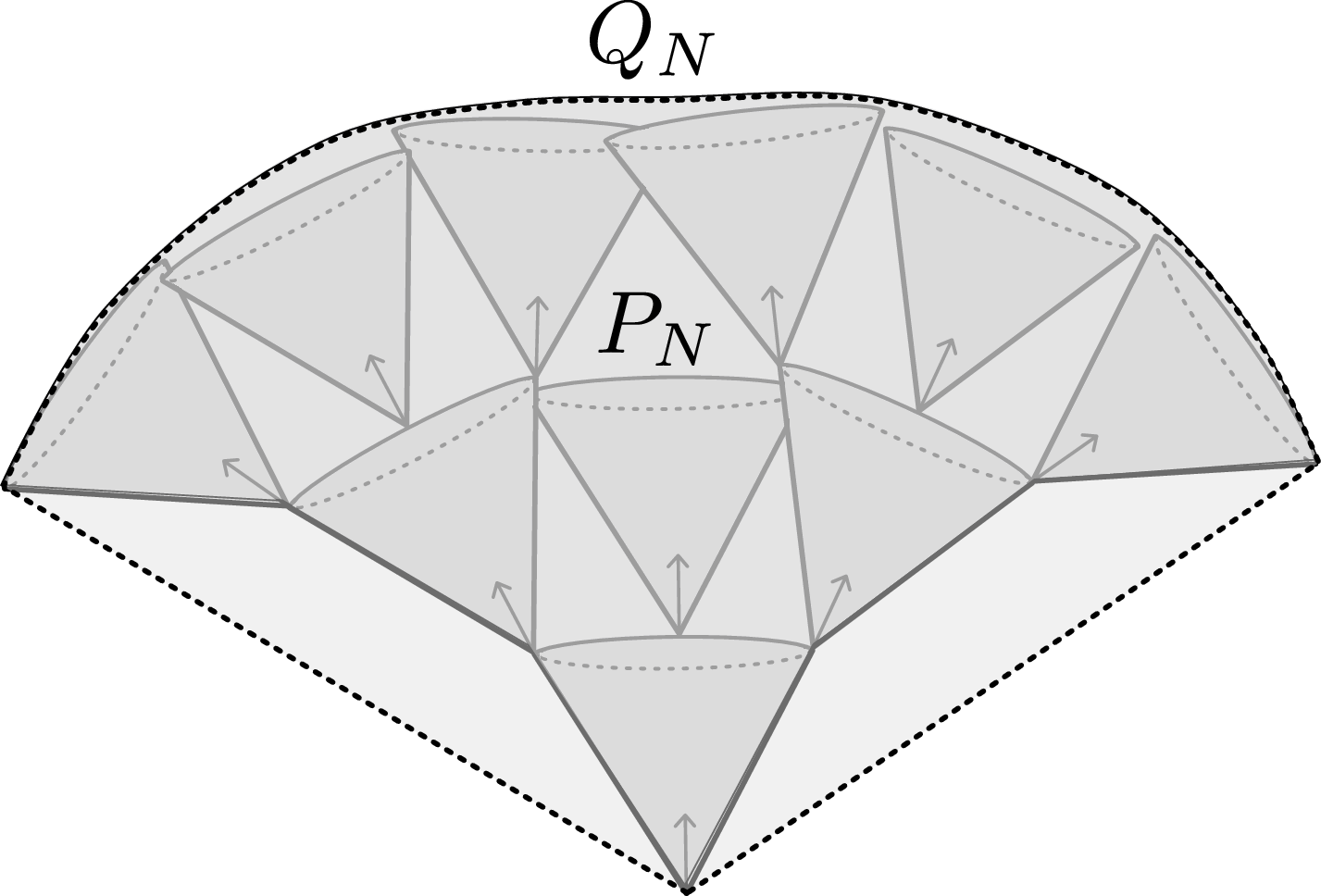}
        \caption{A succession of cones, spanning the space between $s$ and
                $\dPn$, which necessarily has to be traversed by any
        function from $\Fkn$.\label{fig:conesuccession}}
\end{figure}

%\subsection{Irreducibility of serial kinematic chain\label{sec:irrproof}}

For $N=1$, we proved that there exist $\theta_0$ such that $L_1(\theta_0) \in \tau \oplus L_0(p_0)$. For
$N>1$, we need to take into account the change of orientation when the point
has moved from $L_0$ to $L_1$. At $L_1$, we need to make sure that the obtained
orientation $\theta_0$ and the next orientation $\theta_1$ are both below the
maximum orientation $\tl$.
See Fig.  \ref{fig:nll} for clarification. 

To ensure that we can always find a feasible
configuration, such that all links are on $\tau$, we therefore need to ensure
that $\theta_i + \theta_{i-1} \leq \tl$ for all $i \in[1,N]$.

\begin{figure}[h!]
\centering
\includegraphics[width=0.6\linewidth]{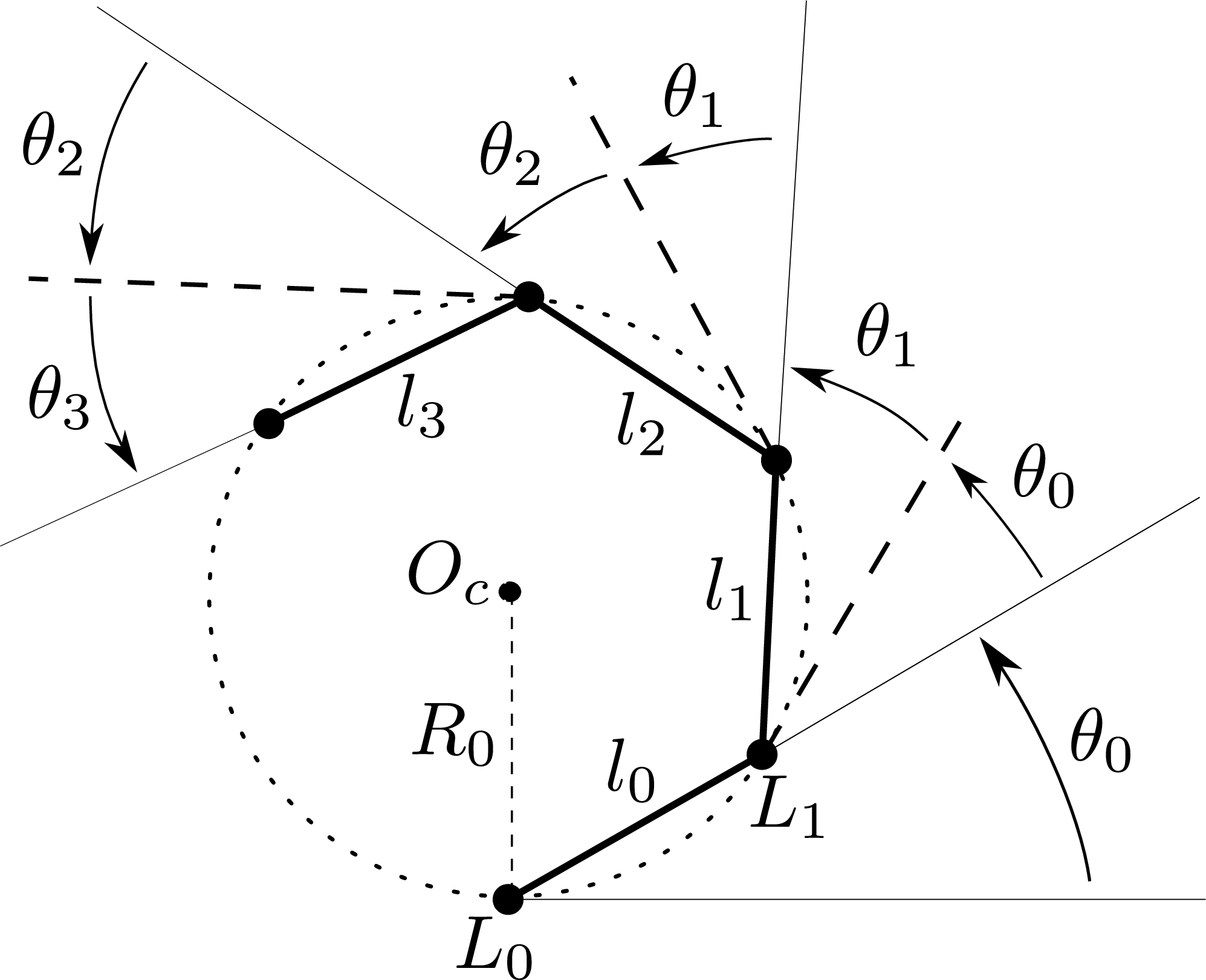}
\caption{\label{fig:nll}Serial kinematic chain along a curve $\tau$ with
constant curvature $\kappa_N$.}
\end{figure}

%
%\begin{enumerate}[{\bf P1}]
%        \item If $l_0=l_i$ for all $i \in [1,N]$, then $\td=\tdi$ for all $i \in [1,N]$
%        \item Maximum angle between $t$ and $n$ can be found for $\tau =
%                \tau_{\kappa_N}$ with $\tau_{\kappa_N}$ being the constant
%                maximum curvature path with curvature $\kappa_N$
%                everywhere.
%\end{enumerate}

We like to rewrite all angles in terms of the radius of the osculating circle
$R_0$ and the length of the links $l_i$. 
The angle $\ti$ can be readily expressed as 

\begin{equation}
  \small
  \begin{aligned}
    \ti(R_0,l_i) &= \arctan{\left(\dfrac{l_i}{\sqrt{4R_0^2-l_i^2}}\right)}\\
  \end{aligned}
\end{equation}

This expression is obtained by centering a coordinate system at the center
$O_c=(0,0)$ of the circle, then drawing two circles, one about $O_c$ with radius
$R_0$, one about $L_1$ with radius $l_0$. Since we have chosen $R_0$ such that
$R_0 > l_0$, those circles have two intersection points. The two intersection
points together with $O_c$ and $L_1$ create a geometric kite, which can be
analyzed by geometrical inspection to arrive at the equation, see circle-circle
intersection
\footnote{\href{http://mathworld.wolfram.com/Circle-CircleIntersection.html}{Circle-Circle
Intersection -- Wolfram Mathworld}}.

%By geometrical arguments of circle-circle intersection
%\footnote{\href{http://mathworld.wolfram.com/Circle-CircleIntersection.html}{Circle-Circle
%Intersection -- Wolfram Mathworld}}, we can write
%\def\Asqrt{\sqrt{4R_0^2-l_0^2}}
%\begin{equation}
%  \small
%\begin{aligned}
%        n(R_0,l_0) &= \begin{pmatrix} \frac{l_0}{2R_0}\Asqrt\\
%                                        \frac{l_0^2}{2R_0}
%                        \end{pmatrix}\\
%        t(R_0,l_0) &= \begin{pmatrix} \frac{-l_0^2}{2R_0}+R_0\\
%                                        \frac{l_0}{2R_0}\Asqrt
%                        \end{pmatrix}\\
%\end{aligned}
%\end{equation}

%\begin{equation}
%  \small
%\begin{aligned}
%        \tt &= \arctan{\left(\dfrac{l_0}{\sqrt{4R_0^2-l_0^2}}\right)}\\
%        \td &= \arccos{\left(\dfrac{n\cdot t}{\|n\|\|t\|}\right)} =
%        \arccos{\left(\dfrac{4R_0^2-l_0^2}{4R_0^2}\right)}
%\end{aligned}
%\end{equation}

Let
\begin{equation}
\begin{aligned}
  %R_0 &= \dfrac{\sum\limits_{i=0}^{N-1} l_i}{2\sin{\tl}}
  R_0 &= \dfrac{N l_0}{2\sin{\tl}}
\end{aligned}
\end{equation}

such that $\Fkn$ is defined by $\kappa_N = \frac{1}{R_0}$.

\begin{lemma}

%Given premises \preone, \pretwo and 

  Given a path $\tau \in \Fkn$ with $\frac{1}{\kappa_N} = R_0 = \dfrac{N
  l_0}{2\sin{\tl}}$ and $N>1$, there exist joint configurations
  $\theta_1,\cdots,\theta_N$ for the serial kinematic chain $\RobotLL$, such
  that every $L_i$ is located on $\tau$.  Furthermore, the maximum distance
  between $\tau$ and the lines $(L_0L_1)\cdots (L_{N-1}L_N)$ is given by
  \begin{equation}
    \begin{aligned}
      \dkn=R_0 - \sqrt{R_0^2-\dfrac{l_0^2}{4}}
    \end{aligned}
  \end{equation}
  
  %given by $\dkn=\underset{i \in {0,N-1}}{\max}\left(R_0 -
  %\sqrt{R_0^2+\dfrac{l_i^2}{4}}\right)$

\label{thm:conftau}

\end{lemma}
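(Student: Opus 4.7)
The plan is to reduce the multi-link problem to iterated applications of the single-link Theorem~\ref{thm:curvzero} with a tightened cone aperture, then bound the chord-to-curve distance by the sagitta of the extremal circular arc. The key observation is that setting $\alpha:=\arcsin(\sin\tl/N)$ rewrites the hypothesis $\kappa_N=2\sin\tl/(Nl_0)$ as $\kappa_N=2\sin\alpha/l_0$, which is exactly the curvature bound appearing in Theorem~\ref{thm:curvzero} for link length $l_0$ and aperture $\alpha$. Hence $\tau\in\Fkn$ satisfies the hypothesis of Theorem~\ref{thm:curvzero} with aperture $\alpha$ in place of $\tl$.

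First, I would parameterize $\tau$ by arc length and apply Theorem~\ref{thm:curvzero} iteratively. At step $i$, I translate the frame to $p_{i-1}:=\tau(s_{i-1})$ with axis along $\tau'(s_{i-1})$; the restriction $\tau|_{[s_{i-1},1]}$ then lies in the corresponding curvature-constrained function space over the translated cone of aperture $\alpha$. By Theorem~\ref{thm:curvzero}, $\tau$ must exit this cone through the arc at radius $l_0$ at some first time $s_i>s_{i-1}$, yielding $p_i:=\tau(s_i)$ on $\tau$ at chord distance $l_0$ from $p_{i-1}$ with chord-tangent angle $|\angle(\tau'(s_{i-1}),p_i-p_{i-1})|\leq\alpha$. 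The existence of $s_N\leq 1$ follows from the nested-cone construction of $\Pn$ (Fig.~\ref{fig:conesuccession}), whose radial reach along the initial tangent is $Nl_0$, together with $\tau(1)\notin\Pn$.

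Next, I would verify the joint limit $|\theta_i|\leq\tl$. The joint angle at $p_{i-1}$ is the oriented angle between chords $\overline{p_{i-2}p_{i-1}}$ and $\overline{p_{i-1}p_i}$; by the triangle inequality of angles and the chord-tangent bound at $p_{i-1}$ applied to both chords, it is at most $2\alpha=2\arcsin(\sin\tl/N)$. Using the identity $\sin\tl=2\sin(\tl/2)\cos(\tl/2)\leq N\sin(\tl/2)$ valid for $N\geq 2$, we obtain $\alpha\leq\tl/2$, hence $|\theta_i|\leq\tl$. The boundary cases $\theta_1$ and $\theta_N$, which involve only a single chord-tangent angle rather than a sum, satisfy the bound a fortiori.

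Finally, the maximum perpendicular distance between $\tau$ and any chord $\overline{p_{i-1}p_i}$ is bounded by the sagitta of the extremal configuration: a chord of length $l_0$ on a circle of radius $R_0$ lies at distance $R_0-\sqrt{R_0^2-l_0^2/4}$ from the farthest point on the arc, which equals $\dkn$. Any $\tau$ with pointwise smaller curvature yields a strictly smaller sagitta, so this is the correct tight upper bound. The main obstacle will be rigorously propagating the hypothesis of Theorem~\ref{thm:curvzero} through successive frame translations, specifically ensuring that the first-exit times $s_i$ exist and remain below $1$ for every $i\leq N$; this is handled by the nested-cone geometry of $\Pn$ which forces $\tau$ to cross successive chord distances $l_0,2l_0,\ldots,Nl_0$ before escaping, guaranteeing that each local application of Theorem~\ref{thm:curvzero} finds its exit point within the remaining portion of the path.
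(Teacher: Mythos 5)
Your proposal follows essentially the same route as the paper: place the links at successive points of $\tau$ at chord distance $l_0$, bound each joint angle by twice the tangent--chord angle $\arcsin(\sin\tl/N)$ (the paper writes this quantity as $\arctan\bigl(\sin\tl/\sqrt{N^2-\sin^2\tl}\bigr)$ and bounds it by $\tl/2$ via $\arctan x\le x$ and $\sin x\le x$, where you use $\sin\tl\le N\sin(\tl/2)$ --- both work), and read off the distance bound as the sagitta of a chord of length $l_0$ on the circle of radius $R_0$. Your reframing of $\kappa_N=2\sin\alpha/l_0$ with $\alpha=\arcsin(\sin\tl/N)$, which lets you reuse Theorem~\ref{thm:curvzero} verbatim with a tightened cone aperture, is a clean packaging of what the paper does implicitly, and the propagation issue you flag at the end (in particular, that the \emph{backward} chord at $p_{i-1}$ also makes angle at most $\alpha$ with $\tau'(s_{i-1})$, which needs the tangent--chord bound at the far endpoint of a bounded-curvature sub-arc) is glossed over in the paper's own argument as well, which only computes the extremal constant-curvature case.
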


\begin{proof}

  Evaluating $\ti$ at $R_0$ gives
\begin{equation}
  \small
\begin{aligned}
  \ti(R_0,l_0) = \ti(N) = \arctan{\left(\dfrac{\sstl}{\sqrt{N^2 - \stl}}\right)}
        %\td &= \arccos{\left(\dfrac{N^2-\stl}{N^2}\right)}
    %\ti(R_0,l_i) &= \arctan{\left(\dfrac{l_i}{\sqrt{4R_0^2-l_i^2}}\right)}\\
  %R_0 &= \dfrac{\sum\limits_{i=0}^{N-1} l_i}{2\sin{\tl}}
\end{aligned}
\end{equation}
for $N>1$. By induction on $N$, we get for $N=2$

%Due to premise \pretwo, we know that $\tt+\td(R_0) \geq \tt+\td(R)$ for
%$R\geq R_0$, and so we can concentrate on the maximum curvature case $R_0$. Due
%to premise \preone, we now only have to prove that $\tt+\td \leq \tl$.

\begin{equation}
  \small
\begin{aligned}
\ti(2) &= 
        \arctan{\left(\dfrac{\sstl}{\sqrt{4 - \stl}}\right)}\leq
\arctan{\left(\dfrac{\sstl}{2}\right)}\\&\leq
\dfrac{\sstl}{2} \leq
\dfrac{\tl}{2} \\
%\td(2) &= 
%%\arccos{\left(1-\dfrac{\stl}{4}\right)}=
%\arccos{\left(1-\dfrac{\stl}{4}\right)}=
%2\arctan{\left(\dfrac{2\sstl}{8-\stl}\right)}\\&\leq
%\dfrac{4\sstl}{8-\stl}\leq
%\dfrac{4\sstl}{8}=
%\dfrac{\sstl}{2}\leq
%\dfrac{\tl}{2}
\end{aligned}
\end{equation}
whereby we relied on the fact that for $x>0$ we have $\arctan(x) \leq x$ since $\arctan'(x) =
\frac{1}{1+x^2} \leq 1$, for $x>0$ we have $\sin(x) \leq x$ since
$\sin'(x)=\cos(x) \leq 1$.
%and that
%$\arccos(x)=2\arctan{\left(\dfrac{\sqrt{1-x^2}}{1+x}\right)}$.

We now observe that
\begin{equation}
  \small
\begin{aligned}
\ti(N) &= 
        \arctan{\left(\dfrac{\sstl}{\sqrt{N^2 - \stl}}\right)}\geq 
\arctan{\left(\dfrac{\sstl}{N}\right)}\\&\geq
        \arctan{\left(\dfrac{\sstl}{\sqrt{(N+1)^2-\stl}} \right)} = \ti(N+1)\\
%\td(N) &= 
%\arccos{\left(\dfrac{N^2-\stl}{N^2}\right)}\geq 
%\arccos{\left(1-\dfrac{\stl}{N^2}\right)}\\&\geq 
%\arccos{\left(1-\dfrac{\stl}{(N+1)^2}\right)}=
%\td(N+1)
\end{aligned}
\end{equation}
which shows that $\ti(N) + \tii(N) \geq \ti(N+1) + \tii(N+1)$.
Therefore $\tl \geq \ti(2) + \tii(2) \geq \cdots \geq \ti(N) + \tii(N)$
for $N>1$ as required.

%which shows that $\tt(N) + \td(N) \geq \tt(N+1) + \td(N+1)$.
%Therefore we have $\tl \geq \tt(2) + \td(2) \geq \cdots \geq \tt(N) + \td(N)$
%for $N>1$ as required.

Given the constant maximum curvature $\kappa_N$, the points $L_i,L_{i-1}$ and
$O_c$ are creating an isosceles triangle. See Fig.  \ref{fig:nll} for
clarification. The maximum distance $d_i$ of the line $(L_iL_{i-1})$ and the
circle can therefore be obtained by subtracting the height of the isosceles
triangle from the radius of the circle as $\dkn = R_0 - \sqrt{R_0^2 -
\dfrac{l_i^2}{4}}$. 

%Taking the maximum over all links we obtain
%$\dkn=\underset{i \in {0,N-1}}{\max}\left(R_0 -
%\sqrt{R_0^2+\dfrac{l_i^2}{4}}\right)$.

\end{proof}

The swept volume of the root link $L_0$ will be the Minkowski sum with the
path $\tau$, i.e. $\SV_{L_0}(\tau) = L_0 \oplus \tau$.
However, the sublinks will not lie inside of this swept volume at the starting
configuration. To circumvent this problem we imagine the path $\tau$ being extended
along the positions of the sublinks at the start configuration.

We call this extended part $\tau_I \in \Fkn$. $\tau_I$ can be obtained
by computing a path which starts at $s=0$ at the position of the sublink
$L_N$ at the start configuration, and follows each sublink $L_i$ until it
reaches $L_0$ at instance $s=1$, such that $\tau_I(1)=\tau(0)$ and
$\dtau_I(1)=\dtau(0)$. 

We claim that along $\tau_I \circ \tau_{\kappa_N}$ we can find at least one configuration
of the sublinks, such that the volume of the sublinks is inside the swept volume
of the root link.

\begin{theorem}

Let $\tau = \tau_I \circ \tau_{\kappa_N} \in \Fkn$. If the root link $L_0$ moves
along $\tau_{\kappa_N}$, then for
$\delta_i \leq \delta_0$ and $\dkn \leq \delta_0$, we have that there exists at
least one configuration $\stheta_1(s),\cdots,\stheta_N(s)$ for any $s \in [0,1]$ such
that the volume of the serial kinematic chain $\RobotLL$ is a subset of $\tau \oplus
L_0$ \label{thm:llreduce}

\end{theorem}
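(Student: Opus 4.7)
The plan is to combine Lemma \ref{thm:conftau} with the two size hypotheses $\delta_i \leq \delta_0$ and $\dkn \leq \delta_0$. Lemma \ref{thm:conftau} already produces, for a curve of bounded curvature $\kappa_N$, joint configurations that pin every sublink joint $L_i$ onto the curve and bounds the chord-to-curve deviation by $\dkn$. So the work left is essentially geometric: show that both the link disks (centered on the curve) and the connecting chords (at most $\dkn$ off the curve) lie inside the Minkowski tube $\tau \oplus L_0$.

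First, I would fix an arbitrary parameter $s \in [0,1]$ and place the root link at $\tau_{\kappa_N}(s)$. The construction of $\tau_I$ is what enables the argument near the boundary: because $\tau_I$ was built by threading through the starting positions of $L_1,\ldots,L_N$ and joining smoothly to $\tau_{\kappa_N}$ (with $\tau_I(1)=\tau_{\kappa_N}(0)$ and matching tangents), the concatenated curve $\tau=\tau_I \circ \tau_{\kappa_N}$ has enough curve behind $\tau(s)$, even for $s$ close to $0$, to support $N$ backward steps of Euclidean length $l_0$. Invoking Lemma \ref{thm:conftau} on this backward portion of $\tau$ yields joint angles $\theta_1(s),\ldots,\theta_N(s)$ respecting the joint limits $\tl$, with each sublink joint $L_i$ sitting on $\tau$.

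Second, I would verify the two containments. For the link disks: each $L_i$ is a disk of radius $\delta_i$ centered at a point of $\tau$, so by the definition $\tau \oplus L_0 = \bigcup_{u} D_{\delta_0}(\tau(u))$ and the hypothesis $\delta_i \leq \delta_0$, we immediately obtain $L_i \subset \tau \oplus L_0$. For the connecting segments $(L_{i-1}L_i)$: Lemma \ref{thm:conftau} asserts that the supremum of the distance from the chord to $\tau$ equals $\dkn = R_0 - \sqrt{R_0^2 - l_0^2/4}$, so by the hypothesis $\dkn \leq \delta_0$ every point on the chord is within distance $\delta_0$ of $\tau$ and thus lies in $\tau \oplus L_0$. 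The union of disks and chords is exactly the swept volume of the chain at parameter $s$, so the whole chain is contained in $\tau \oplus L_0$.

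The main obstacle I anticipate is the boundary behavior near $s=0$: the argument quietly assumes that ``looking backward'' from $\tau(s)$ for a total Euclidean length up to $Nl_0$ never runs off the domain. This is the entire reason $\tau_I$ was introduced, and I would be careful to make this explicit, including that $\tau_I$ itself inherits the curvature bound $\kappa_N$ (so that Lemma \ref{thm:conftau} applies uniformly along $\tau_I \circ \tau_{\kappa_N}$). A secondary subtlety is that Lemma \ref{thm:conftau} is stated for a path realizing the extremal curvature $\kappa_N$; for a general $\tau \in \Fkn$ with pointwise curvature $\kappa(\tau(s)) \leq \kappa_N$, the chord deviation is no larger than $\dkn$ (since smaller curvature gives a larger osculating radius and hence smaller sagitta), so the containment bound only gets tighter, preserving the conclusion.
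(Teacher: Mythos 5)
Your proof follows essentially the same route as the paper's: invoke Lemma \ref{thm:conftau} to place each sublink center on $\tau$, then use $\delta_i \leq \delta_0$ to contain the link disks and $\dkn \leq \delta_0$ to contain the connecting chords inside $\tau \oplus L_0$. Your explicit handling of the boundary behavior near $s=0$ (the role of $\tau_I$) and of paths whose curvature is strictly below the bound $\kappa_N$ is in fact more careful than the paper's own brief argument, which leaves both points implicit.
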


\begin{proof}

By Theorem \ref{thm:conftau} $\stheta_1(s),\cdots,\stheta_N(s)$ can be chosen such
that the center of every $L_i(\stheta_1(s),\cdots,\stheta_i(s))$ is located on
$\tau$. Then there exists an instance $s_i \in [0,1]$ such that
$L_i(\stheta_1(s),\cdots,\stheta_i(s)) = \tau(s_i)$.
$L_i(\stheta_1(s),\cdots,\stheta_i(s))$ is a subset of $\tau \oplus L_0$ if
$\delta_0 \geq \delta_i$.  By Lemma \ref{thm:conftau}, the maximum distance of
the serial kinematic chain at $\stheta_1(s),\cdots,\stheta_N(s)$ to $\tau$ is
given by $\dkn$. If $\delta_0 \geq \dkn$, then any point on the serial kinematic
chain curve will be inside $\tau \oplus L_0$.  
\end{proof}

\end{appendices}
\end{document}